    \newtheorem{definition}{Definition}%
    \newtheorem{lemma}{Lemma}
\providecommand\JW[1]{$\spadesuit$\footnote{JW: #1}}
\DeclareMathOperator{\sign}{sign}
\providecommand{\customgenericname}{}
\newcommand{\newcustomtheorem}[2]{%
  \newenvironment{#1}[1]
  {%
   \renewcommand\customgenericname{#2}%
   \renewcommand\theinnercustomgeneric{##1}%
   \innercustomgeneric
  }
  {\endinnercustomgeneric}
}
\newtheorem*{problem*}{Problem}
\newtheorem*{definition*}{Definition}
\providecommand{\Fig}{Figure~}
\providecommand{\Sec}{Section~}
\providecommand{\Def}{Definition~}
\providecommand{\Thm}{Theorem~}
\providecommand{\Pro}{Proposition~}
\providecommand{\Lem}{Lemma~}
\providecommand{\eqnref}			[1]		{Equation~(\ref{#1})}
\providecommand{\tabref}			[1]		{Table~\ref{#1}}
\providecommand{\figref}			[1]		{Figure~\ref{#1}}
\providecommand{\secref}			[1]		{Section~\ref{#1}}
\providecommand{\eqa}				[1]		{\begin{align}#1\end{align}}
\providecommand{\eqas}			[1]		{\begin{align*}#1\end{align*}}
\providecommand{\mat}				[2]		{\left[\begin{array}{#1} #2 \end{array}\right]}
\providecommand{\ie}{\emph{i.e.,}~}
\providecommand{\eg}{\emph{e.g.,}~}
\providecommand{\realnum}					{\mathbb{R}}
\providecommand{\integernum}				{\mathbb{Z}}
\providecommand{\naturalnum}				{\mathbb{N}}
\renewcommand{\(}						{\left(}
\renewcommand{\)}						{\right)}
\renewcommand{\[}						{\left[}
\renewcommand{\]}						{\right]}
\providecommand{\Exp}{\mathbbm{E}}
\def\e{\bm{e}}
\def\h{\bm{h}}
\def\m{\bm{m}}
\def\w{\bm{w}}
\def\x{\bm{x}}
\def\z{\bm{z}}
\def\N{\bm{N}}
\def\W{\bm{W}}
\def\RH{\hat{\bm{R}}}
\def\hh{\hat{{h}}}
\def\Dh{\hat{{D}}}
\def\Rh{\hat{{R}}}
\def\As{\mathcal{{A}}}
\def\Bs{\mathcal{{B}}}
\def\Dsh{\hat{\mathcal{{D}}}}
\def\Fs{\mathcal{{F}}}
\def\Gs{\mathcal{{G}}}
\def\Hs{\mathcal{{H}}}
\def\Ls{\mathcal{{L}}}
\def\Ms{\mathcal{{M}}}
\def\Ps{\mathcal{{P}}}
\def\Qs{\mathcal{{Q}}}
\def\Ss{\mathcal{{S}}}
\def\Xs{\mathcal{{X}}}
\def\Ys{\mathcal{{Y}}}
\newcommand{\manuallabel}[2]{\def\@currentlabel{#2}\label{#1}}
\theoremstyle{acmplain}
\newtheorem{mytheorem}{Theorem}
\newtheorem{myproposition}{Proposition}
\theoremstyle{acmdefinition}
\newtheorem{mydefinition}{Definition}
\newcolumntype{M}[1]{>{\centering\arraybackslash}m{#1}}
\newcommand{\lzo}{{\ell_{01}}}
\newcommand{\zo}{{$0$-$1$~}}
\newcommand{\Dhp}{\Dh^+}
\newcommand{\Dhn}{\Dh^-}
\newcommand{\Dha}{\Dh_\alpha}
\newcommand{\Dhap}{\Dh_\alpha^+}
\newcommand{\Dhan}{\Dh_\alpha^-}
\newcommand{\Rhl}{\Rh_\ell}
\newcommand{\Rb}{\bar{R}}
\newcommand{\hha}{\hh_\alpha}
\newcommand{\tma}{BFA}
\newcommand{\Romnum}[1]{\uppercase\expandafter{\romannumeral#1}}
\newcommand\qcomment[1]{ }
\newcommand{\cmt}[1]{{\color{purple}#1}}
\newlength{\hatchspread}
\newlength{\hatchthickness}
\newlength{\hatchshift}
\newcommand{\hatchcolor}{}
\tikzset{hatchspread/.code={\setlength{\hatchspread}{#1}},
         hatchthickness/.code={\setlength{\hatchthickness}{#1}},
         hatchshift/.code={\setlength{\hatchshift}{#1}},
         hatchcolor/.code={\renewcommand{\hatchcolor}{#1}}}
\tikzset{hatchspread=3pt,
         hatchthickness=0.4pt,
         hatchshift=0pt,
         hatchcolor=black}
\g@addto@macro{\UrlBreaks}{\UrlOrds}
\begin{document}
\title[Resilient Linear Classification]{Resilient Linear Classification: An Approach to Deal with Attacks on Training Data}

\author{Sangdon Park}
\affiliation{%
  \department{Department of Computer \& Information Science}
  \institution{University of Pennsylvania}
}
\email{sangdonp@cis.upenn.edu}

\author{James Weimer}
\affiliation{%
  \department{Department of Computer \& Information Science}
  \institution{University of Pennsylvania}
}
\email{weimerj@cis.upenn.edu}

\author{Insup Lee}
\affiliation{%
  \department{Department of Computer \& Information Science}
  \institution{University of Pennsylvania}
}
\email{lee@cis.upenn.edu}

%

%
%
%
%



\begin{abstract}
Data-driven techniques are used in cyber-physical systems (CPS) for controlling autonomous vehicles, handling demand responses for energy management, and modeling human physiology for medical devices.
These data-driven techniques extract models from training data, where their performance is often analyzed with respect to random errors in the training data.
However, if the training data is maliciously altered by attackers, the effect of these attacks on the learning algorithms underpinning data-driven CPS have yet to be considered. In this paper, we analyze the resilience of classification algorithms to training data attacks. 
Specifically, a generic metric is proposed that is tailored to measure resilience of classification algorithms with respect to worst-case tampering of the training data. 
Using the metric, we show that traditional linear classification algorithms are resilient under restricted conditions.
To overcome these limitations, we propose a linear classification algorithm with a majority constraint and prove that it is strictly more resilient than the traditional algorithms.
Evaluations on both synthetic data and a real-world retrospective arrhythmia medical case-study show that the traditional algorithms are vulnerable to tampered training data, whereas the proposed algorithm is more resilient (as measured by worst-case tampering).

\end{abstract}


%
%
\begin{CCSXML}
<ccs2012>
<concept>
<concept_id>10010147.10010257.10010258.10010259.10010263</concept_id>
<concept_desc>Computing methodologies~Supervised learning by classification</concept_desc>
<concept_significance>500</concept_significance>
</concept>
<concept>
<concept_id>10010147.10010257.10010282.10010283</concept_id>
<concept_desc>Computing methodologies~Batch learning</concept_desc>
<concept_significance>500</concept_significance>
</concept>
<concept>
<concept_id>10010520.10010553</concept_id>
<concept_desc>Computer systems organization~Embedded and cyber-physical systems</concept_desc>
<concept_significance>500</concept_significance>
</concept>
<concept>
<concept_id>10002978.10003022.10003028</concept_id>
<concept_desc>Security and privacy~Domain-specific security and privacy architectures</concept_desc>
<concept_significance>300</concept_significance>
</concept>
</ccs2012>
\end{CCSXML}

\ccsdesc[500]{Computing methodologies~Supervised learning by classification}
\ccsdesc[500]{Computing methodologies~Batch learning}
\ccsdesc[500]{Computer systems organization~Embedded and cyber-physical systems}
\ccsdesc[300]{Security and privacy~Domain-specific security and privacy architectures}
%


\keywords{cyber-physical systems, linear classification, training data attacks}

\maketitle




\section{Introduction}
\label{sec:intro}


The penetration of data-driven techniques (\eg machine learning) to monitor and control a broad range of cyber-phy\-sical systems has sharply increased. 
Autonomous cars rely on visual object detectors learned from image data for recognizing objects\cite{chen2015deepdriving, hadsell2009learning,krizhevsky2012imagenet}. 
Building demand response can be effectively handled by data-driven modeling and prediction of the electric usage of buildings \cite{behl_ICCPS16}. 
Smart insulin pumps can adapt to type 1 diabetic patients using data-driven modeling of user-specific eating and pump-using behavior \cite{chen2015data}.
While data-driven CPS offer remarkable capabilities for enhanced performance, they also introduce unprecedented security vulnerabilities with the risk of malicious attacks having catastrophic consequences.  Specifically, the training data used for learning (be it online or offline), is vulnerable to malicious tampering that can result in data-driven CPS reacting incorrectly to safety-critical events. 

The training data for data-driven CPS can be tampered in several ways, depending on the application.
In modern automobiles, multiple vulnerabilities have been demonstrated where hackers obtain full control of automobiles by eavesdropping a Controller Area Network (CAN) and injecting CAN messages \cite{checkoway2011comprehensive,koscher2010experimental}, which provides possibilities to inject malicious data being used for online learning algorithms \cite{chen2015deepdriving,hadsell2009learning}.
Furthermore, automobiles and robots, which rely on sensor inputs from global positioning system (GPS), inertial measurement unit (IMU) or wheel speed sensors, can be susceptible on spoofing attacks \cite{humphreys2008assessing,shoukry2013non,son2015rocking}. This means attackers can tamper training data collected from sensors. 
Hacking incidents on medical devices and hospitals \cite{medstarhacked2016,jjinsulinpumpvulnerable,Methodisthospitalhacekd2016} suggest attackers can tamper both device-level and data center-level training data.
Moreover, attackers with knowledge of the underlying machine learning techniques -- \emph{e.g.,} support vector machines (SVMs), principal component analysis, logistic regression, artificial neural network, and (ensemble) decision trees -- can strategically alter the training data to minimize the accuracy of the algorithms \cite{biggio2013evasion,biggio2012poisoning,goodfellow2014explaining,kantchelian2015evasion,mei2015using,Szegedy2014}, to maliciously affect the performance of data-driven CPS \cite{chen2015deepdriving,chen2015data,hadsell2009learning,behl_ICCPS16,paridari2016cyber,seo2014predicting,valenzuela2013real}.

Capabilities provided by traditional 
\emph{cyber defenses} (\emph{e.g.,} communication channel encryption and authentication), 
\emph{fault tolerant techniques} (\emph{e.g.,} data sanitization \cite{cretu2008casting}, robust loss functions \cite{wu2012robust,zhang2004solving}, and robust learning~\cite{chen2013robust, feng2014robust}), and 
\emph{adversarial learning} \cite{bruckner2011stackelberg,dalvi2004adversarial,feige2015learning} are necessary to secure data-driven CPS, but they are not sufficient. 
Specifically, the \emph{cyber defenses} are insufficient for defending against cyber-phy\-sical attacks (\eg GPS spoofing \cite{humphreys2008assessing}) where a sensing environment can be maliciously altered such that correctly functioning sensors and systems can act erroneously. 
These challenges are compounded in dynamic applications (e.g., autonomous driving and closed-loop physiological control) wh\-ere accurate physical models, commonly required for \emph{fault tolerant systems}, are challenging to obtain. 
Moreover, \emph{adversarial learning} literature (\eg \cite{bruckner2011stackelberg,dalvi2004adversarial,feige2015learning}) usually assumes a known attacker behavior and/or goal -- which is likely unknown in complex CPS applications.
Due to the shortcomings of traditional approaches for securing the training data of data-driven CPS, it is necessary to consider techniques for \emph{resilient machine learning} that can defend against cyber-physical attacks and make minimal assumptions on environments and attackers.

Towards the ultimate goal of attack-resilient machine learning, we propose a resilience metric for the analysis and design of learning algorithms under cyber-physical attacks. 
The metric aims to quantify the resilience of learning algorithms for analysis, which in turn contributes to designing resilient learning algorithms.
Specifically, this work considers binary linear classification algorithms in the presence of maliciously tampered training data. 
Binary linear classification represents a basic building block for more complex classification approaches, such as neural network, decision trees, and boosting; thus, developing attack resilient linear classifiers can lead to more advance resilient machine learning algorithms.  
To analyze binary classifiers in the presence of training data attacks, we introduce a generic measure of resilience for classification in terms of worst-case errors. 
Based on the resilience metric, traditional linear classification algorithms are evaluated. First, we prove the maximal resilience of any linear classification algorithm, which provides an upper bound of a resilience condition that can be achievable. Then, we prove that convex loss linear classification algorithms, such as SVMs, and \zo loss linear classification algorithm can not achieve maximal resilience. Based on these results, we introduce a majority \zo loss linear classification algorithm that is strictly more resilient than the traditional approaches and achieves the maximal resilience condition.

Finally, we evaluate the different classification algorithms, in the presence of attacks, on a synthetic dataset and a medical case-study, introduced in \cite{guvenir1997supervised}, to design a detector for arrhythmia (\ie irregular heart beat). The evaluation on synthetic data illustrates conditions when the different algorithms are (and are not) resilient, while the arrhythmia dataset serves to illustrate resilient binary classification in a real-world data-driven medical CPS (described in Section~\ref{sec:exp}).

In summary, the contributions of this work include: 
(i) introducing, to our knowledge, the first assessment metric for analyzing binary classifier resilience; 
(ii) providing an analysis of the resilience of traditional binary classification techniques illustrating their shortcomings;
(iii) describing a resilient classification approach that provides maximal resilience;
(iv) evaluating in a retrospective real-world arrhythmia classification case-study.

The following section describes the work most closely related to the resilient classification problem considered herein. 
In \Sec \ref{sec:setup}, we define attacker capabilities and a resilience metric.
In \Sec \ref{sec:problem}, the resilient classification problem is formally defined while an analysis of traditional linear classification algorithms is provided in \Sec \ref{sec:feasibility}.
In \Sec \ref{sec:proposed}, a new resilient linear classification algorithm is proposed which achieves maximal resilience for the attacker's capabilities considered.
Section \ref{sec:exp} illustrates the theoretical results using case studies on synthetic and medical data.  The final section provides conclusions with discussion about countermeasures and future work.

\qcomment{

\begin{figure}[t!]
\centering
	\includegraphics[width=0.90\linewidth]{figs/fig1_ex}
\caption{Vulnerable classification algorithms under malicious manipulation in training data. }
\label{fig:health_insurance_example}
\end{figure}


\JW{Start with CPS security, then move to machine learning.}
Machine learning has improved remarkably for data-driven decision-making in cyber-physical systems (CPS).
Autonomous cars rely on visual object detectors learned from image data for recognizing objects near cars \cite{chen2015deepdriving, hadsell2009learning},
building demand response can be effectively handled by data-driven modeling and prediction of the electric usage of buildings \cite{behl_ICCPS16},
and
telemetry based insurance services, one example of healthcare analytics, offer proper insurance programs for users by analyzing user activity data \cite{JohnHancock2015}.
However, machine learning techniques, as incorporated into CPS design, introduce new attack surfaces and vulnerabilities. 
Specifically, training data, an input of learning, is a new attack surface of CPS, and attacks related to tampering the training data lead CPS to make incorrect or even malicious decisions.
Moreover, for thwarting tampering attacks on training data, traditional data sanitization may be unsuitable due to the real-time nature of CPS and the high-dimensional, large training data.

For example, in the case of telemetry based health insurance (\figref{health_insurance_example}), an insurer profile customers behaviors using user-end telemetry devices (\eg Apple Watch or Fitbit activity tracker) for accessing their activity levels. This large amount of profile data is further processed using machine learning algorithms (\eg classification) to decide to offer users discounts on premiums. But, if unhealthy users are eager to obtain beneficial offers to reduce their insurance premiums, they are willing to exploit their telemetry devices by fabricating the devices to generate maliciously manipulated data. This illegal behavior provides the manipulated training data for learning algorithms, leading to offer premium discounts for the unhealthy users.

Though the security on learning systems of CPS has been an afterthought, secure CPS has long been considered.
Resilient state estimators \cite{fawzi2014secure,pajic2014robustness} provide the resilience in controlling CPS under sensors or actuators attacks. 
In mobility-as-a-Service systems (\eg ride-sharing services), it is demonstrated that a fraction of cars are maliciously called by fake reservation for denial-of-service \cite{yuan2016zubers}.
Teleoperated robotic systems can be unavailable to a surgeon by denial-of-service attacks on communication channels \cite{bonaci2015experimental}.

In this paper, we analyze the vulnerability of training data for a classification algorithm. 
If attackers can tamper the training data, they can control a trained classifier, the output of a classification algorithm, making incorrect or malicious decisions. 
To systematically analyze the performance of the algorithm over all possible tampered training data, the performance measure and the process to generate tampered training data should be formally described. The performance measure used in this paper is called \emph{resilience} that measures, given for all tampered training data, the worst-case performance of a classification algorithm in terms of traditional performance metrics, such as true-negative rate and false-positive rate. To define the process to generate tampered training data, it is straightforward to define the ability of an attacker who tampers training data given intact one. We call this as a \emph{threat model}.
Note that in cryptography, defining a security goal, in our case defining resilience, and a threat model are crucial for systematical analysis on ``secureness'' of encryption schemes \cite{katz2014introduction}. We adopt this philosophy to analyze classification algorithms.

Given a resilience definition and a threat model, classification algorithms can be evaluated whether they are resilient under tampered training data generated from a specific threat model. 
First, we prove that under a ``strong'' threat model, no linear classification algorithm is resilient.
Then, the traditional linear classification algorithms based on convex loss and \zo loss are proven to be resilient under a weaker threat model. To complement this limitation, we propose a novel resilient linear classification algorithm, called majority \zo loss linear classification algorithm. It is proved to be strictly more resilient than traditional ones and also the most resilient algorithm under all threat models except for the strong threat model.
Furthermore, the worst-case robustness bound of the proposed algorithm is provided to analyze the behavior of the algorithm in terms of a threat model.
In this paper, finding a resilient classification algorithm and analyizing the worst-case robustness bound of the algorithm is called as \emph{resilient classification problem}.



\textbf{Attacks on Machine Learning}~
Attacks on tampering training data and test data for learning algorithms, such as support vector machines (SVMs), principal component analysis (PCA), logistic regression (LR), artificial neural network (ANN), and (ensemble) decision tree (DT), are demonstrated in machine learning literature \cite{biggio2012poisoning,mei2015using,Szegedy2014,goodfellow2014explaining,biggio2013evasion,kantchelian2015evasion}, which can be also applied to CPS that embed learning algorithms \cite{paridari2016cyber,seo2014predicting,valenzuela2013real,chen2015deepdriving,behl_ICCPS16}.
Note that this paper focuses on tampering training data, but test data tampering is also considered as a relevant approach.

Networked building energy management system can be vulnerable on cyber attacks. This can be defended using SVMs or PCA based intrusion detection system by collecting system behavior data as training data \cite{valenzuela2013real,paridari2016cyber}.
However, if the system behavior data can be manipulated by an attacker, the attacker can tamper the data to minimize the performance of the detector \cite{biggio2012poisoning,mei2015using,rubinstein2009antidote}.
In autonomous car, CPU usage patterns can be predicted by learning the patterns using LR \cite{seo2014predicting} for a data-oriented, dynamical scheduling. 
The performance of LR can be degraded by attackers \cite{mei2015using} when the attackers can inject manipulated training data.

Furthermore, given a trained classifier, tampering data in test time, called evasion attacks in literature, also leads incorrect decision-making with SVMs \cite{biggio2013evasion}, ANN \cite{Szegedy2014,goodfellow2014explaining}, and DT \cite{kantchelian2015evasion}. They are also used with CPS, such as intrusion detection on energy management systems \cite{paridari2016cyber}, visual recognition for autonomous cars \cite{chen2015deepdriving}, and managing building demand response \cite{behl_ICCPS16}, respectively. 

\textbf{Countermeasures}~
To handle errors/attacks in training data, researchers propose a robust loss function \cite{zhang2004solving}, robust algorithms with restrictive assumptions on learning algorithms or attackers \cite{feng2014robust}, learning algorithms that consider attackers in the learning loop \cite{dalvi2004adversarial}, or sanitization techniques on training data \cite{cretu2008casting}.
A classical way to attenuate errors, also called outliers in literatures, is designing a robust loss function, such as hinge loss or modified Huber loss \cite{zhang2004solving} for classification. However, they are still sensitive in the presence of outliers in training data \cite{wu2012robust}. 
Robust logistic regression algorithm, proposed by \cite{feng2014robust}, assumes the learning algorithm knows the upper bound of the number of attacked feature vectors in training data and the intact training data is drawn from a sub-gaussian distribution.
When design learning algorithms, attackers can be included in the algorithm designs as a game between learning algorithms and attackers \cite{dalvi2004adversarial}. 
Also, algorithms can assume a finite number of behaviors of attackers \cite{feige2015learning}. But, these approaches assume attacker's goal and behavior are known, which is impractical.
In learning theory, learning in the presence of errors in training data is theoretically analyzed. When labels are corrupted by errors, they have nearly no influence on achieving the goal of learning algorithms \cite{angluin1988learning, natarajan2013learning}. But, if feature vectors are maliciously manipulated, learning algorithms cannot achieve their goals \cite{kearns1993learning}, meaning cannot defeat manipulation attacks on training data in theory.

The key limitation of above mentioned approaches is no analysis on how resilient their algorithms are under a resilience definition and threat model. 
Furthermore, they have not considered CPS, which are more practical environments where the manipulation attacks of training data happens.

\textbf{Contributions}~
The contributions of this paper is four fold. 
First, we formally defined resilient classification problem that specifies the resilience definition and the threat model of an attacker. Then, the traditional linear classification algorithms are analyzed to be resilient only under restricted conditions. Especially, SVMs, a standard linear classification algorithm, is vulnerable so that the performance of the classifier degraded by only manipulating one feature vector. Third, we propose a novel resilient linear classification algorithm that is strictly more resilient than traditional algorithms. Finally, an evaluation on synthetic and medical data experimentally shows traditional algorithms are vulnerable to maliciously manipulated training data but the proposed on is resilient. 

\textbf{Summary}~
The following section describes preliminaries and closely related works of resilient classification problem. 
In \Sec \ref{sec:problem}, the resilient classification problem is formally defined with related definitions on resilience and a threat model.
Based on the definitions, traditional linear classification algorithms are analyzed to be resilient in \Sec \ref{sec:feasibility}.
Next, the novel resilient linear classification algorithms is proposed in \Sec \ref{sec:proposed} with a resilience proof and robustness bound proof. 
In \Sec \ref{sec:rlc_errors}, resilient classification problem is reformulated in the presence of errors. 
Finally, in \Sec \ref{sec:exp}, our theoretical results are verified with case studies on synthetic and medical data.

}


\vspace{-1ex}
\section{Related Work} \label{sec:related}

This section presents the related works for CPS security (\Sec \ref{sec:cps_security}) and traditional error/attack models in the machine learning literature (\Sec \ref{sec:related_PAC_errors}).

\vspace{-1.5ex}
\subsection{CPS security} \label{sec:cps_security}
\vspace{-0.5ex}
Though the security of learning systems for data-driven CPS has been an afterthought, the security of CPS has seen much effort in the past decade. 
A mathematical framework considering attacks on CPS is proposed in \cite{cardenas2008research,pasqualetti2013attack}.
The necessary and sufficient conditions on CPS with a failure detector such that a stealthy attacker can destabilize the system are provided in~ \cite{mo2010false}.
State estimation for an electric power system is analyzed in~\cite{teixeira2010cyber} assuming attackers know a partial model of the true system.
Resilient state estimators for CPS that tolerate a bounded number of sensors and/or actuators attacks are considered in \cite{fawzi2014secure,pajic2014robustness}.  
In mobility-as-a-Service systems (\eg ride-sharing services), it has been demonstrated that a fraction of cars are maliciously called by fake reservation for denial-of-service \cite{yuan2016zubers}. 
Surgical tele-operated robotic systems can be affected by denial-of-service attacks on communication channels \cite{bonaci2015experimental}. 
Energy management systems, especially when connected to building networks, are vulnerable to cyber attacks that impact on the systems operation. This vulnerability can be attenuated by applying resilient policy when attacks are detected \cite{paridari2016cyber}. While there has been much recent work on CPS security, these approaches are (in general) not directly applicable to data-driven CPS.


%

\vspace{-1.5ex}
\subsection{Learning with Errors} \label{sec:related_PAC_errors}
\vspace{-0.5ex}

In this subsection, we review the literature on learning in the presence of training data errors most closely related to our work, where a more complete survey of the entire literature can be found in \cite{goldman1995can,natarajan2013learning}. The error models can be categorized as either {label errors} or {feature errors} in \tabref{tab:errors}, according to their classical definitions \cite{goldman1995can,natarajan2013learning}. 
%
 %
Under each error model, the performance of a learning algorithm is analyzed against whether it achieves a desired classifier.

\vspace{-1ex}
\begin{table}[htb!]
\centering
\begin{tabular}{c||cc}
\toprule
\multirow{2}{*}{label errors} 
& class-independent (CICE) & \cite{angluin1988learning} \\ 
& class-dependent (CDCE) & \cite{natarajan2013learning} \\ \midrule
\multirow{3}{*}{feature errors} 
& uniform random (URAE) & \cite{sloan1988types} \\ 
& product random (PRAE) & \cite{goldman1995can} \\ 
& malicious errors (ME) & \cite{kearns1993learning} \\ 
\bottomrule
\end{tabular}
\caption{Taxonomy of training data errors in the literature.}
\label{tab:errors}
\end{table}
\vspace{-5ex}

When labels in training data are corrupted, the training data is said to have {label errors}, which can be divided into two subtypes: {class-independent classification errors} (CICE) \cite{angluin1988learning} and {class-dependent classification errors} (CDCE) \cite{natarajan2013learning}. 
The class-independent classification error model assumes the error probability of positive and negative labels are same while the class-dependent classification error model allows the different error probability for positive and negative labels. 

When features in the training data are corrupted, the training data is said to have {feature errors}, which can be divided into three subtypes: {uniform random attribute errors} \cite{sloan1988types}, {product random attribute errors} \cite{goldman1995can}, and {malicious errors} \cite{kearns1993learning}. 
Both the uniform random attribute error (URAE) and the product random attribute error (PRAE) models assume errors on features (\ie columns of the feature matrix), where URAE assumes the same error probability for all features and PRAE allows for variable error probabilities.  From a CPS perspective, attacks on individual features require that each column of the feature matrix corresponds to a single attack surface (\eg a single sensor) -- which restricts the use of multiple sensors in a single feature, as common in data-driven CPS~\cite{chen2015deepdriving, hadsell2009learning}. 
Different from URAE and PRAE, the malicious error (ME) model assumes arbitrary attacks on feature vectors (\ie rows of the feature matrix). However, the ME model assumes the probabilities of attacking the feature vectors corresponding to positive and negative labels are the same -- a condition which may not be satisfied by savvy attackers. In contrast to this, our error (or attack) model assumes the probabilities can be different. 

\qcomment{

\section{Preliminaries and Related Works} \label{sec:related}

This section introduces preliminaries and surveys the related work.  In the following subsection, a traditional classification problem is described, while \Sec \ref{sec:related_PAC_errors} presents works in the literature concerning learning classifiers in the presence of errors. Notationally, we write 
$\realnum$, $\realnum_0^+$ and $\naturalnum_0$ to denote the set of real numbers, nonnegative real numbers and nonnegative integers, respectively,
$\bm{1}$ to stands for a vector filled with ones with a proper size,
$| \cdot |$ to denote the cardinality (\ie number of elements) of a finite set, 
$\sign : \realnum \rightarrow \{+1,-1\}$ to denote the sign function, 
$\mathbbm{1}\{\cdot\}$ to denote the indicator function that maps true and false to $1$ and $0$, and
$\| \cdot \|_1$ and $\| \cdot \|_\infty$ to denotes the $\ell_1$ and maximum norm that maps a vector to a real number, respectively.

\subsection{Binary Classification Problem} \label{sec:classification_problem}


Consider \emph{training data}, $\Dh\! =\! \{(\x_i, y_i)\}_{i=1}^N\! \subset\! \Xs \times \Ys$, where 
$\Xs \subseteq \realnum^p$ corresponds to a set of \emph{feature vectors} (or \emph{attribute}),
$\Ys = \{-1, +1\}$ denotes the set of \emph{labels} (or \emph{classes}), 
each elements of $\x_i$ is called a feature, and
$N$ is the number of data (\ie $\Dh \in \Dsh = \left\{ D \subset \Xs \times \Ys \;\middle|\; |D| = N \right\}$).
In a traditional (binary) classification problem, such as \cite{bishop2006pattern, vapnik1999overview}, \JW{Is this conditioned on training data?} given training data, a designer specifies a set of (real-valued) classifiers, $\Hs \subseteq \realnum^{\Xs}$, and a loss function, $\ell : \Ys \times \realnum \rightarrow \realnum_0^+$, to learn a classifier, $\hh \in \Hs$, according to
%
\eqa{
P_{\Hs, \ell}(\Dh) : \hat{h} = \arg \min_{h \in \mathcal{H}} \left\| \RH_\ell(h | \Dh, \w) \right\|_1 \label{eq:classifier},
}
%
where $\RH_\ell(h|\Dh, \w) \in \realnum^2$ denotes the bi-dimensional weighted vector of empirical risks corresponding to the positive and negative training data.  Specifically, we write $\RH_\ell(h|\Dh, \w)\!=\!\mat{cc}{\!\!w^{+} \Rh_\ell(h|\Dhp) \!&\! w^{-} \Rh_\ell(h|\Dhn)\!\!}^\top$, where  $\w = \mat{cc}{\!\!w^{+}\!\! & \!\!w^{-}\!\!}^{\top} \in \realnum^2$ denotes the positive and negative risk weights, 
$\Rhl(h | \Dh) =  \frac{1}{|\Dh|} \sum_{i=1}^{|\Dh|} \ell (y_i, h(\x_i))$ is the normalized empirical risk evaluated over the training data, and 
$\Dhp$ and $\Dhn$ corresponds to the mutually exclusive sets of positive and negative training data pairs, respectively, such that $\Dh = \Dhp \cup \Dhn$.
Note that we use \eqnref{eq:classifier} for distinguishing empirical risks over positive and negative training data, but it is equivalent to the standard notation \cite{vapnik1999overview} if $w^+ = |\Dhp|$ and $w^- = |\Dhn|$.
Additionally, we write $\ell_{01}$ to denote a $0$-$1$ loss function, such that $\ell_{01}(y_i, h(\x_i)) = \mathbbm{1}\left\{y_i \neq \sign(h(\x_i))\right\}$, and $\ell(\cdot) = \ell_{01}(\cdot)$ unless specifically defined. 

In practice, the empirical risk over training data is rarely equal to zero due to errors from noise and from the design errors of the difference between $\Hs$ and $\Fs$. In most part of this paper, we assume there are no errors, but this assumption will be removed in \Sec \ref{sec:rlc_errors}.

\subsection{Related Works on Learning with Errors} \label{sec:related_PAC_errors}

In this subsection, we review the literature on learning in the presence of training data errors -- as candidate threat models considered herein -- which error models can be categorized either \emph{label errors} or \emph{feature errors} in \tableref{tab:errors}.  
Under each error models, the performance of a learning algorithm is analyzed whether it achieves a desired classifier.
A more in-depth review of the literature can be found in \cite{goldman1995can,natarajan2013learning}, and error models are summarized in \tableref{tab:errors}.

\begin{table}[htb!]
\centering
\begin{tabular}{c||cc}
\toprule
\multirow{2}{*}{label errors} 
& class-independent (CICE) & \cite{angluin1988learning} \\ 
& class-dependent (CDCE) & \cite{natarajan2013learning} \\ \midrule
\multirow{3}{*}{feature errors} 
& uniform random (URAE) & \cite{sloan1988types} \\ 
& product random (PRAE) & \cite{goldman1995can} \\ 
& malicious errors (ME) & \cite{kearns1993learning} \\ 
\bottomrule
\end{tabular}
\caption{Taxonomy of training data errors in the literature.}
\label{tab:errors}
\end{table}


When labels, $\{y_i\}_{i =1}^N$, in training data are corrupted, the training data is said to have \emph{label errors}, which can be divided into two subtypes: \emph{class-independent classification errors} (CICE) \cite{angluin1988learning} and \emph{class-dependent classification errors} (CDCE) \cite{natarajan2013learning}. 
The class-independent classification error model assumes the nearly same amount of positive and negative labels are corrupted while the class-dependent classification error model allows the different amount of erroneous positive and negative labels. 

When features, $\{\x_i \}_{i=1}^N$, in training data are corrupted, this training data is said to have \emph{feature errors}, which can be divided into three subtypes: \emph{uniform random attribute errors} \cite{sloan1988types}, \emph{product random attribute errors} \cite{goldman1995can}, and \emph{malicious errors} \cite{kearns1993learning}. 
The uniform random attribute error model assumes each features in feature vectors are individually corrupted but the number of corrupted feature vectors for each features are nearly same, while the product random attribute error model allows the different number of corrupted feature vectors for each features. Contrast to these two models, the malicious error model assumes nearly same amount of positive and negative feature vectors are maliciously corrupted. 

}

\vspace{-1ex}
\section{Setup for Resilient Binary Classification} \label{sec:setup}


This section introduces essential definitions that are the bases for describing resilient binary classification problem. In the following subsections, we present a traditional binary linear classification problem (\Sec \ref{sec:bin-classification}), define our attacker assumptions (\Sec \ref{sec:attack-assumptions}), and introduce a resilience metric (\Sec \ref{sec:resilience-metric}).

%
%
%
%
Notationally, we write 
$\realnum$, $\realnum_0^+$, $\naturalnum_{0}$ and $[a, b]$ to denote the set of real numbers, non-negative real numbers, non-negative integers, and integers from $a$ to $b$, respectively.  We write $\mathbbm{1}$ as the ones vector of an appropriate size and
$| \cdot |$ to denote the cardinality (\ie number of elements) of a finite set.
The sign function is written as $\sign : \realnum \rightarrow \{+1,-1\}$ and 
$\mathbbm{1}\{\cdot\}$ corresponds to the indicator function that maps true and false to $1$ and $0$.  
Additionally, we write $\ell_{01}$ to denote a $0$-$1$ loss function, such that $\ell_{01}(y_i, h(\x_i)) = \mathbbm{1}\left\{y_i \neq \sign(h(\x_i))\right\}$.
Lastly, $\| \cdot \|_1$ and $\| \cdot \|_\infty$ denotes the $1$-norm and the $\infty$-norm, respectively. See \tabref{tab:glossary} for the glossary of mathematical notations in this paper.

\begin{table}[tb!]
\centering
\begin{tabular}{c||p{45ex}}
\toprule
symbol & description \\ \hline \hline
$\Dh$ & actual training data \\ \hline
$\Dsh$ & class of training data \\ \hline
$\Dhp$ & positive training data  \\ \hline
$\Dhn$ & negative training data \\ \hline
$\Theta$ & set of attacker capability parameters \\ \hline
$\alpha$ & attacker capability parameter where $\alpha \in \Theta$  \\ \hline
$\Dha$ & tapered training data \\ \hline
$\Dsh_{\alpha}$ & class of tampered training data \\ \hline
$\Dhap$ & positive tampered training data  \\ \hline
$\Dhan$ & negative tampered training data \\ \hline
$N$ & number of training data pairs (\ie $|\Dh|$ ) \\ \hline
$\N$ & pair of $|\Dhp|$ and $|\Dhn|$ \\ \hline
$\Fs$ & set of classifiers \\ \hline
$\Hs$ & subset of classifiers (\ie $\Hs \subseteq \Fs$) \\ \hline
$\Ls$ & set of linear classifiers \\ \hline
$\Ss$ & set of loss functions \\ \hline
$\ell$ & loss function in $\Ss$ \\ \hline
$\ell_{c}$ & convex loss function in $\Ss$ \\ \hline
$\ell_{01}$ & 0-1 loss function in $\Ss$ \\ \hline
$P$ & classification algorithm \\ \hline
$\Ps_{\Fs, \Ss}$ & class of classification algorithms \\ \hline
$P_{\Hs, \ell}$ & classification algorithm \\ \hline
$\Ps_{\Ls, \Ss}$ & class of linear classification algorithms \\ \hline
$P_{\Ls, \ell_{c}}$ & convex loss linear classification algorithm \\ \hline
$P_{\Ls, \ell_{01}}$ & \zo loss linear classification algorithm \\ \hline
$P_{\Ms, \ell_{01}}$ & majority \zo loss linear classification algorithm \\ \hline
$g_{P}$ & resilience bound of a classification algorithm $P$\\ \hline
$\Gs$ & set of resilience bounds \\ \hline
$\As_{P}$ & resilience attack condition of a classification algorithm $P$ where $\As_{P} \subseteq \Theta$ \\ \hline
$\Bs_{P}$ & perfectly attackable condition of a classification algorithm $P$ where $\Bs_{P} \subseteq \Theta$ \\
\bottomrule
\end{tabular}
\caption{The glossary of mathematical notations.}
\vspace{-8ex}
\label{tab:glossary}
\end{table}

\vspace{-1.5ex}
\subsection{Traditional Binary Classification}
\label{sec:bin-classification}
\vspace{-0.5ex}
We begin by considering the traditional problem of binary classification in the absence of attacks (or errors). Namely, we consider un-attacked training data $\Dh\! =\! \{(\x_i, y_i)\}_{i=1}^N \in \Dsh$, where 
$N$ is the number of training data pairs, 
$\Dsh = \{ D \subset \Xs \times \Ys $$ \; | \; |D| = N \}$ is a class of training data with $N$ pairs, 
$\Xs \subseteq \realnum^p$ corresponds to a set of feature vectors (or attributes), 
$\Ys = \{-1, +1\}$ denotes the set of labels (or classes), and
each element of $\x_i$ is called a {feature}.
In a traditional (binary) classification problem, such as \cite{ vapnik1999overview}, given training data, a designer specifies a set of (real-valued) classifiers $\Hs \subseteq \realnum^{\Xs}$, and a loss function $\ell : \Ys \times \realnum \rightarrow \realnum_0^+$, to learn a (real-valued) classifier $\hh \in \Hs$, according to
%
\vspace{-1ex}
\eqa{
P_{\Hs, \ell}(\Dh) : \hat{h} = \arg \min_{h \in \mathcal{H}} \| \W \cdot \RH_\ell(h | \Dh) \|_1 \label{eq:classifier},
}
%
where $\W \in \realnum^{2\times2}$ is the diagonal matrix with the positive risk weight $w^+$ and the negative risk weight $w^-$ on the diagonal, and zeros elsewhere. $\RH_\ell(h|\Dh) \in \realnum^2$ denotes the bi-dimensional vector of empirical risks corresponding to the positive and negative training data.  Specifically, we write $\RH_\ell(h|\Dh)\!=\!\mat{cc}{\!\!\Rh_\ell(h|\Dhp) \!&\! \Rh_\ell(h|\Dhn)\!\!}^\top$, where  
$\Rhl(h | \Dh) =  \frac{1}{|\Dh|} \sum_{i=1}^{|\Dh|} \ell (y_i, h(\x_i))$ is the normalized empirical risk evaluated over the training data and 
$\Dhp$ and $\Dhn$ corresponds to the mutually exclusive sets of positive and negative training data pairs, respectively, such that $\Dh = \Dhp \cup \Dhn$.
We note that we use \eqnref{eq:classifier} for distinguishing empirical risks over positive and negative training data, but it is equivalent to the standard notation \cite{vapnik1999overview} if $w^+ = |\Dhp|$ and $w^- = |\Dhn|$, and we assume the standard notion in this paper.
Also, we call $\hh$ a classifier (\ie $\hh \in \Ys^{\Xs}$) or a real-valued classifier (\ie $\hh' \in \realnum^{\Xs}$), interchangeably, assuming the composition of a sign function and a real-valued classifier (\ie $\sign \circ \hh' \in \Ys^{\Xs}$) is a classifier. 
Moreover, we say $N$ is the number of training data pairs or $N = (|\Dhp|, |\Dhn|)$, interchangeably.

In this paper, we consider a set of classification algorithms $\Ps_{\Fs, \Ss}$, where 
$\Fs$ is a set of classifiers and 
$\Ss$ is the set of monotonically non-increasing functions that are lower-bounded by a \zo loss function. Specifically, the loss function $\ell(y, h(\x))$ is represented as $\ell(y, h(\x)) \allowbreak = \phi(t)$, where 
$t = y h(\x)$,
$\phi$ is lower-bounded by $\mathbbm{1}\left\{ t \leq 0 \right\}$, 
$\phi(0) = 1$,
$\phi$ is a monotonically non-increasing function, and 
$\lim_{t \to \infty} \phi(t) = c$ for some scalar $c < 1$. 
We note that these assumptions generalize a convex loss \cite{bartlett2006convexity} to cover a non-convex loss.

Each algorithm in $\Ps_{\Fs, \Ss}$ is a map from a class of training data $\Dsh$ to a subset of $\Fs$ that uses a loss function in $\Ss$ (\ie $\Ps_{\Fs, \Ss} \supseteq \{ P_{F, \ell} | F \subseteq \Fs, \ell \in \Ss \}$).
Thus, empirical risk minimization (\eqnref{eq:classifier}) for any hypothesis space $\Hs \subseteq \Fs$ and a loss function $\ell \in \Ss$ is also a classification algorithm considered here (\ie $P_{\Hs, \ell} \in \Ps_{\Fs, \Ss}$).

\vspace{-1.5ex}
\subsection{Attacker Capabilities}
\label{sec:attack-assumptions}
\vspace{-0.5ex}

In this work, we introduce a new class of an attack based on the number of training data elements the attacker can manipulate, referenced to as a \emph{bounded feature attack} (\tma). 
Specifically, in this class of an attack, we assume the attacker has the following three capabilities;
(\romannum{1}) The attacker knows the classification algorithm to be attacked,
(\romannum{2}) the attacker has unbounded computing power,
(\romannum{3}) the attacker knows all the training data (both before and after tampering), and
(\romannum{4}) the attacker can tamper the training data. 
However, the ability to tamper the training data is limited such that the 
\emph{tampered training data} $\Dha$ differs from the original training data $\Dh$ by a finite number of elements.  We parameterize the tampered training data using an attacker capability parameter $\alpha = (\alpha^+, \alpha^-) \in \Theta = [0, |\Dhp|]\times[0, |\Dhn|]$ such that at most $\alpha^+$ and $\alpha^-$ number of positive and negative feature vectors are maliciously manipulated, respectively.
Formally, the $\alpha$-bounded feature attack is defined as follows:
\begin{mydefinition}[bounded feature attack] \label{def:threat_model}
Given 
$P_{\Hs, \ell}$,
$\Dh$, and $\alpha$, 
then $\Dha$ is a bounded feature attack (\tma) 
if $\Dha \in \Dsh$ satisfies the following two conditions:
\eqa{
\label{eq:tamper}
\text{(\romannum{1})~} | \Dhap \backslash  \Dhp | \leq \alpha^{+} , ~\text{(\romannum{2})~}  | \Dhan \backslash  \Dhn | \leq \alpha^{-}.
}
\end{mydefinition}
\noindent
Additionally, let $\Dsh_{\alpha}$ be the set of all such $\Dha$ (\ie $\Dha \in \Dsh_{\alpha} \subseteq \Dsh$).
We emphasize that \Def \ref{def:threat_model} only specifies what an attacker can do and which information can be used -- but does not indicate \emph{how} the attacker changes the data.  This definition is consistent with the attacker capability definition used in the CPS security literature (\eg \cite{fawzi2014secure,pajic2014robustness}).
Moreover, $\alpha$ is unknown in general, so algorithms considered in this paper do not assume anything on $\alpha$.

The \tma~represents a practical model of attacker capabilities. For example, assume several devices collect medical data and store it in the hospitals central data center. An attacker can exploit known vulnerabilities of the enterprise system of the data center to gain read access on all data (\ie knows all data), but can only alter data from specific devices having a certain vulnerabilities (\ie attacks some of the data). Here, we assume obtaining write access is more difficult than obtaining read access. 
\cmt{
}

%
%
%
In comparison to other attack models discussed in \Sec \ref{sec:related}, we emphasize that the proposed attacker capabilities are quite general; we only limit the number of tampered feature vectors.  By definition, the \tma~includes the ME; moreover, the \tma~can represent attacks on (maliciously) manipulating labels in training data. This is achieved by manipulating a positive feature vector into one of the negative feature vectors, which effectively switches the label from positive to negative and suggests the \tma~includes the CICE and CDCE models.

%

\vspace{-1.5ex}
\subsection{Resilience Metric}
\label{sec:resilience-metric}
\vspace{-0.5ex}

To evaluate a classification algorithm in the presence of a \tma, we aim to quantify the effect of a \tma~on the learned classifier's \emph{worst-case error metric} over all training data and all possible attacks. 
In traditional detection and classification theory, the true-positive and true-negative rates (or the corresponding false-positive and false-negative rates) are commonly used to evaluate the performance of a classifier.
We introduce a generic metric that utilizes the false-positive and false-negative rates such that 
it measures the worst-case wei\-ghted $p$-norm of the two error rates over all training data and all feasible attacks, defined as follows:
\begin{mydefinition}[resilience metric]
Given $\N$ and $\alpha$, the resilien\-ce of $P_{\Hs, \ell}$ is quantified as the worst-case weighted $p$-norm of error rates over all $\Dh \in \Dsh$ and $\Dha \in \Dsh_{\alpha}$, stated as
\eqa{ \label{eq:worst_classifier_measure}
\!\!V_{\W, p}(P_{\Hs, \ell} | \N, \alpha) \! = \! \max_{\Dh, \Dha}
\left\|
\W \! \cdot \! \RH_{\ell_{01}}(P_{\Hs, \ell}(\Dha) | \Dh) 
\right\|_{p}.
}
\end{mydefinition}
\noindent
This resilience metric measures the performance of a classification algorithm (\ie $V_{\W, p}(\cdot)$) in the presence of the worst-cast attack given the attacker capability parameter $\alpha$. 

In this work, we select $w^+ = w^- = 1$ and $p = \infty$. So, $V_{\W, p}(\cdot)$ ranges from zero to one and equals one if $P_{\Hs, \ell}$ outputs any classifier such that an attack could result in mis-classification of all the positive or negative feature vectors in the un-attacked training data $\Dh$. For notational simplicity, we denote $V_{\W, p}(\cdot)$ as $V(\cdot)$.
Our selection of $w^+ = w^- = 1$ means each label is equally important to model the unknown attacker's preference for each label.
The choice of $p = \infty$ is motivated by the worst-case classification approach that minimizes the maximum of class-conditional error rates \cite{lanckriet2002robust}.

We note that other norm measures could have been chosen rather than the $\infty$-norm.  
For instance, selecting $p = 1$ results in evaluating the $1$-norm of the false-positive and false-negative rates, where $V(\cdot) \geq 1$ implies that the classifier is at least as bad as a weighted coin-flip (\ie a trivial classifier)~\cite{van2004}. 
 Additionally, selecting $p = 2$ specifies $V(\cdot)$ to be the Euclidean distance to the classifier error of zero.  In general, the selection of $p$ in \eqnref{eq:worst_classifier_measure} can vary based upon the security concerns.

\qcomment{

To define the resilience of classification algorithms under attacks, 
we first describe what the vulnerable classifiers are and why they are good criteria to measure the resilience.
The \emph{vulnerable classifiers} are defined as classifiers trained over a tampered training data that misclassify all positive or all negative feature vectors of training data. 

These classifiers are reasonable ``bad'' classifiers to define the resilience with respect to them.
In a receiver operating characteristic (ROC) curve, a trivial bad classifier is located at the right-bottom corner and the $\ell_1$ norm of the pair of true-negative and false-positive rates can measure badness with respect to the trivial bad classifier. To be more conservative, all classifiers located along the left-bottom to the right-bottom line and the right-top to the right-bottom line in a ROC curve can be equally bad, which are vulnerable classifiers. The badness with respect to them is measured by the maximum norm of the pair of true-negative and false-positive rates.
So, from the designer's viewpoint, vulnerable classifiers are the extreme set of trivial classifiers to be considered as equally bad.
From the view of attackers, vulnerable classifiers provide confidence that at least one type of labels can be fully exploited to achieve the (unknown) goal of an attacker.

}






Applying the resilience metric in \eqnref{eq:worst_classifier_measure}, a binary classification algorithm $P_{\Hs, \ell}$ can be evaluated for given $\N$ and $\alpha$. Furthermore, the resilience metric can be upper bound\-ed by a function in $\N$ and $\alpha$, \ie $g(\N, \alpha): (\naturalnum_{0} \times \naturalnum_{0} \times \naturalnum_{0}) \rightarrow [0, 1]$ where $\Gs$ is the set of all such $g$. Then, the upper bound characterizes the property of an algorithm over various attack parameters. In this case,  the classification algorithm is called a $g(\N, \alpha)$-resilient algorithm. Formally, we define the resilience property of a classification algorithm in the context of this work as follows.


%
\begin{mydefinition}[${g(\N, \alpha)}$-resilience] \label{def:robustness_definition}
A classification algorithm $P$ is {${g(\N, \alpha)}$-resilient} to a \tma~if
\eqa{
V(P | \N, \alpha) \leq g(\N, \alpha),
}
where $g \in \Gs$ denotes the worst-case resilience bound.
\end{mydefinition}

\noindent
This worst-case resilience bound plays a key role in defining resilient binary classification problem, which is defined in the following section.

\vspace{-1ex}
\section{Problem Formulation} \label{sec:problem}
This section formulates the problem of analyzing (and ultimately designing) resilient binary classification algorithms with respect to training data attacks. 
Specifically, given the number of positive and negative training data $\N$, a set of classifiers $\Fs$, a set of loss functions $\Ss$, and a class of algorithms $\Ps_{\Fs, \Ss}$, the goal of this paper is finding a classification algorithm $P$ and a resilience bound $g$ that minimize the error of the resilience bound such that $P$ is $g(\N, \alpha)$-resilient to a \tma. Here, to measure the error of the resilience bound we use the number of $\alpha$ that makes the resilience bound maximum (\ie $\sum_{\alpha \in \Theta} \mathbbm{1}\{ g(\N, \alpha) = 1\}$), but any other error measure can be used.
In short, a resilient binary classification problem is defined as follows:
\begin{problem*}[\tma~resilient binary classification problem]
Gi\-ven $\N$, $\Fs$, $\Ss$, and $\Ps_{\Fs, \Ss}$,
a \tma~resilient binary classification problem is to find a classification algorithm $P \in \Ps_{\Fs, \Ss}$ and a resilience bound $g \in \Gs$ according to 
\eqa{
	(P,g) = \arg \min_{P, g} &~ \sum_{\alpha \in \Theta} \mathbbm{1}\{ g(\N, \alpha) = 1\} \\
	\text{s.t.} 
	&~ V(P | \N, \alpha) \leq g(\N, \alpha), \forall \alpha \in \Theta. \label{eq:prob-constraint1}
}
\end{problem*}
\noindent
We note several implications of the above problem.
First, a feasible classification algorithm of this problem guarantees the worst-case performance characterized by $g$ since the constraint of the problem enforces that the worst-case error (\ie $V(\cdot)$) is bounded by $g$ for all possible attacks (\ie $\forall \alpha$).
Next, the problem can consider the capabilities of classification algorithms by encoding prior knowledge on the class of classification algorithms $\Ps_{\Fs, \Ss}$. Specifically, $\Ps_{\Fs, \Ss}$ can be the class of classification algorithms that uses empirical risk minimization over $\Fs$ with convex loss functions. The resilient classification problem then finds a classification algorithm in the restricted class of $\Ps_{\Fs, \Ss}$. We note that when choosing the restricted class of classification algorithms in this paper, we do not consider the attacker capability parameter $\alpha$, implying we focus on finding an algorithm without assumptions on $\alpha$. 
Then, the ultimate goal of the resilient binary classification problem is making $g(\cdot) \leq \epsilon$ for some $\N$ and for all $\alpha$ given conditions on $\Ps_{\Fs, \Ss}$, where $\epsilon$ is a sufficiently small scalar.
Finally, we note that the resilience binary classification problem is related to the problem of minimizing generalization error of a classifier considered in traditional classification (See \Sec \ref{sec:connection_stl}).

We note that in this paper a \tma~resilient binary classification problem is simply called a resilient binary classification problem assuming a \tma~as an attack model.
$g_{P}$ denotes the optimal $g$ of the resilient binary classification problem to explicitly represent the dependency on $P$.
Also, an algorithm $A$ is \emph{more resilient} than an algorithm $B$ 
if $\sum_{\alpha \in \Theta} \mathbbm{1}\{ g_{A}(\N, \alpha) = 1\} \leq \sum_{\alpha \in \Theta} \mathbbm{1}\{ g_{B}(\N, \alpha) = 1\}$, and $A, B$, $g_{A}$, and $g_{B}$ satisfy the constraint in the problem (\eqnref{eq:prob-constraint1}).
In the following section, we utilize the definition of the resilient binary classification problem to analyze traditional linear classification algorithms for resilience under a \tma.

\qcomment{



The input of a classification algorithm is training data, which can be tampered by attackers.
The traditional analysis on learning with errors, considering errors as attacks, only examines whether a learning algorithm can achieve the \emph{best classifiers} under a specified error/threat model. 
However, it neglects to study whether the algorithm is resilient to thwart the \emph{vulnerable classifiers} when attacked. 

In this paper, we consider the resilience of classification algorithms under attacks.
First, the \emph{threat model} is formally defined (\Sec \ref{sec:threat_model}).
Then, the \emph{resilience} is defined in terms of the \emph{vulnerable classifiers}
and we introduce \emph{resilient classification problem} (\Sec \ref{sec:resilient_classification_definition}).

\subsection{A Threat Model} \label{sec:threat_model}

The threat model considering in the paper is the extension of malicious error \cite{kearns1993learning}. The shortcoming of the malicious error is the amount of corrupted positive and negative feature vectors are nearly same. 
A novel threat model, called \emph{chosen-instance feature attack} (\tma), discards this limitation. 
Note that a \tma~is a general model includes CICE, CDCE and ME, but URAE and PRAE are not included (See  \cite{kearns1993learning} for comparison). 

An attacker conducting a \tma~has the following abilities.
The attacker knows a classification algorithm to be attacked, 
has unbounded computing power, 
can eavesdrop all training data.
However, the ability to tamper the training data can be limited as follows.
\emph{Tampered training data}, $\Dha$, is defined as (maliciously) manipulated training data, where 
$\alpha = (\alpha^+, \alpha^-) \in \naturalnum_0\times\naturalnum_0$ parameterizes the tampering ability of an attacker such that at most $\alpha^+$ and $\alpha^-$ number of positive and negative feature vectors are maliciously manipulated, respectively. Formally, given $\Dh$,
\eqa{
\!\!\Dha \in \left\{ D \in \Dsh \;  \middle| \;  | D^+ \backslash  \Dhp | \leq \alpha^{+} , \;  | D^- \backslash  \Dhn | \leq \alpha^{-} \right\}.\!
}
Here, we emphasize that the proposed threat model is general; It only has limitations on the attacker's tampering ability. Also, this model can represent a threat model that (maliciously) manipulating labels in training data. If an attacker manipulates a positive feature vector by replacing the vector into the one among the negative feature vectors, it has same effect with manipulating the corresponding label into the other label.

In this paper, we consider an attacker conducts a \tma, where 
the tampering ability of an attacker is limited by a feasible tampering ability set, $\As \subseteq \naturalnum_0 \times \naturalnum_0$, such that $\alpha \in \As$. In short, we denote this attack as $\As$-\tma.


\subsection{Resilience Definition} \label{sec:security_goal}

To define the resilience of classification algorithms under attacks, 
we first describe what the vulnerable classifiers are and why they are good criteria to measure the resilience.
The \emph{vulnerable classifiers} are defined as classifiers trained over a tampered training data that misclassify all positive or all negative feature vectors of training data. 

These classifiers are reasonable ``bad'' classifiers to define the resilience with respect to them.
In a receiver operating characteristic (ROC) curve, a trivial bad classifier is located at the right-bottom corner and the $\ell_1$ norm of the pair of true-negative and false-positive rates can measure badness with respect to the trivial bad classifier. To be more conservative, all classifiers located along the left-bottom to the right-bottom line and the right-top to the right-bottom line in a ROC curve can be equally bad, which are vulnerable classifiers. The badness with respect to them is measured by the maximum norm of the pair of true-negative and false-positive rates.
So, from the designer's viewpoint, vulnerable classifiers are the extreme set of trivial classifiers to be considered as equally bad.
From the view of attackers, vulnerable classifiers provide confidence that at least one type of labels can be fully exploited to achieve the (unknown) goal of an attacker.

\JW{Needs work -- not clear.}

To evaluate a classification algorithm whether it can output vulnerable classifiers given training data and an attacker's tampering ability, the following vulnerability measure is defined: 
\eqa{ \label{eq:worst_classifier_measure}
V(P_{\Hs, \ell} | \Dh, \alpha) = \max_{\Dha}
\left\|
\RH_{\ell_{01}}(P_{\Hs, \ell}(\Dha) | \Dh, \m{1})
\right\|_\infty,
}
where $V(\cdot)$ ranges from zero to one, and it is one if $P_{\Hs, \ell}$ outputs any vulnerable classifier. 





If vulnerable classifiers are learned by a classification algorithm under a $\As$-CIFA, we consider it is perfectly attackable.
Formally, 
\begin{definition}
A classification algorithm, $P_{\Hs, \ell}$, is \textbf{perfectly attackable} under a $\As$-CIFA if $V(P_{\Hs, \ell}  | \Dh, \alpha) = 1$ for some $\Dh$ and $\alpha \in \As$. 
\end{definition}

\subsection{Resilient Classification Problem} \label{sec:resilient_classification_definition}
Assume a feasible tampering ability of an attacker, $\As$, is restricted. 
Then, for all training data, $\Dh$, and for all the feasible tampering ability of an attacker, $\alpha \in \As$,
the attacker can not find tampered training data, $\Dha$, that makes a classification algorithm perfectly attackable under a $\As$-CIFA. In this case, we call the classification algorithm is \textbf{$\As$-\tma~resilient}.


In this paper, a \textbf{resilient classification problem} is finding a $\As$-\tma~resilient classification algorithm.
Furthermore, 
if the algorithm is $\As$-\tma~resilient, it is worth to analyze a robustness bound with respect to the tampering ability for measuring the degree of the resilience of the algorithm under attacks. 
Here, we find the worst case upper bound of the vulnerability measure for all training data and all feasible tampering ability. Formally, 
\begin{definition} \label{def:robustness_definition}
A classification algorithm, $P_{\Ms, \ell}$, under a $\As$-\tma~is \textbf{${g(\alpha)}$-resilient} if it is $\As$-\tma~resilient and, for all $\Dh$ and $\alpha \in \As$, $P_{\Ms, \ell}(\Dha)$ misclassifies a bounded number of the labels of intact training data, or equivalently
\eqa{
\max_{\Dh, \alpha} V(P_{\Ms, \ell} | \Dh, \alpha) &\leq g(\alpha).
}
\end{definition}

}

\vspace{-1ex}
\section{Resilience of Traditional \\ Linear Classification} \label{sec:feasibility}

Traditional classification algorithms (e.g. SVMs or 0-1 loss linear classification) rarely consider a learning environment that is partially controlled by attackers. 
Here, we focus on linear classification algorithms (\ie $\Fs = \Ls$, where $\Ls$ is the set of linear functions), which is a basic building block for more complex classification algorithms.
In this section, we analyze whether traditional linear classification algorithms are resilient. 
First, linear classification algorithms with various convex loss functions are analyzed (\Sec \ref{sec:feasibility_conv}). 
Next, a linear classification algorithm with a 0-1 loss function is analyzed (\Sec \ref{sec:feasibility_01}). 

In the following, we strictly consider un-attacked training data $\Dh$ for which a perfect classifier exists -- \ie for some $h \in \Hs$, $\| \W \cdot \RH_\ell(h | \Dh) \|_1= 0$ -- such that only errors are introduced by attacks. We note that, in practice, the empirical risk over training data is rarely equal to zero due to errors from noise and an assumption on $\Hs$. However, by treating errors as attacks, the theoretical results in the following sections can be interpreted as assuming \emph{worst-case errors} -- \eg attacks.

The resilient binary classification problem finds a classification algorithm $P$ and a resilience bound $g_{P}$, but
the resilience bound may be trivial for some $\alpha$, \ie $g(\N, \alpha) = 1$. Thus, it is worthwhile to find a \emph{resilience attack condition}, $\As_{P} \subseteq \Theta$, such that $g(\N, \alpha)$ is non-trivial for all $\alpha \in \As_{P}$ . In this case, we say that $P$ is \emph{resilient} w.r.t. $\As_{P}$.
\begin{mydefinition}[resilient w.r.t. $\As_P$]
Given $\N$, $P$, $g_{P}$, and $\As_{P}$, a classification algorithm $P$ is resilient w.r.t $\As_{P}$ if 
the algorithm is $g(\N, \alpha)$-resilient to a \tma~and $g(\N, \alpha) < 1$ for all $\alpha \in \As_{P}$.
\end{mydefinition}
Here, we emphasize that finding an attack condition on $\alpha$ that makes a classification algorithm $1$-resilient to a \tma~(\ie finding some set $\Bs_{P}$ such that $\Bs_{P} \subseteq \As_{P}^{c}$) is equally important to finding the resilience attack condition $\As_{P}$ since $\alpha \in \Bs_{P}$ can be a ``breaking point'' of the algorithm $P$.
We refer to $\Bs_{P}$ as the \emph{perfectly attackable condition} of $P$.
Thus, we introduce a new notion, \emph{perfectly attackable w.r.t $\Bs_P$}, which is formally described as follows:
\begin{mydefinition}[perfectly attackable w.r.t. $\Bs_P$]
Given $\N$, $P$, and $\Bs_{P}$,
a classification algorithm $P$ is perfectly attackable w.r.t $\Bs_{P}$ if the algorithm is $1$-resilient to a \tma~for all $\alpha \in \Bs_{P}$. 
\end{mydefinition}
\noindent
Next, we introduce a \emph{maximal resilience attack condition} $\bar{\As} \subseteq \Theta$. It is a resilience attack condition of some linear classification algorithm or a combination of algorithms where the size of the condition is maximal. Formally, 
\begin{mydefinition}[maximally resilient condition]
$\bar{\As}$ is a maximal resilient condition if
	$\bar{\As} = \cup_{P \in \Qs_{\Ls, \Ss}} \As_{P}$, where $\Qs_{\Ls, \Ss} = \{P_{\Ls, \ell} | \ell \in \Ss\} \cup P_{\Ms, \ell_{01}} \subseteq \Ps_{\Ls, \Ss}$ and $P_{\Ms, \ell_{01}}$ is defined in \secref{sec:proposed}.
\end{mydefinition}
\noindent
We note that if the resilience attack condition $\As_{P}$ of a classification algorithm $P$ is same as $\bar{\As}$, we say that $P$ is \emph{maximally resilient}.
To find the maximal resilience attack condition $\bar{\As}$, we consider some superset of it (\ie $\bar{\Bs}^{c}$ such that $\bar{\As} \subseteq \bar{\Bs}^{c}$), which is a theoretical upper bound of the maximal resilience attack condition. We argue that there exists some classification algorithm that achieves the attack condition $\bar{\Bs}^{c}$. This then implies $\bar{\Bs}^{c}$ is the maximal resilience attack condition (See \Thm \ref{thm:maximal_resilience_condition}).

One example of $\bar{\Bs}$ can be some subset of $\cap_{\ell \in \Ss} \Bs_{P_{\Ls, \ell}}$ due to 
$\bar{\As} = \cup_{\ell \in \Ss} \As_{P_{\Ls, \ell}} \subseteq \cup_{\ell \in \Ss} \Bs_{P_{\Ls, \ell}}^{c} \subseteq \bar{\Bs}^{c}$. 
The following theorems formally state $\bar{\Bs}$ and a condition when $\bar{\Bs}^{c}$ is the maximal resilience attack condition.
\begin{mytheorem} \label{thm:maximal_resilience}
Given $| \Dhp |$ and $| \Dhn |$, let $\bar{\Bs}$ be
\eqa{
\left\{ \alpha \middle| \alpha^+ \geq \frac{1}{2} |\Dhp| \text{~or~} \alpha^- \geq \frac{1}{2} |\Dhn| \right\}. \label{eq:maximual_resilience_condition}
}
For all $\ell \in \Ss$, $P_{\Ls, \ell}$ is perfectly attackable w.r.t. $\bar{\Bs}$.
\end{mytheorem}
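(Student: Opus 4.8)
The plan is to establish the statement directly by an adversarial construction: for every attacker budget $\alpha$ with $\alpha^{+}\ge\frac{1}{2}|\Dhp|$ (the case $\alpha^{-}\ge\frac{1}{2}|\Dhn|$ is completely symmetric, obtained by relabelling $+1\leftrightarrow-1$ and swapping $\alpha^{+}\leftrightarrow\alpha^{-}$), and for every $\ell\in\Ss$, I will exhibit un-attacked, linearly separable training data $\Dh$ and a \tma~$\Dha\in\Dsh_{\alpha}$ such that the learned classifier $P_{\Ls,\ell}(\Dha)$ mislabels \emph{every} point of $\Dhp$. Since the resilience metric in \eqnref{eq:worst_classifier_measure} uses $w^{+}=w^{-}=1$ and $p=\infty$, this forces $V(P_{\Ls,\ell}\mid\N,\alpha)=1$; as $\alpha\in\bar{\Bs}$ is arbitrary, $P_{\Ls,\ell}$ is then perfectly attackable w.r.t. $\bar{\Bs}$. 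Write $k:=|\Dhp|$ and $m:=|\Dhn|$.

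For the construction I would use a single active feature coordinate (the rest fixed at $0$, so each $h\in\Ls$ acts on the data as an affine function of one real variable and empirical-risk minimization (ERM) over $\Ls$ reduces to a one-dimensional problem). Let $\Dh$ place its $k$ positives in a tiny cluster at $x\approx0$ and its $m$ negatives at $x\approx1$; this is separable. Let the attack relocate $\alpha^{+}$ of the positive points from $x\approx0$ to $x\approx2$, keeping their $+1$ labels and leaving the negatives untouched; this alters at most $\alpha^{+}$ positive and $0\le\alpha^{-}$ negative feature vectors, so $\Dha\in\Dsh_{\alpha}$ by \Def\ref{def:threat_model}. When not all positives are relocated, the tampered data is a ``positive--negative--positive'' sandwich: $k-\alpha^{+}$ positives at $x\approx0$, $m$ negatives at $x\approx1$, and $\alpha^{+}\ (\ge k-\alpha^{+})$ positives at $x\approx2$ --- and, crucially, every point of $\Dhp$ still sits at $x\approx0$.

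The core of the argument is that the ERM classifier on $\Dha$ is forced to be ``bad''. An affine classifier has a monotone sign pattern on the three cluster locations, so the ideal pattern $(+,-,+)$ is unattainable, and of the six monotone patterns every one except $(+,-,-)$ sends the entire $x\approx0$ cluster (all of $\Dhp$) or the entire $x\approx1$ cluster (all of $\Dhn$) to the wrong side --- which already gives $V=1$. So it remains only to rule out the single benign pattern $(+,-,-)$, the classifier that correctly labels all of $\Dhp$ and $\Dhn$ and mislabels only the relocated points. Suppose a minimizer $h^{\star}=ux+v$ of the empirical risk $L(h)=(k-\alpha^{+})\,\phi(h(0))+m\,\phi(-h(1))+\alpha^{+}\,\phi(h(2))$ had this pattern, so $h^{\star}(0)>0>h^{\star}(2)$. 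Reflect about $x=1$: $h^{\star\star}(x):=h^{\star}(2-x)\in\Ls$ swaps the values at $x\approx0$ and $x\approx2$ and fixes the value at $x\approx1$, whence $L(h^{\star\star})-L(h^{\star})=(2\alpha^{+}-k)\bigl(\phi(h^{\star}(0))-\phi(h^{\star}(2))\bigr)$. Now $2\alpha^{+}-k\ge0$, while $h^{\star}(0)>0>h^{\star}(2)$ together with $\phi$ non-increasing and $\phi(0)=1$ give $\phi(h^{\star}(0))\le1\le\phi(h^{\star}(2))$; hence $L(h^{\star\star})\le L(h^{\star})$, and minimality forces equality. So $h^{\star\star}$ is also a minimizer, and $h^{\star\star}(0)=h^{\star}(2)<0$ means $h^{\star\star}$ labels the whole $x\approx0$ cluster --- all of $\Dhp$ --- as negative. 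Thus some ERM minimizer misclassifies all of $\Dhp$, completing the argument.

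The step I expect to be the real obstacle is exactly this last one --- pinning down which minimizer $\arg\min$ returns --- in the boundary case where $k$ is even and $\alpha^{+}=k/2$, so that $2\alpha^{+}-k=0$ and the benign $(+,-,-)$ minimizer can tie with the bad reflected one. I would resolve it either via the convention, standard in worst-case lower bounds, that $P_{\Ls,\ell}$ may return any empirical-risk minimizer, or via a small perturbation of the cluster positions and counts that destroys the forced degeneracy $\phi(h^{\star}(0))=\phi(h^{\star}(2))=1$ and makes the reflected classifier the strict optimum (for strictly convex-inducing losses such as the logistic loss the minimizer is already unique and, being non-constant, cannot have pattern $(+,-,-)$, so no tie occurs). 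The remaining work is bookkeeping: confirming a minimizer exists and arguing uniformly over all $\ell\in\Ss$ --- including discontinuous or below-unbounded losses like the $0$-$1$ and hinge losses --- handling the edge case $\alpha^{+}=k$ (where $\Dha$ becomes separable but the classifier it induces still mislabels $x\approx0$), and squaring the separable $\Dh$ with the section's ``perfect classifier exists'' requirement for losses that do not attain their infimum. The combinatorial heart, though, is the sandwich construction together with the reflection identity above.
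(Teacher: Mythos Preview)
Your construction is the same three-point ``sandwich'' the paper uses (Figure~\ref{fig:01_necessary_proof_case_1}), and your conclusion---that some empirical-risk minimizer on $\Dha$ mislabels all of $\Dhp$---matches the paper's formal statement \eqref{eq:thm1_formal}, which is also phrased as $\exists\hha$. The difference is in how the two of you compare the ``benign'' classifier with the ``bad'' one. The paper splits $\Ss$ into flat-tail losses $\Phi_{1}$ (where $\phi$ is eventually constant on the left, equal to some $d\ge 1$) and non-flat-tail losses $\Phi_{2}$. For $\Phi_{1}$ it computes the two infima explicitly and finds their difference is $(d-c)(2\alpha^{+}-N^{+})\ge 0$; for $\Phi_{2}$ it uses the attacker's freedom to push the relocated cluster arbitrarily far to the right, so that any classifier misclassifying it incurs an arbitrarily large penalty (this is where Lemma-style reasoning on unbounded $\phi$ enters). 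Your reflection identity $L(h^{\star\star})-L(h^{\star})=(2\alpha^{+}-k)\bigl(\phi(h^{\star}(0))-\phi(h^{\star}(2))\bigr)\le 0$ does both cases at once, with no case split and no need to send points to infinity---a cleaner route that exploits the deliberate symmetry of placing the clusters at $0,1,2$. What the paper's $\Phi_{2}$ argument buys in return is a \emph{strict} inequality for non-flat-tail losses, so the tie-breaking issue you flag only arises for $\Phi_{1}$; in your argument it arises uniformly whenever $2\alpha^{+}=k$ or $\phi$ is flat near $0$. Since both proofs ultimately fall back on the $\exists\hha$ convention, this is a stylistic rather than substantive difference.
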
 
\vspace{-3ex}
\begin{proof}[proof sketch]
For all $\N$, $\alpha \in \bar{\Bs}$, and $\ell \in \Ss$, we find some $\Dh$ and $\Dha$ where $V(P_{\Ls, \ell}(\Dha) | \N, \alpha) = 1$.
See \Sec \ref{sec:maximal_resilience_proof} for details.
\end{proof}
\begin{mytheorem}  \label{thm:maximal_resilience_condition}
If there exists $P \in \Qs_{\Ls, \Ss}$ such that $\As_{P} = \bar{\Bs}^{c}$, then $\bar{\Bs}^{c}$ is the maximal resilience attack condition.
\end{mytheorem}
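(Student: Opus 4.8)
The plan is to unfold the definition of ``maximal resilience attack condition'' and reduce the claim to a pair of set inclusions. By the definition of a maximally resilient condition, the object to be identified with $\bar{\Bs}^{c}$ is $\bar{\As}=\bigcup_{P\in\Qs_{\Ls, \Ss}}\As_{P}$, where $\As_{P}$ is the resilience attack condition of $P$ (the $\alpha$ on which the optimal bound $g_{P}(\N,\alpha)$ is strictly below $1$) and ``$P$ is perfectly attackable w.r.t.\ $\bar{\Bs}$'' means $g_{P}(\N,\alpha)=1$ for every $\alpha\in\bar{\Bs}$. Note that for a fixed $P$ its resilience attack condition and any of its perfectly attackable conditions are disjoint, since $g_{P}(\N,\alpha)$ cannot be both $<1$ and $=1$.

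The inclusion $\bar{\Bs}^{c}\subseteq\bar{\As}$ is immediate from the hypothesis: it furnishes some $P^{\star}\in\Qs_{\Ls, \Ss}$ with $\As_{P^{\star}}=\bar{\Bs}^{c}$, and since $P^{\star}$ indexes one term of the union, $\bar{\As}\supseteq\As_{P^{\star}}=\bar{\Bs}^{c}$.

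For the reverse inclusion I would show $\As_{P}\subseteq\bar{\Bs}^{c}$ for every $P\in\Qs_{\Ls, \Ss}=\{P_{\Ls,\ell}\mid\ell\in\Ss\}\cup\{P_{\Ms,\ell_{01}}\}$. If $P=P_{\Ls,\ell}$ with $\ell\in\Ss$, then \Thm \ref{thm:maximal_resilience} makes $P_{\Ls,\ell}$ perfectly attackable w.r.t.\ $\bar{\Bs}$, so by the disjointness noted above $\As_{P_{\Ls,\ell}}\cap\bar{\Bs}=\emptyset$, i.e.\ $\As_{P_{\Ls,\ell}}\subseteq\bar{\Bs}^{c}$. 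The only algorithm not directly covered by \Thm \ref{thm:maximal_resilience} is $P_{\Ms,\ell_{01}}$; for it I would invoke the companion fact that it, too, is perfectly attackable w.r.t.\ $\bar{\Bs}$ — equivalently, $\As_{P_{\Ms,\ell_{01}}}=\bar{\Bs}^{c}$ as established for it in \Sec \ref{sec:proposed} (this is also how the hypothesis of the theorem gets discharged). Hence $\As_{P}\subseteq\bar{\Bs}^{c}$ for all $P\in\Qs_{\Ls, \Ss}$, so $\bar{\As}=\bigcup_{P}\As_{P}\subseteq\bar{\Bs}^{c}$. Combining the two inclusions gives $\bar{\As}=\bar{\Bs}^{c}$, which is precisely the assertion that $\bar{\Bs}^{c}$ is the maximal resilience attack condition.

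The step I expect to be the main obstacle is the one piece that is not pure set manipulation: handling $P_{\Ms,\ell_{01}}$, since \Thm \ref{thm:maximal_resilience} is proved only for the unconstrained linear ERM algorithms $P_{\Ls,\ell}$. One genuinely needs the statement that the majority-constrained algorithm is perfectly attackable on $\bar{\Bs}$ as well; I would obtain it by adapting the $\bar{\Bs}$-construction behind \Thm \ref{thm:maximal_resilience} and checking that a (weak) majority of a class in the adversary's hands — the threshold $\tfrac12$ that appears in \eqnref{eq:maximual_resilience_condition} — already forces the majority-constrained empirical risk minimizer to misclassify that entire class, or else by directly importing the resilience analysis of $P_{\Ms,\ell_{01}}$ from \Sec \ref{sec:proposed}.
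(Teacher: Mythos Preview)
Your proposal is correct and follows the same two-inclusion skeleton as the paper: the paper's proof sketch is literally ``(1) $\bar{\As}\subseteq\bar{\Bs}^{c}=\As_{P}$ and (2) $\As_{P}\subseteq\bar{\As}$,'' and the full proof just expands these two lines, citing \Thm~\ref{thm:maximal_resilience} for the first inclusion and the union-definition of $\bar{\As}$ for the second.

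One remark: you are actually more careful than the paper on the point you flag as ``the main obstacle.'' The paper's justification of $\bar{\As}\subseteq\bar{\Bs}^{c}$ (in the discussion preceding \Thm~\ref{thm:maximal_resilience}) silently writes $\bar{\As}=\cup_{\ell\in\Ss}\As_{P_{\Ls,\ell}}$, dropping $P_{\Ms,\ell_{01}}$ from the union even though its own definition of $\Qs_{\Ls,\Ss}$ includes it; the formal proof then just says ``\Thm~\ref{thm:maximal_resilience} provides a theoretical upper bound of $\bar{\As}$'' without revisiting the majority-constrained algorithm. Your observation that one needs the perfectly-attackable half of \Thm~\ref{thm:feasibility_maj} (or a direct adaptation of the \Thm~\ref{thm:maximal_resilience} construction) to close that case is a genuine refinement. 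The only thing I would tighten in your write-up is the parenthetical ``this is also how the hypothesis of the theorem gets discharged'': inside the proof of \Thm~\ref{thm:maximal_resilience_condition} the hypothesis is an assumption, not something to discharge; what you presumably mean is that, at the level of the paper, \Thm~\ref{thm:feasibility_maj} later supplies the witness $P$ that makes the hypothesis true.
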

\vspace{-3ex}
\begin{proof}[proof sketch]
We use the following two set relations to prove $\bar{\As} = \bar{\Bs}^{c}$: (1) $\bar{\As} \subseteq \bar{\Bs}^{c} = \As_{P}$ and (2) $\As_{P} \subseteq \bar{\As}$.
See \Sec \ref{sec:maximal_resilience_condition_proof} for details.
\end{proof}

\noindent
The intuitive interpretation of $\bar{\Bs}$ is that 
if the number of tampered positive or negative feature vectors is greater than or equal to the half of $|\Dhp|$ or $|\Dhn|$, respectively, 
then any linear classification algorithm trained with this training data can be perfectly attackable w.r.t. $\bar{\Bs}$.
We note that in \Sec \ref{sec:proposed} we show $\bar{\Bs}$ is actually the maximal resilience attack condition. Thus, we assume this from now on.
In the following subsections, we show that two classical approaches do not achieve the maximal resilience: (i) convex loss linear classification; and (ii) 0-1 loss linear classification.

\begin{figure}[tb!]
\centering
\begin{subfigure}[c]{0.32\linewidth}
\captionsetup{justification=centering}
\begin{tikzpicture}[scale=0.3]
  \tikzset{
        >=stealth',
        help lines/.style={dashed, thick},
        axis/.style={<->},
        important line/.style={thick},
        connection/.style={thick, dotted},
        }
  \coordinate (y) at (0,5.5);
  \coordinate (x) at (5.5,0);
    
  \path
  coordinate (start) at (0,0)
  coordinate (p1) at (2.5, 5)
  coordinate (p2) at (5, 5)
  coordinate (p3) at (5/3, 2.5)
  coordinate (p4) at (5, 2.5)
  coordinate (p5) at (2.5, 2.5)
  coordinate (x1) at (2.5, 0)
  coordinate (x2) at (5, 0)
  coordinate (x3) at (5/3, 0)
  coordinate (y1) at (0, 2.5)
  coordinate (y2) at (0, 5)
  ;
  
  \begin{axis}[
        xmin=0,xmax=1,
        xlabel={{\fontsize{25}{30}$\frac{\alpha^+}{|\Dhp|}$}},
        xlabel style={font=\Huge},
        xtick={0, 0.5, 1},
        tick label style={font=\Huge},
        ymin=0,ymax=1,
        ylabel={{\fontsize{25}{30}$\frac{\alpha^-}{|\Dhn|}$}},
        ylabel style={font=\Huge, at={(-0.05,0.5)}, rotate=-90},
        ytick={0.5, 1},
        axis on top,
        ]
        legend style={legend cell align=right,legend plot pos=right}] 
        
        \addplot[fill=red, opacity=0.2, color=red, area legend] 
	coordinates {
        		(0, 0) (1, 0) (1, 1) (0, 1)
		} \closedcycle;
	\addplot[draw=black, only marks,mark=*,mark options={scale=1, fill=blue},text mark as node=true, forget plot] 
	coordinates {
		(0, 0)
	};
    \end{axis}
\end{tikzpicture}
\vspace{-4ex}
\caption{{\smaller Convex loss}}
\vspace{-2ex}
\label{fig:conv_perfectly_attackble}
\end{subfigure}
\begin{subfigure}[c]{0.32\linewidth}
\captionsetup{justification=centering}
\centering
\begin{tikzpicture}[scale=0.3]
  \tikzset{
        >=stealth',
        help lines/.style={dashed, thick},
        axis/.style={<->},
        important line/.style={thick},
        connection/.style={thick, dotted},
        }
  \coordinate (y) at (0,5.5);
  \coordinate (x) at (5.5,0);
    
  \path
  coordinate (start) at (0,0)
  coordinate (p1) at (2.5, 5)
  coordinate (p2) at (5, 5)
  coordinate (p3) at (5/3, 2.5)
  coordinate (p4) at (5, 2.5)
  coordinate (p5) at (2.5, 2.5)
  coordinate (x1) at (2.5, 0)
  coordinate (x2) at (5, 0)
  coordinate (x3) at (5/3, 0)
  coordinate (y1) at (0, 2.5)
  coordinate (y2) at (0, 5)
  ;
  
  \begin{axis}[
        xmin=0,xmax=1,
        xlabel={{\fontsize{25}{30}$\frac{\alpha^+}{|\Dhp|}$}},
        xlabel style={font=\Huge},
        xtick={0, 0.5, 1},
        ymin=0,ymax=1,
        ylabel={{\fontsize{25}{30}$\frac{\alpha^-}{|\Dhn|}$}},
        ylabel style={font=\Huge, at={(-0.05,0.5)}, rotate=-90},
        ytick={0.5, 1},
        tick label style={font=\Huge},
        axis on top,
        extra x ticks={0.33},
        extra x tick labels={{\fontsize{20}{24}$\frac{|\Dhn|}{|\Dhp|}$}},
        legend style={legend cell align=right,legend plot pos=right}]

        
        \addplot[fill=blue, opacity=0.2, area legend] 
        coordinates {
        		(0, 0) (0, 0.5) (1/6, 0.5) (1/3, 0)
        } \closedcycle;
        
        \addplot[fill=red, opacity=0.2, color=red, area legend] 
	coordinates {
		(1/3, 0) (1, 0) (1, 1) (0, 1) (0, 0.5) (1/6, 0.5) (1/3, 0)
		} \closedcycle;
	\addplot[color=black]
	coordinates {
		(1/3, 0) (1, 0) (1, 1) (0, 1) (0, 0.5) (1/6, 0.5) (1/3, 0)
	} \closedcycle;
    \end{axis}
\end{tikzpicture}
\vspace{-4ex}
\caption{{\smaller 0-1 loss}}
\vspace{-2ex}
\label{fig:01_perfectly_attackble}
\end{subfigure} 
\begin{subfigure}[c]{0.32\linewidth}
\captionsetup{justification=centering}
\centering
\begin{tikzpicture}[scale=0.3]
  \tikzset{
        >=stealth',
        help lines/.style={dashed, thick},
        axis/.style={<->},
        important line/.style={thick},
        connection/.style={thick, dotted},
        }
  \coordinate (y) at (0,5.5);
  \coordinate (x) at (5.5,0);
    
  \path
  coordinate (start) at (0,0)
  coordinate (p1) at (2.5, 5)
  coordinate (p2) at (5, 5)
  coordinate (p3) at (5/3, 2.5)
  coordinate (p4) at (5, 2.5)
  coordinate (p5) at (2.5, 2.5)
  coordinate (x1) at (2.5, 0)
  coordinate (x2) at (5, 0)
  coordinate (x3) at (5/3, 0)
  coordinate (y1) at (0, 2.5)
  coordinate (y2) at (0, 5)
  ;
  
  \begin{axis}[
        xmin=0,xmax=1,
        xlabel={{\fontsize{25}{30}$\frac{\alpha^+}{|\Dhp|}$}},
        xlabel style={font=\Huge},
        xtick={0, 0.5, 1},
        ymin=0,ymax=1,
        ylabel={{\fontsize{25}{30}$\frac{\alpha^-}{|\Dhn|}$}},
        ylabel style={font=\Huge, at={(-0.05,0.5)}, rotate=-90},
        ytick={0.5, 1},
        tick label style={font=\Huge},
        axis on top,
        legend style={legend cell align=right,legend plot pos=right}] 
        
	\addplot[fill=red, opacity=0.2, area legend] 
	coordinates {
        		(0.5, 0) (1, 0) (1, 1) (0, 1) (0, 0.5) (0.5, 0.5)
		} \closedcycle;
	\addplot[color=black]
	coordinates {
        		(0.5, 0) (1, 0) (1, 1) (0, 1) (0, 0.5) (0.5, 0.5)
		} \closedcycle;
	\addplot[fill=blue, opacity=0.2, area legend] 
	coordinates {
        		(0, 0) (0.5, 0) (0.5, 0.5) (0, 0.5)
		} \closedcycle;
    \end{axis}
\end{tikzpicture}
\vspace{-4ex}
\caption{{\smaller Majority 0-1 loss}}
\vspace{-2ex}
\label{fig:M01_perfectly_attackble}
\end{subfigure}
\caption{Perfectly attackable conditions on $\alpha$ for each linear classification algorithm (colored in red). Assume $|\Dhp| = 75$, and $|\Dhn| = 25$.}
\label{fig:resilience_compare}
\vspace{-3ex}
\end{figure}

\vspace{-1.5ex}
\subsection{Convex Loss Linear Classification} \label{sec:feasibility_conv}
\vspace{-0.5ex}

In this section, the class of convex-loss linear classification algorithms is considered, where it is the collection of $P_{\Ls, \ell_c} \in \Ps_{\Ls, \Ss}$, where $\ell_{c}$ is any convex relaxation of a 0-1 loss function, such as a hinge loss function.
SVMs and a maximum likelihood learning of logistic regression belong to this class.
We prove that any algorithm in this class is perfectly attackable w.r.t. some attack condition where an attacker can tamper at least one feature vector.
Let $\Bs_{P_{\Ls, \ell_c}}$ be the attack condition for the convex-loss linear classification algorithms being perfectly attackable, and then the attack condition is formally stated as follows:
\begin{myproposition} \label{thm:feasibility_conv}
Let $\Bs_{P_{\Ls, \ell_c}}$ be the set of $\alpha$ that satisfies one of the following two conditions:
\eqa{
  \text{(\romannum{1})~} \alpha^+ > 0,~\text{(\romannum{2})~} \alpha^- > 0. \label{eq:A_conv}
}
Then, $P_{\Ls, \ell_c}$ is perfectly attackable w.r.t $\Bs_{P_{\Ls, \ell_c}}$ and resilient w.r.t. $\Bs_{P_{\Ls, \ell_c}}^{c}$.
\end{myproposition}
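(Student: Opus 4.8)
The plan is to handle the two assertions separately; the resilience statement is immediate, and essentially all of the work is in showing $P_{\Ls,\ell_c}$ is perfectly attackable w.r.t.\ $\Bs_{P_{\Ls,\ell_c}}$. Since $\Bs_{P_{\Ls,\ell_c}}^{c}$ is precisely the set of capability parameters under which no feature vector may be altered (so $\Dha=\Dh$ necessarily), and this section restricts to $\Dh$ admitting a classifier of zero $\ell_c$-risk, any minimizer of the empirical $\ell_c$-risk over $\Ls$ also has zero $\ell_c$-risk, hence (using $\phi\ge\mathbbm{1}\{t\le 0\}$ and $\phi(0)=1$) classifies every point of $\Dh$ correctly; therefore $V(P_{\Ls,\ell_c}|\N,\alpha)=0$ and $g_{P_{\Ls,\ell_c}}(\N,\alpha)=0<1$ for every $\alpha\in\Bs_{P_{\Ls,\ell_c}}^{c}$, which is exactly ``resilient w.r.t.\ $\Bs_{P_{\Ls,\ell_c}}^{c}$''.

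For perfect attackability, fix any $\alpha$ with $\alpha^{+}\ge 1$ (the case $\alpha^{-}\ge 1$ being symmetric), and construct one pair $(\Dh,\Dha)$ with $V=1$. Place all points on the $x_1$-axis of $\realnum^p$: put the $|\Dhp|$ positive points at distinct locations in $[0,\tfrac14]$ and the $|\Dhn|$ negative points at distinct locations in $[\tfrac34,1]$, so $\Dh\in\Dsh$ is linearly separable; the \tma~relocates one positive point to $x_1=q$ for a large scalar $q$, which is a valid attack with parameter $\alpha$. The geometry is the crux: $\{\text{positives}\}\cup\{q\}$ cannot be linearly separated from $\{\text{negatives}\}$, so any linear classifier must fail on a whole class; and because $q$ is far away, a convex loss pays an unbounded price to misclassify the outlier but only a bounded one to misclassify an entire bounded cluster at near-zero margin, so the minimizer sacrifices a class.

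The key lemma is: for $q$ large enough, every minimizer $\hat h$ (write $\hat h(t\,e_1)=\hat w\,t+\hat b$) of the empirical $\ell_c$-risk on $\Dha$ misclassifies every positive of $\Dh$ or every negative of $\Dh$, i.e.\ $\|\W\cdot\RH_{\ell_{01}}(\hat h|\Dh)\|_\infty=1$, whence $V(P_{\Ls,\ell_c}|\N,\alpha)=1$. I would argue by contradiction. If $\hat h$ fails neither class fully, there are a positive $a\in[0,\tfrac14]$ and a negative $b\in[\tfrac34,1]$ classified correctly, forcing $\hat w<0$ and $\hat h(a)\ge 0\ge\hat h(b)$. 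First, the competitor $h_w(x)=s(x_1-1)$ with $s=(q-1)^{-1/2}$ has $\ell_c$-risk at most $(|\Dhp|-1)\,\phi(-s)+|\Dhn|+\phi\!\big((q-1)^{1/2}\big)$, which tends to $(N-1)+c$ as $q\to\infty$; so the minimum risk is below $C:=(N-1)+c+\delta_0$, for any fixed $\delta_0\in(0,1-c)$, once $q$ is large. Second, $\hat h$'s loss on the outlier is then $\le C$ while $\hat h(q)\le\hat w(q-b)\le-|\hat w|(q-1)$, so monotonicity of $\phi$ gives $-|\hat w|(q-1)\ge t_*$ (where $\phi(t_*)=C$, $t_*<0$), hence $|\hat w|\le|t_*|/(q-1)$ and then $0\le\hat b\le|t_*|/(q-1)$ from $\hat h(a)\ge 0\ge\hat h(b)$. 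Consequently $|\hat h|\le|t_*|/(q-1)$ on all of $[0,1]$, so each of the $N-1$ surviving original points contributes at least $\phi\!\big(|t_*|/(q-1)\big)$ and, since $\hat h(q)<0$, the outlier contributes at least $\phi(0)=1$; thus $\hat h$'s total risk is at least $(N-1)\,\phi\!\big(|t_*|/(q-1)\big)+1$, which tends to $N$ and hence exceeds $C$ for $q$ large -- contradicting minimality. Since the post-attack data is not even weakly separable, the convex risk is coercive in $(\hat w,\hat b)$, so the $\arg\min$ is nonempty and every element satisfies the lemma; thus $P_{\Ls,\ell_c}$ is perfectly attackable w.r.t.\ $\Bs_{P_{\Ls,\ell_c}}$.

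The main obstacle is exactly the step ruling out the ``almost-constant'' near-boundary classifier: it keeps every original point at near-zero margin and is therefore cheap on $\Dh$, so it cannot be dismissed by a crude margin estimate; one must notice that it is nonetheless forced to place the far outlier at $O(1)$ negative margin, paying an extra $\approx 1-c$ relative to $h_w$, which handles the outlier at infinite margin. That $1-c$ gap -- which is where the normalizations $\phi(0)=1$ and $\lim_{t\to\infty}\phi(t)=c<1$ are used -- is what makes the contradiction close; the remaining ingredients (separability of the chosen $\Dh$, coercivity and attainment of the minimizer, and $\ell_c\ge\ell_{01}$) are routine.
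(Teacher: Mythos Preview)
Your argument is correct, and it follows the same core idea as the paper: the resilience claim is trivial (no tampering means the perfect classifier is returned), and for perfect attackability you push a single positive point to $+\infty$ along a line so that the convex loss on that outlier dominates, forcing any risk minimizer to abandon an entire class of $\Dh$. The paper packages this unbounded-loss fact as a short lemma (for convex $\phi$ with $\phi(0)=1$, the loss on a single point can exceed any $k$ by moving $\x$ far enough) and then compares the six possible equivalence classes of linear separators on a configuration where all positives coincide at one point and all negatives at another.

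The differences are in execution rather than substance. You choose distinct points in $[0,\tfrac14]$ and $[\tfrac34,1]$ and run a quantitative contradiction, exhibiting the explicit competitor $h_w(x)=s(x_1-1)$ with $s=(q-1)^{-1/2}$ and squeezing the would-be ``good'' minimizer between the upper bound $C=(N-1)+c+\delta_0$ and the lower bound $(N-1)\phi(|t_*|/(q-1))+1\to N$. This is heavier but more self-contained: it makes the $1-c$ gap explicit, handles the near-constant classifier carefully, and establishes that \emph{every} minimizer fails a full class (and that minimizers exist, via coercivity of the one-dimensional restriction). The paper's version is shorter because the degenerate two-point configuration collapses the analysis to a finite case check and only asks that \emph{some} minimizer fail a class. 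Either route proves the proposition.
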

\vspace{-3ex}
\begin{proof}[proof sketch]
The idea of ``perfectly attackable'' proof is that 
for all $\N$ and $\alpha \in \Bs_{P_{\Ls, \ell_c}}$ we find some $\Dh$ and $\Dha$ where $V(P_{\Ls, \ell_{c}}(\Dha) \allowbreak | \N, \alpha) = 1$.
The ``resilient'' proof is trivial. 
See \Sec \ref{sec:feasibility_conv_proof} for details.
\end{proof}

\noindent 
This implies even though an attacker has weak ability to tamper training data, it can make the algorithm misclassify all positive or all negative feature vectors of un-attacked training data by tampering only one positive or negative feature vector (See \figref{fig:conv_perfectly_attackble} for the visualization of the perfectly attackable condition on $\alpha$).
For example, data-driven CPS that use SVMs to train intrusion detectors \cite{paridari2016cyber} can be vulnerable if an attacker can tamper at least one feature vector.
We note that convex-loss linear classification algorithms are not maximally resilient since $\Bs_{P_{\Ls, \ell_c}}^c \subset \bar{\Bs}^c$.


\vspace{-1.5ex}
\subsection{0-1 Loss Linear Classification} \label{sec:feasibility_01}
\vspace{-0.5ex}

A 0-1 loss linear classification algorithm is defined as $P_{\Ls, \ell_{01}} \allowbreak \in \Ps_{\Ls, \Ss}$, where
$\ell_{01}(\cdot)$ is a 0-1 loss function. 
We prove that 
the 0-1 loss linear classification algorithm is perfectly attackable w.r.t. some attack condition where
the number of tampered positive or negative feature vectors is greater than or equal to the half of $|\Dhp|$ or $|\Dhn|$, respectively, or 
the sum of the number of tampered positive feature vectors and the number of tampered negative feature vectors is greater than or equal to $|\Dhn|$ or $|\Dhp|$.
Let $\Bs_{P_{\Ls, \ell_{01}}}$ be the attack condition for the 0-1 loss linear classification algorithm being perfectly attackable, and then the attack condition is formally stated as follows:
\begin{myproposition} \label{thm:feasibility_01}
Given $| \Dhp |$ and $| \Dhn |$, 
let $\Bs_{P_{\Ls, \ell_{01}}}$ be the set of $\alpha$ that satisfies one of the following four conditions:
\eqa{
	\text{(\romannum{1})~}& \alpha^+ \geq \frac{1}{2} |\Dhp|,  & \text{(\romannum{2})~}& \alpha^- \geq \frac{1}{2} |\Dhn|, \nonumber \\
	\text{(\romannum{3})~}& \alpha^+ + \alpha^- \geq |\Dhn|, & \text{(\romannum{4})~}& \alpha^+ + \alpha^- \geq |\Dhp|. \label{eq:attackable_condition_01} 
}
Then, $P_{\Ls, \ell_{01}}$ is perfectly attackable w.r.t. $\Bs_{P_{\Ls, \ell_{01}}}$.
\end{myproposition}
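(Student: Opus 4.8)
My plan is to handle the four conditions of \eqnref{eq:attackable_condition_01} in two groups. Conditions (i) and (ii) together are exactly the set $\bar{\Bs}$ of \eqnref{eq:maximual_resilience_condition}; since the $0$-$1$ loss lies in $\Ss$ (it equals its own lower bound $\mathbbm{1}\{t\le 0\}$, satisfies $\phi(0)=1$, is monotonically non-increasing, and tends to $0<1$), \thmref{thm:maximal_resilience} applies to $P_{\Ls,\ell_{01}}$ and already gives $V(P_{\Ls,\ell_{01}}\,|\,\N,\alpha)=1$ whenever $\alpha$ obeys (i) or (ii). Since \defref{def:threat_model} and the resilience metric are invariant under swapping the positive and negative data (and, with them, $\alpha^+$ and $\alpha^-$), condition (iv) reduces to condition (iii). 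It therefore remains to show: for every $\alpha$ with $\alpha^++\alpha^-\ge|\Dhn|$ there exist admissible $\Dh$ and $\Dha$ for which the trained classifier misclassifies \emph{every} negative feature vector of $\Dh$; this makes $\Rh_{\ell_{01}}(\cdot\,|\,\Dhn)=1$, so the $\infty$-norm in \eqnref{eq:worst_classifier_measure} attains its maximal value and $V(P_{\Ls,\ell_{01}}\,|\,\N,\alpha)=1$.

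For the witness I would let $\Dh$ consist of $|\Dhp|$ positive pairs all sharing the feature vector $\a$ and $|\Dhn|$ negative pairs all sharing the feature vector $\b\ne\a$ (viewed as a multiset; if $\Dh$ must literally be a set, pack each group densely onto a short segment with the two labels interleaved inside each flooded group, so that no linear classifier can separate them there). A perfect linear separator obviously exists. Let $\Dha$ relocate $\alpha^+$ of the positive points onto $\b$ and $\alpha^-$ of the negative points onto $\a$; this changes at most $\alpha^+$ positive and at most $\alpha^-$ negative feature vectors, so $\Dha$ is a valid \tma~by \defref{def:threat_model}. The tampered data is now supported on the two points $\a$ and $\b$: at $\a$ there are $|\Dhp|-\alpha^+$ positives and $\alpha^-$ negatives, while at $\b$ there are $\alpha^+$ positives and $|\Dhn|-\alpha^-$ negatives.

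The crux is that a linear classifier can realize all four sign patterns on two distinct points, so the $0$-$1$ empirical risk $P_{\Ls,\ell_{01}}$ minimizes over $\Dha$ decomposes into a term that depends only on the sign assigned to $\a$ plus a term that depends only on the sign assigned to $\b$; hence a risk minimizer may be chosen that fixes each of these two signs so as to minimize the loss it contributes. Condition (iii), i.e. $\alpha^+\ge|\Dhn|-\alpha^-$, says precisely that assigning $+1$ to $\b$ is (weakly) at least as cheap as assigning $-1$ to it, so $\hh:=P_{\Ls,\ell_{01}}(\Dha)$ can be taken with $\sign \hh(\b)=+1$; since every original negative pair sits at $\b$, this $\hh$ mislabels all of them and $V(P_{\Ls,\ell_{01}}\,|\,\N,\alpha)=1$. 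The mirror construction for (iv) keeps the same $\Dh$ and relocation and reads off from $\alpha^++\alpha^-\ge|\Dhp|$ that assigning $-1$ to $\a$ is at least as cheap, so the output mislabels every original positive. I expect the part needing the most care to be exactly this separability of the $0$-$1$ objective across the two flooded clusters -- that is, ruling out that some hyperplane could ``resolve'' a mixed cluster and do strictly better, which is why each group is collapsed to a single point -- together with the boundary instances $\alpha^++\alpha^-\in\{|\Dhn|,|\Dhp|\}$, where the relevant sign is a tie; these I would settle by the convention, already implicit in reading \eqnref{eq:worst_classifier_measure} as $V=1$ whenever $P$ \emph{can} output a classifier that misclassifies an entire class, that the $\arg\min$ in \eqnref{eq:classifier} is permitted to return the adversarially worst risk minimizer.
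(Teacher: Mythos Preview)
Your proposal is correct. For conditions (i) and (ii) you invoke \thmref{thm:maximal_resilience}, which is exactly what the paper does as well: its proof for these two cases reuses the same two-point-plus-one-displaced-cluster construction of \figref{fig:01_necessary_proof_case_1}.

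For conditions (iii) and (iv) you take a genuinely different and somewhat cleaner route. The paper builds a \emph{three}-location witness (\figref{fig:01_necessary_proof_case_3}): it moves $\alpha^{+}$ positives onto the negative cluster and sends $\alpha^{-}$ negatives to a fresh third point, then argues that a particular separator $h_{3}$ (classifying the original negative location as positive) has empirical risk $|\Dhn|-\alpha^{-}$, which under $\alpha^{+}+\alpha^{-}\ge |\Dhn|$ is no larger than the risks of the competing separators $h_{1}$ and $h_{2}$; the remaining sign patterns must then be checked implicitly. Your construction instead keeps only \emph{two} locations, swapping $\alpha^{+}$ positives and $\alpha^{-}$ negatives between them. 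Because any linear classifier realizes all four sign patterns on two distinct points, the $0$-$1$ risk over $\Dha$ decomposes additively across $\a$ and $\b$, and the inequality $\alpha^{+}\ge |\Dhn|-\alpha^{-}$ immediately forces an optimal sign of $+1$ at $\b$; no case analysis over additional separators is needed. The paper's three-point layout buys nothing extra here, so your version is a strict simplification. Your handling of ties via the worst-case $\arg\min$ convention matches the paper's usage (it likewise writes ``$\hha$ \emph{can} be optimal'' and ``$\exists\,\hha$'' in the formal statement).
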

\vspace{-3ex}
\begin{proof}[proof sketch]
For all $\N$ and $\alpha \in \Bs_{P_{\Ls, \ell_{01}}}$ we find some $\Dh$ and $\Dha$ where $V(P_{\Ls, \ell_{01}}(\Dha) | \N, \alpha) = 1$.
See \Sec \ref{sec:feasibility_01_proof} for details.
\end{proof}

\noindent
This proposition implies the \zo loss linear classification is strictly more resilient than convex one (See \figref{fig:01_perfectly_attackble} for comparison). Thus, different to the convex case, tampering single feature vector is not critical for the \zo loss linear classification. 
This means any CPS using convex linear classification algorithms  \cite{paridari2016cyber,chen2015deepdriving,seo2014predicting} can be converted into the \zo linear classification algorithm to defend against the single feature vector tampering; 
however, neither approach can provide maximal resilience due to $\Bs_{P_{\Ls, \ell_{01}}}^c \subset \bar{\Bs}^c$.


\vspace{-1ex}
\section{Resilient Linear Classification} \label{sec:proposed}
In this section, we propose a maximally resilient linear classification algorithm.
A majority $0$-$1$ loss linear classification is defined as $P_{\Ms, \ell_{01}} \in \Ps_{\Ls, \Ss}$, where $\Ms$ denotes a majority constraint that restricts a feasible set of classifiers by only allowing a classifier that correctly classifies at least half of positive and negative feature vectors, according to
\eqa{
\!\!\!\Ms\!=\!\left\{ h\!\in\!\Ls \middle| \Rh_{\ell_{01}}(h|\Dhap)\!<\!\frac{1}{2} \text{~and~} \Rh_{\ell_{01}}(h|\Dhan)\!<\!\frac{1}{2} \right\}\!. \!\!
}
In the following subsections, the resilience proof and the worst-case resilience bound of the majority 0-1 classification are provided.

\vspace{-1.5ex}
\subsection{Resilience of Majority 0-1 Loss Linear Classification}
\vspace{-0.5ex}

The majority 0-1 loss linear classification is perfectly attackable w.r.t. some attack condition where an attacker can manipulate greater than or equal to the half of $|\Dhp|$ or $|\Dhn|$. 
Let $\Bs_{P_{\Ms, \ell_{01}}}$ be the attack condition for the majority 0-1-loss linear classification algorithms being perfectly attackable, and then the attack condition is formally stated as follows:
\begin{mytheorem} \label{thm:feasibility_maj}
Given $| \Dhp |$ and $| \Dhn |$, 
let $\Bs_{P_{\Ms, \ell_{01}}}$ be
\eqa{
\left\{ \alpha \middle| \alpha^+ \geq \frac{1}{2} |\Dhp| \text{~or~} \alpha^- \geq \frac{1}{2} |\Dhn| \right\}. 
}
Then, $P_{\Ms, \lzo}$ is perfectly attackable w.r.t. $\Bs_{P_{\Ms, \ell_{01}}}$ and resilient w.r.t. $\Bs_{P_{\Ms, \ell_{01}}}^{c}$.
\end{mytheorem}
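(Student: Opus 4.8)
My plan is to prove the two halves separately — resilience on $\Bs_{P_{\Ms,\ell_{01}}}^{c}$ and perfect attackability on $\Bs_{P_{\Ms,\ell_{01}}}$ — reading off a crude resilience bound $g$ along the way and leaving the tight bound to the following subsection.

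\textbf{Resilience on $\Bs_{P_{\Ms,\ell_{01}}}^{c}$.} Fix $\alpha$ with $\alpha^{+}<\tfrac{1}{2}|\Dhp|$ and $\alpha^{-}<\tfrac{1}{2}|\Dhn|$, an arbitrary $\Dh$ with a perfect linear classifier $h_{0}$, and an arbitrary \tma~$\Dha\in\Dsh_{\alpha}$. First I would note the constrained program is feasible: at most $\alpha^{+}$ pairs of $\Dhap$ (resp.\ $\alpha^{-}$ of $\Dhan$) differ from $\Dhp$ (resp.\ $\Dhn$) and $h_{0}$ is correct on all of $\Dhp$ and $\Dhn$, so $\Rh_{\ell_{01}}(h_{0}\mid\Dhap)\le\alpha^{+}/|\Dhp|<\tfrac{1}{2}$ and likewise for $\Dhan$, giving $h_{0}\in\Ms$; hence $h:=P_{\Ms,\ell_{01}}(\Dha)$ exists and lies in $\Ms$. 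Since $h\in\Ms$ it classifies strictly more than $\tfrac{1}{2}|\Dhap|=\tfrac{1}{2}|\Dhp|$ points of $\Dhap$ correctly; at most $\alpha^{+}$ of these are tampered, so $h$ is correct on strictly more than $\tfrac{1}{2}|\Dhp|-\alpha^{+}$ untampered positives — a positive integer since $\alpha^{+}<\tfrac{1}{2}|\Dhp|$ — and these lie in $\Dhp$. Thus $\Rh_{\ell_{01}}(h\mid\Dhp)\le 1-1/|\Dhp|<1$, symmetrically $\Rh_{\ell_{01}}(h\mid\Dhn)<1$, and therefore $V(P_{\Ms,\ell_{01}}\mid\N,\alpha)\le 1-1/\max(|\Dhp|,|\Dhn|)<1$; taking $g(\N,\alpha)$ to be this value (sharpened in the next subsection) certifies resilience w.r.t.\ $\Bs_{P_{\Ms,\ell_{01}}}^{c}$.

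\textbf{Perfect attackability on $\Bs_{P_{\Ms,\ell_{01}}}$.} Fix $\alpha\in\Bs_{P_{\Ms,\ell_{01}}}$; by positive/negative symmetry assume $\alpha^{+}\geq\tfrac{1}{2}|\Dhp|$. For each $\N$ I would exhibit a linearly separable $\Dh$ and a \tma~$\Dha$ forcing every feasible classifier to misclassify all of $\Dhp$. Put the positives of $\Dh$ in a tight cluster at a point $\a$ and all negatives in a tight cluster at a point $\b$ (so a perfect linear classifier exists), and let the attacker move $\alpha^{+}$ positive feature vectors into a tight cluster at a point $\c$ collinear with $\a,\b$ and on the opposite side of $\b$ from $\a$, so that $\b$ is the midpoint of $\a$ and $\c$. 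Any $h\in\Ms$ must classify a majority of $\Dhan$ correctly, forcing $h(\b)<0$, and a strict majority of $\Dhap$ correctly; a classifier not positive on the $\c$-cluster gets at most $|\Dhp|-\alpha^{+}\leq\tfrac{1}{2}|\Dhp|$ positives right, so $h(\c)>0$. But on the line through $\a,\b,\c$ the classifier $h$ is affine, and $h(\b)<0$ at the midpoint is incompatible with $h(\a)>0$ and $h(\c)>0$ simultaneously; with $h(\c)>0$ this forces $h(\a)<0$, i.e.\ $h$ misclassifies the whole $\a$-cluster, which is all of $\Dhp$. Hence $\Rh_{\ell_{01}}(h\mid\Dhp)=1$ for every $h\in\Ms$, so if $\Ms\neq\emptyset$ (which holds when $\alpha^{+}>\tfrac{1}{2}|\Dhp|$) the output $P_{\Ms,\ell_{01}}(\Dha)$ has this property and $V=1$; and when $\alpha^{+}=\tfrac{1}{2}|\Dhp|$ (possible only for $|\Dhp|$ even) one checks $\Ms=\emptyset$, so the infeasible program is taken to attain $V=1$. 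The clusters are realized by finitely many distinct points so that $\Dha\in\Dsh_{\alpha}$, the inequalities holding in the limit.

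\textbf{Expected main obstacle.} The resilient half is a short counting argument. The delicate part is the attack construction: keeping $\Dh$ linearly separable while placing the relocated features so that the majority constraint is either unsatisfiable or satisfiable only by classifiers misclassifying an entire class, and then verifying the constrained $0$-$1$ loss minimizer is such a classifier — which needs care with the geometry (collinearity, the convex-hull position of $\c$) and with parity and small-$N$ corner cases. Since $\Bs_{P_{\Ms,\ell_{01}}}$ coincides with the set $\bar{\Bs}$ of \thmref{thm:maximal_resilience}, this construction can be specialized from the one proving that theorem.
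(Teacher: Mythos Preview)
Your proposal is correct and follows essentially the same approach as the paper. The resilience half is identical in spirit: both you and the paper show $h^{*}\in\Ms$ via the counting bound $\Rh_{\ell_{01}}(h^{*}\mid\Dhap)\le\alpha^{+}/|\Dhp|<\tfrac{1}{2}$, then use the majority constraint plus the same ``risk changes by at most $\alpha^{+}/|\Dhp|$'' observation (the paper's \lemref{lem:max_risk_gap}) to conclude $\Rh_{\ell_{01}}(\hha\mid\Dhp)<1$. For the attack half you use the same three-collinear-clusters construction as the paper (which simply recycles the $P_{\Ls,\ell_{01}}$ construction of \figref{fig:01_necessary_proof_case_1}), but your argument is organized a little differently: the paper shows the specific classifier $h_{2}$ lies in $\Ms$ and is an optimizer, whereas you argue directly that \emph{every} $h\in\Ms$ must have $h(\c)>0$ and $h(\b)<0$, and then use the midpoint/affineness to force $h(\a)<0$. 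This is a mild strengthening and also lets you handle the boundary case $\alpha^{+}=\tfrac{1}{2}|\Dhp|$ (where $\Ms=\emptyset$) explicitly, something the paper's proof covers only by allowing ``infeasible'' as an outcome in its formal statement without checking it. One small simplification: the paper works with multisets, so you can place all points of each cluster at exactly $\a,\b,\c$ rather than invoking a limiting ``tight cluster'' argument.
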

\vspace{-3ex}
\begin{proof}[proof sketch]
The ideal of ``perfectly attackable'' proof is that
for all $\N$ and $\alpha \in \Bs_{P_{\Ms, \ell_{01}}}$ we find some $\Dh$ and $\Dha$ where $V(P_{\Ms, \ell_{01}}\allowbreak(\Dha)  | \N, \alpha) = 1$.
For the ``resilient'' proof, we exploit the property of the majority constraint.
See \Sec \ref{sec:feasibility_maj_proof} for details.
\end{proof}
\noindent
This result shows that the majority \zo loss linear classification algorithm is more resilient than traditional linear classification algorithms, which is also illustrated in \figref{fig:M01_perfectly_attackble}.
Furthermore, it achieves the maximal resilience condition (\Thm \ref{thm:maximal_resilience}) due to $\Bs_{P_{\Ms, \ell_{01}}} = \bar{\Bs}$, showing this algorithm achieves the maximal resilience attack condition.

\vspace{-1.5ex}
\subsection{Robustness of Resilient Classification} \label{sec:robustness_proof_maj}
\vspace{-0.5ex}


If a classification algorithm is resilient, it is worth analyzing the degree of resilience.
If $\alpha \in \As_{P_{\Ms, \ell_{01}}}$, where $\As_{P_{\Ms, \ell_{01}}} = \Bs_{P_{\Ms, \ell_{01}}}^{c}$, then the worst-case resilience bound $g$ of the majority \zo loss classification algorithm is nearly proportional to the tampering ability of an attacker, which is formally stated as follows:
%
\begin{mytheorem} \label{thm:robustness_maj}
Given $|\Dhp|$, $|\Dhn|$, and $\alpha \in \As_{P_{\Ms, \ell_{01}}}$, the resilience bound $g$ of $P_{\Ms, \ell_{01}}$ can be computed as follows:
\vspace{-1ex}
	\eqa{
		\!\!\!\!g(\N, \alpha) \! = \! \max \!
			\Bigg(
  			&\frac{\min\left( 2\alpha^+ + \alpha^-, \alpha^+ + \frac{|\Dhp| - 1}{2}\right)}{|\Dhp|},  \nonumber
			 \\ 
			&\frac{\min\left( \alpha^+ + 2\alpha^-, \alpha^- + \frac{|\Dhn| - 1}{2}\right)}{|\Dhn|}
			\Bigg). \label{eq:robustness_bound}
	}
\end{mytheorem}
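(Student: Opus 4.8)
The goal is to compute the exact worst-case resilience bound $g(\N,\alpha)$ for the majority $0$-$1$ loss linear classifier $P_{\Ms,\ell_{01}}$ on the regime $\alpha \in \As_{P_{\Ms,\ell_{01}}} = \Bs_{P_{\Ms,\ell_{01}}}^c$, i.e. $\alpha^+ < \tfrac{1}{2}|\Dhp|$ and $\alpha^- < \tfrac{1}{2}|\Dhn|$. I will prove the two directions separately: (a) an \emph{upper bound}, $V(P_{\Ms,\ell_{01}} \mid \N,\alpha) \le g(\N,\alpha)$ for every choice of $\Dh$ and every \tma~$\Dha$; and (b) a matching \emph{lower bound}, exhibiting specific $\Dh$ and $\Dha$ that attain it, so that the $\max$ over $\Dh,\Dha$ in the resilience metric is exactly this value. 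By symmetry in the $(+,-)$ roles, it suffices to analyze the positive-error rate $\Rh_{\ell_{01}}(P_{\Ms,\ell_{01}}(\Dha)\mid \Dhp)$ — the false-negative rate — and then take the $\max$ with the symmetric negative term; I will do the positive side in detail and invoke symmetry for the other.

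\textbf{Upper bound.} Let $\hh = P_{\Ms,\ell_{01}}(\Dha)$ be any classifier output on the tampered data. Since $\hh \in \Ms$ with respect to $\Dha$, the majority constraint forces $\Rh_{\ell_{01}}(\hh \mid \Dhap) < \tfrac12$, i.e. $\hh$ misclassifies fewer than $\tfrac12 |\Dhap| = \tfrac12|\Dhp|$ of the tampered positive points. Now I bound how many \emph{original} positive points $\hh$ can misclassify. The original positive set $\Dhp$ and the tampered one $\Dhap$ differ in at most $\alpha^+$ elements (by \Def\ref{def:threat_model}), and $\Dhap$ may also contain points that were originally negative but relabeled — but all that matters is: of the $|\Dhp|$ original positives, at least $|\Dhp| - \alpha^+$ are \emph{untouched} and appear in $\Dhap$ as genuine positives. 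On those untouched points $\hh$ makes fewer than $\tfrac12|\Dhp| - (\text{number of tampered-in positives among the }\Dhap\text{ misclassifications})$... this is the delicate counting. More carefully: write $m = $ number of original positives that $\hh$ misclassifies. Of these $m$ points, some are untouched (hence lie in $\Dhap$) and some were tampered. The untouched-and-misclassified ones all count toward $\Rh_{\ell_{01}}(\hh\mid\Dhap)|\Dhp| < \tfrac12|\Dhp|$, so (untouched misclassified) $< \tfrac12|\Dhp|$; combined with (tampered) $\le \alpha^+$ this gives $m < \tfrac12|\Dhp| + \alpha^+$, hence $m \le \alpha^+ + \tfrac{|\Dhp|-1}{2}$. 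That yields the second term in the inner $\min$. The \emph{first} term, $2\alpha^+ + \alpha^-$, comes from a different, tighter argument in the small-$\alpha$ regime: since the un-attacked data is perfectly linearly separable and the attacker only moves $\alpha^+ + \alpha^-$ feature vectors, one shows that any $\hh\in\Ms$ (for $\Dha$) that survives must still correctly classify a large chunk — essentially, the intact positives that "outvote" the tampered mass — and counting how many original positives can be flipped while keeping the tampered data majority-consistent gives $m \le 2\alpha^+ + \alpha^-$. I would make this precise by considering the separating hyperplane for $\Dh$, the at most $\alpha^+$ positives the attacker deleted/moved, and the at most $\alpha^-$ negatives the attacker moved into positive territory, and tracking which side of $\hh$ each intact point can be forced onto.

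\textbf{Lower bound (construction).} For the matching construction I place the original data in $\realnum^1$ (or $\realnum^2$ with a dummy coordinate) so linear classifiers are thresholds, and carefully cluster the $|\Dhp|$ positives and $|\Dhn|$ negatives so that after the attacker converts $\alpha^+$ positives and $\alpha^-$ negatives into adversarially-placed points, there is a threshold $\hh$ which (i) satisfies the majority constraint on $\Dha$ — correctly classifying strictly more than half of $\Dhap$ and of $\Dhan$ — yet (ii) is forced to be chosen by the $\arg\min$ over $\Ms$ because every majority-feasible classifier misclassifies at least $\min(2\alpha^+ + \alpha^-,\ \alpha^+ + \tfrac{|\Dhp|-1}{2})$ of the original positives. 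Achieving (ii) — showing the attacker can make the \emph{best} feasible classifier this bad, not just some feasible classifier — is the main obstacle, and I expect to handle it by a packing argument: arrange clusters on the line so that any threshold either violates majority-feasibility on $\Dha$ or lies in a narrow band all of whose thresholds misclassify the target number of original positives. Finally I take $\max$ with the symmetric negative-side construction and divide by $|\Dhp|$, $|\Dhn|$ respectively to get error rates, matching \eqnref{eq:robustness_bound}. The tedious part is verifying the two regimes of the inner $\min$ switch over exactly at the right $\alpha$, which I would check by a direct case split on whether $\alpha^+ + \alpha^- \lessgtr \tfrac{|\Dhp|-1}{2} - \alpha^+ $ (and symmetrically), deferring the arithmetic to the appendix.
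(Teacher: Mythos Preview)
Your overall structure---upper bound then matching construction, treat the positive side and invoke symmetry---matches the paper, and your derivation of the second term $\alpha^+ + \tfrac{|\Dhp|-1}{2}$ from the majority constraint plus the at-most-$\alpha^+$ discrepancy between $\Dhp$ and $\Dhap$ is correct and is essentially the paper's argument (their \lemref{lem:max_risk_gap} combined with the constraint $\Rb_{\ell_{01}}(\hha\mid\Dhap)\le \tfrac{|\Dhp|-1}{2}$).

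The genuine gap is in your argument for the first term $2\alpha^+ + \alpha^-$. You claim that ``any $\hh\in\Ms$ (for $\Dha$) that survives'' satisfies this bound, because intact positives ``outvote'' the tampered mass. That is false: membership in $\Ms$ alone only gives $\Rb_{\ell_{01}}(\hh\mid\Dhap) < \tfrac12|\Dhp|$, which is exactly the second term and nothing tighter. The first term is strictly smaller in the small-$\alpha$ regime and cannot be extracted from the majority constraint by itself---it requires that $\hha$ is the \emph{empirical risk minimizer} on $\Dha$. The paper's move is: let $h^*$ be a perfect linear classifier for the clean data $\Dh$ (which exists by the standing assumption). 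Since $h^*$ can misclassify only the $\le \alpha^+ + \alpha^-$ tampered points of $\Dha$, one has $\Rb_{\ell_{01}}(h^*\mid\Dha) \le \alpha^+ + \alpha^-$, and then by optimality of $\hha$ on $\Dha$,
\[
\Rb_{\ell_{01}}(\hha\mid\Dhap) \;\le\; \Rb_{\ell_{01}}(\hha\mid\Dhap) + \Rb_{\ell_{01}}(\hha\mid\Dhan) \;\le\; \Rb_{\ell_{01}}(h^*\mid\Dha) \;\le\; \alpha^+ + \alpha^-,
\]
whence $\Rb_{\ell_{01}}(\hha\mid\Dhp) \le \Rb_{\ell_{01}}(\hha\mid\Dhap) + \alpha^+ \le 2\alpha^+ + \alpha^-$. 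Your sketch mentions ``the separating hyperplane for $\Dh$'' but never invokes the $\arg\min$ comparison against it; without that step the upper bound collapses to the second term only and the inner $\min$ cannot be recovered.

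For the tightness construction, your one-dimensional threshold plan is close in spirit to the paper's: they place the clean data at two or three collinear points and the tampered data at five, then enumerate the finitely many linear separators, checking for each whether it is majority-feasible and what its risk on $\Dha$ is. This explicit enumeration is exactly how they resolve what you flag as the main obstacle---showing the \emph{minimum-risk} feasible classifier (not just some feasible one) attains the bound. They also split on whether $\alpha^+ + \alpha^- \le \tfrac{|\Dhp|-1}{2}$ to decide which branch of the $\min$ is active, just as you anticipate.
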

\vspace{-3ex}
\begin{proof}[proof sketch]
To prove $V(\cdot)$ is bounded by $g(\cdot)$ for all $\N$ and $\alpha \in \As_{P_{\Ms, \ell_{01}}}$, we exploit the optimality condition of an optimal classifier $P_{\Ms, \ell_{01}}\allowbreak(\Dha)$ and the property of the majority constraint. 
To prove that the bound is tight for all $\N$ and $\alpha \in \As_{P_{\Ms, \ell_{01}}}$, we find some $\Dh$ and $\Dha$ where $V(\cdot) = g(\cdot)$.
See \Sec \ref{sec:robustness_maj_proof} for details.
\end{proof}

\noindent
This theorem shows that if $\alpha \in \As_{P_{\Ms, \ell_{01}}}$, the resilience bound is non-trivial.
Also, it shows that even if the attacker capability parameter $\alpha$ is restricted (\ie $\alpha \in \As_{P_{\Ms, \ell_{01}}}$) to ensure that the algorithm is resilient w.r.t. $\As_{P_{\Ms, \ell_{01}}}$, the tampered portion of training data still affects on the accuracy of the algorithm. 
Finally, we note that the resilience bound $g$ is tight. 
 

\vspace{-1ex}
\section{Case Study} \label{sec:exp}


In this section, we validate the proven resilience of algorithms experimentally.
Qualitative results on synthetic data are presented in \figref{fig:exp_qualitative} and results on a real-world retrospective arrhythmia data are shown in \tabref{tab:exp_qualitative}.

The majority \zo loss linear classification algorithm is formulated in the following mixed integer linear program (MILP).
\begin{align*}
	\min_{\h, \e, \z}~&~ \bm{1}^\top \z + \lambda \| \h \|_2 \\
	\text{s.t.}
	~&~ \forall i, e_i \geq 1 - y_i \h^\top \x_i, 
	~     -\delta \z \leq \e \leq \delta \z \\
	~&~ \bm{1}_+^{\top} \z \leq \frac{1}{2} (|\Dhap| - 1),
	~\bm{1}_-^{\top} \z \leq \frac{1}{2} (|\Dhan| - 1),
\end{align*}
where 
$(\x_i, y_i)$ is an $i$th training data pair, 
$\h \in \realnum^p$ is a real-valued classifier, 
$\e \in \realnum^{|\Dha|}$ denotes a scaled classification error,
$\z \in \{0, 1\}^{|\Dha|}$ is a vector that indicates misclassification of each training data pair,
$\lambda$ is a regularization constant, set to zero, and
$\delta$ is a sufficiently large positive constant, where $\delta = 10^3$.
$\bm{1}_+$ and $\bm{1}_-$ represent vectors where a $j$th element is filled with one if $y_j = +1$ and $y_j = -1$, respectively, and zeros elsewhere.
We note that the \zo loss linear classification algorithm is formulated in the same way to the above MILP except for the last two constraints (See \Sec \ref{sec:milp_proof}), related to the majority constraint, and we adopt a standard SVMs formulation \cite{cortes1995support} without a regularization term for fair comparison.
Theoretically, the performance of the \zo loss linear classification algorithm is as good as that of the convex loss linear classification algorithms \cite{bartlett2006convexity}. If there are no attack and no error, the \zo loss linear classification algorithm is same as the majority \zo loss linear classification algorithm since the last two constraints of MILP are not activated if there are no attacks and no error. 

In experiments, we consider two types of attacks: a \emph{point attack} and an \emph{overlap attack}, which are concrete instances of a \tma. 
The point attack is an attack that manipulates a single feature vector to be located far from the training data as illustrated in \figref{fig:exp_qualitative}. 
The attacked single feature vector is chosen and tampered as follows. 
Let $\alpha^+ = 1$, and $\bar{\x}^+$ and $\bar{\x}^-$ be the mean of positive and negative feature vectors, respectively.
Any positive feature vector is chosen and replaced to a scaled vector $\sigma \x$ where
the scaled vector is on the half-line from $\bar{\x}^+$ to the direction of $\bar{\x}^- - \bar{\x}^+$, and the scale value $\sigma$ is a sufficiently large scalar.

The overlap attack is an attack that manipulates positive and/or negative feature vectors to be overlapped negative and/or positive feature vectors, respectively, as illustrated in \figref{fig:exp_qualitative}.
The overlap attack is briefly described as follows: 
when $\alpha = (\alpha^+, \alpha^-)$, $\alpha^+$ and $\alpha^-$ number of positive and negative feature vectors are randomly chosen for tampering, respectively. The chosen positive and negative feature vectors are randomly overlapped to negative and positive un-attacked feature vectors, respectively. These steps are repeated until a target classification algorithm achieves a maximum desired resilience value $V(\cdot)$. 

\begin{figure}[tb!]
\centering
\includegraphics[width=0.75\linewidth]{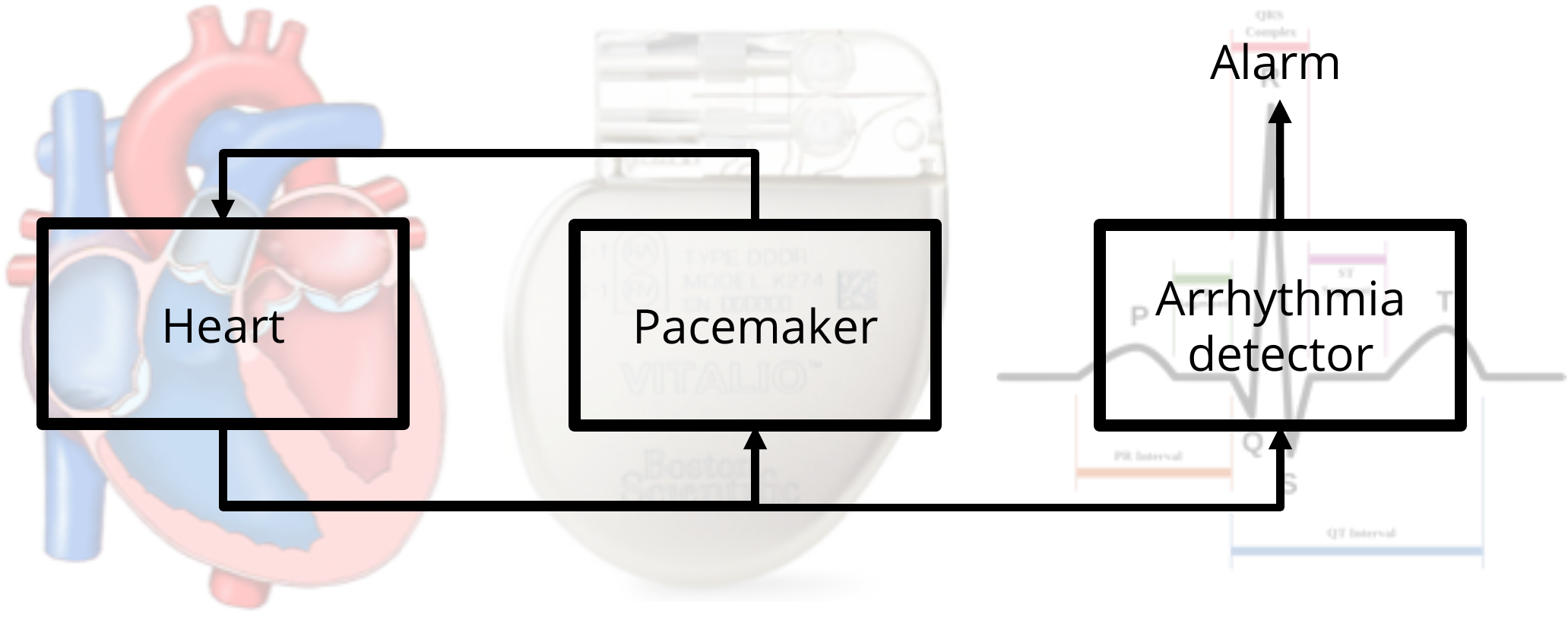}
\vspace{-1.5ex}
\caption{Pacemaker with an Arrhythmia detector. }
\vspace{-4ex}
\label{fig:case_study}
\end{figure}

\begin{table}[tb!]
\centering
\begin{tabular}{M{0.15\textwidth}||M{0.08\textwidth}M{0.08\textwidth}M{0.08\textwidth}}
	& \multicolumn{3}{c}{Approach} \\ \cline{2-4}
	\multirow{-2}{*}{Attack Type} & 
			SVMs & 
			$0$-$1$ & 
			$0$-$1$ with majority \\ \toprule
	No Attack \footnotesize{$(\alpha^+,\alpha^-) = (0, 0)$} & 
			${0.0}$ & 
			${0.0}$ & 
			${0.0}$    
			\\ \hline
	Point Attack {\footnotesize$(\alpha^+,\alpha^-) = (0, 1)$}&
			{\color{red}$\mathbf{1.0}$} & 
			${0.0270}$ & 
			${0.0270}$    
			\\ \hline
	Overlap Attack {\footnotesize$(\alpha^+,\alpha^-) = (16, 21)$}&
			{\color{red}$\mathbf{1.0}$} & 
			{\color{red}$\mathbf{1.0}$} & 
			${0.5946}$    
			\\ \bottomrule
\end{tabular}
\caption{The resilience metric $V(\cdot)$ of each linear classification algorithm for Arrhythmia detection {\protect\cite{guvenir1997supervised}} under the specified attacks.}
\label{tab:exp_qualitative}
\vspace{-5ex}
\end{table}

\begin{figure*}[t!]
\centering
\begin{tabular}{M{0.1\textwidth}|M{0.2\textwidth}||M{0.2\textwidth}M{0.2\textwidth}M{0.2\textwidth}}
	\toprule
	 & & \multicolumn{3}{c}{Approach} \\ \cline{3-5}
	\multirow{-2}{*}{Attack Type} & \multirow{-2}{*}{Training Data} & 
			SVMs & 
			0-1 & 
			0-1 with majority\\ \toprule
	No Attack {\small$(\alpha^+,\alpha^-) = (0, 0)$} & 
			\includegraphics[height=2.3cm,keepaspectratio]{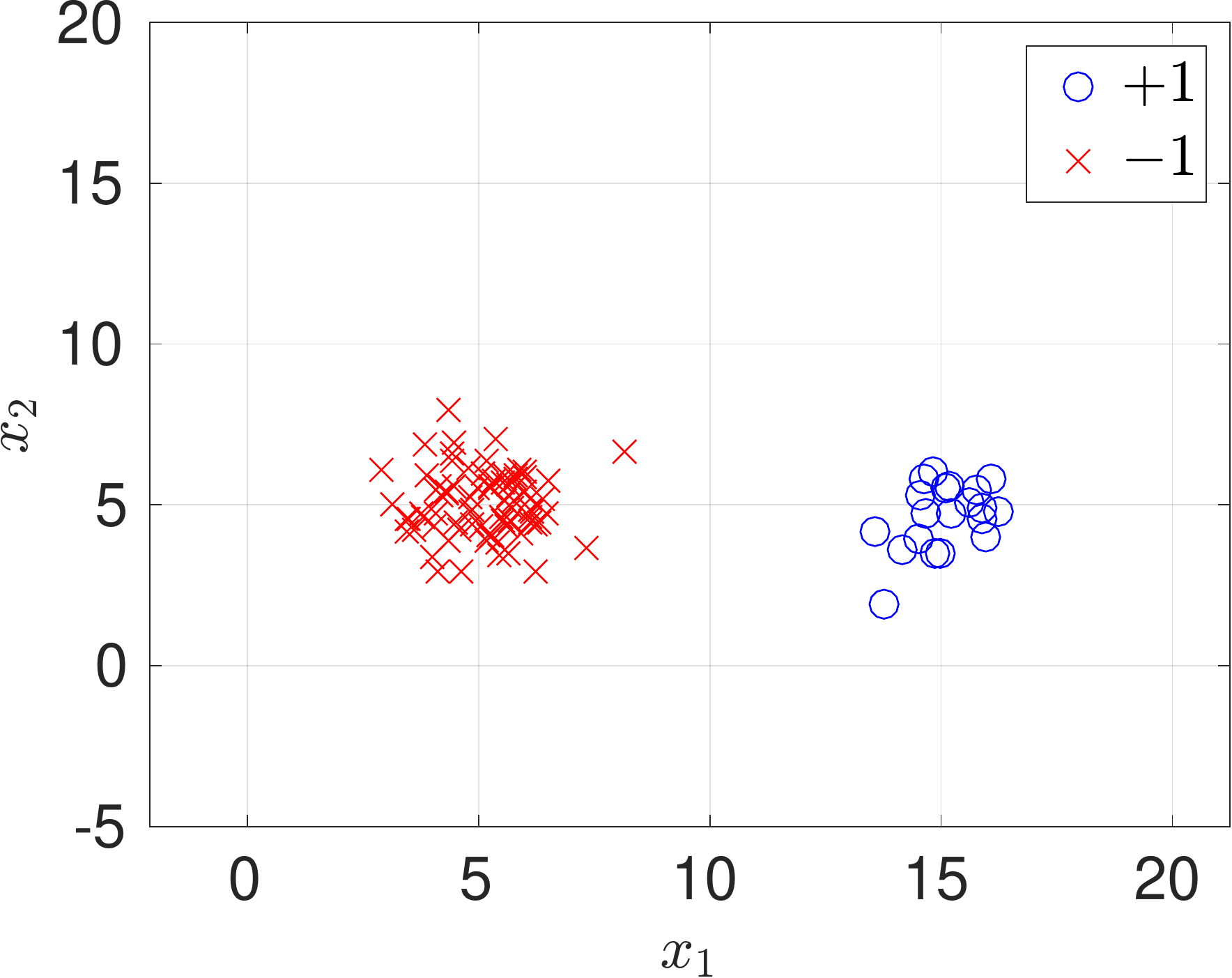} & 
			\includegraphics[height=2.3cm,keepaspectratio]{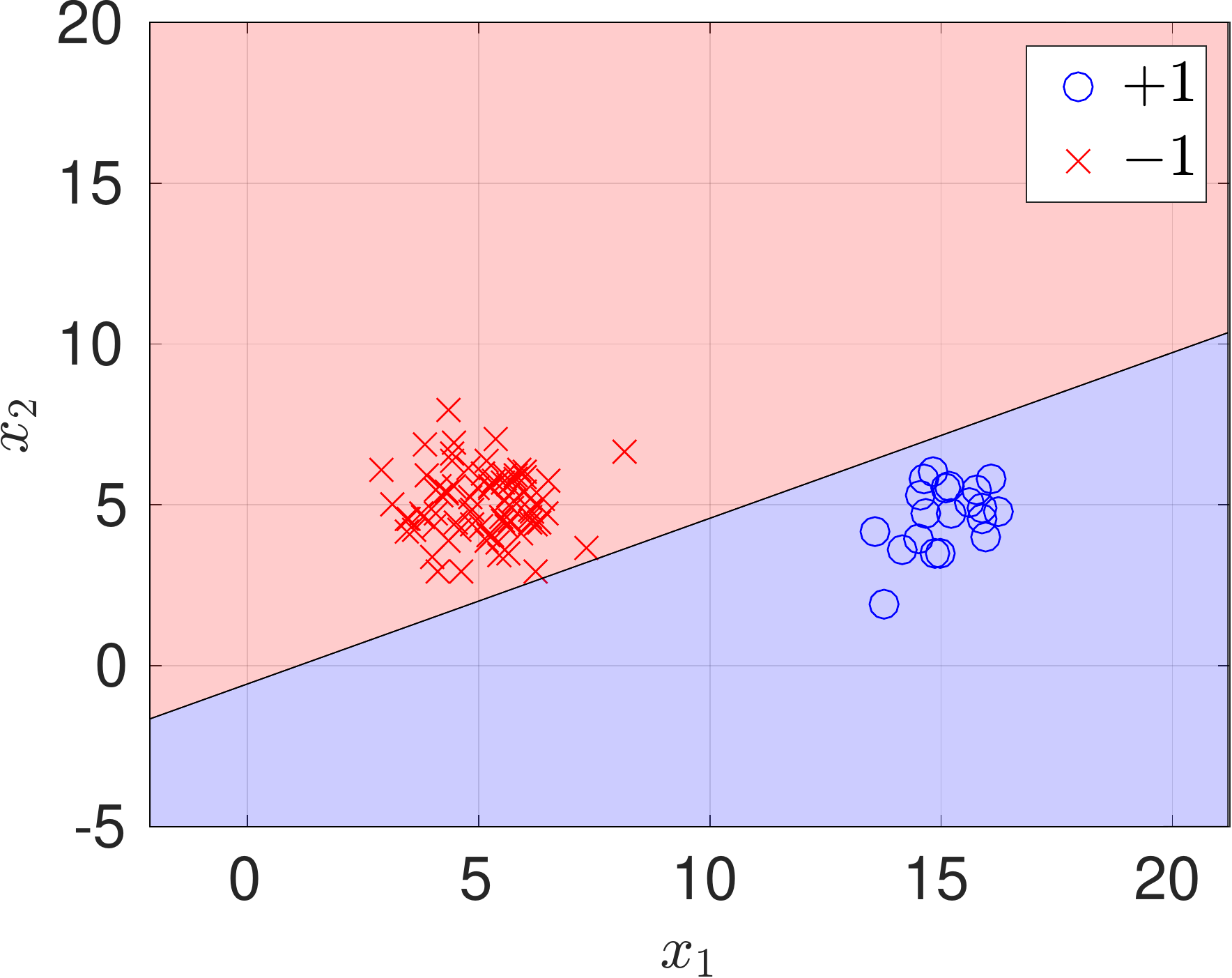} & 
			\includegraphics[height=2.3cm,keepaspectratio]{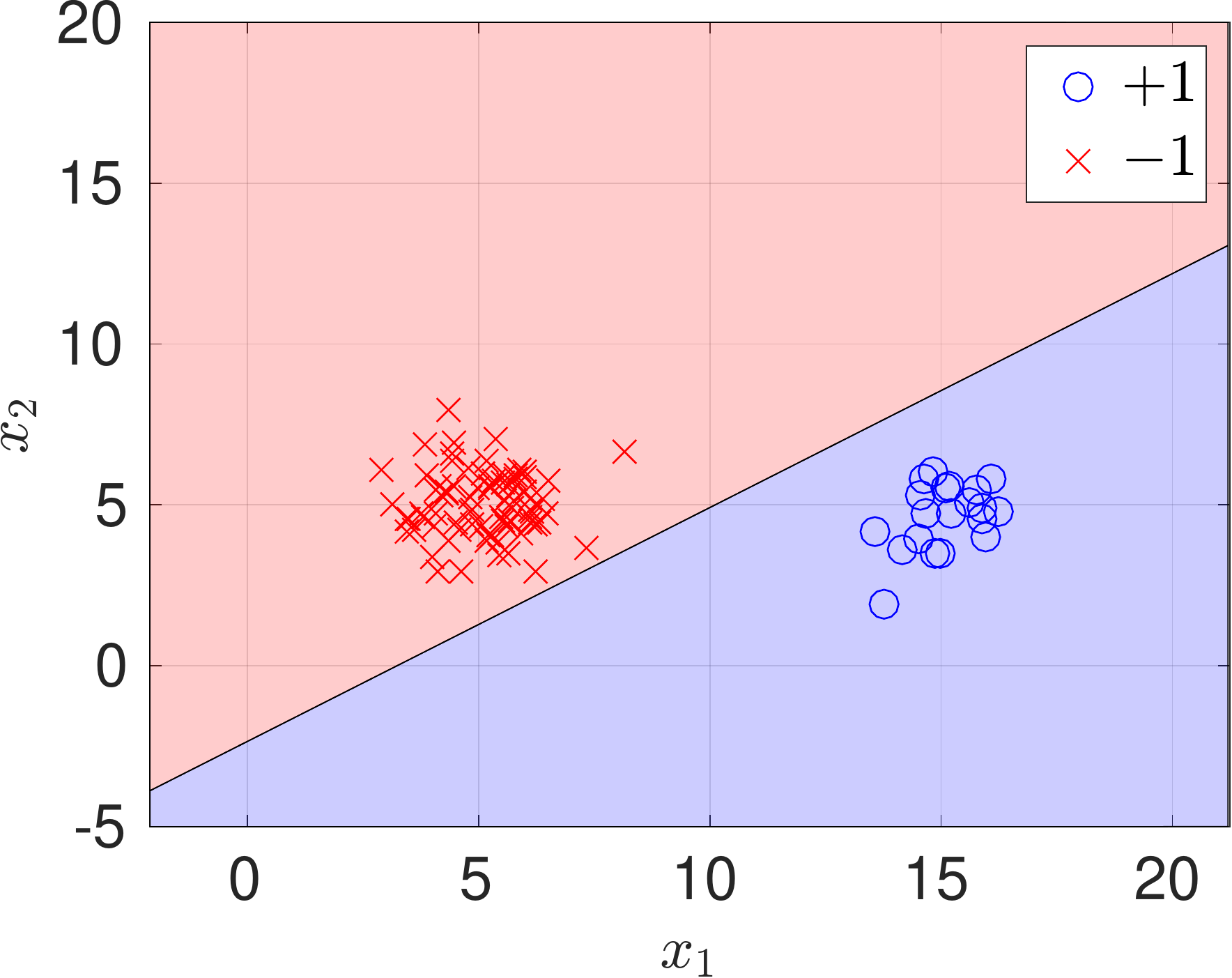} & 
			\includegraphics[height=2.3cm,keepaspectratio]{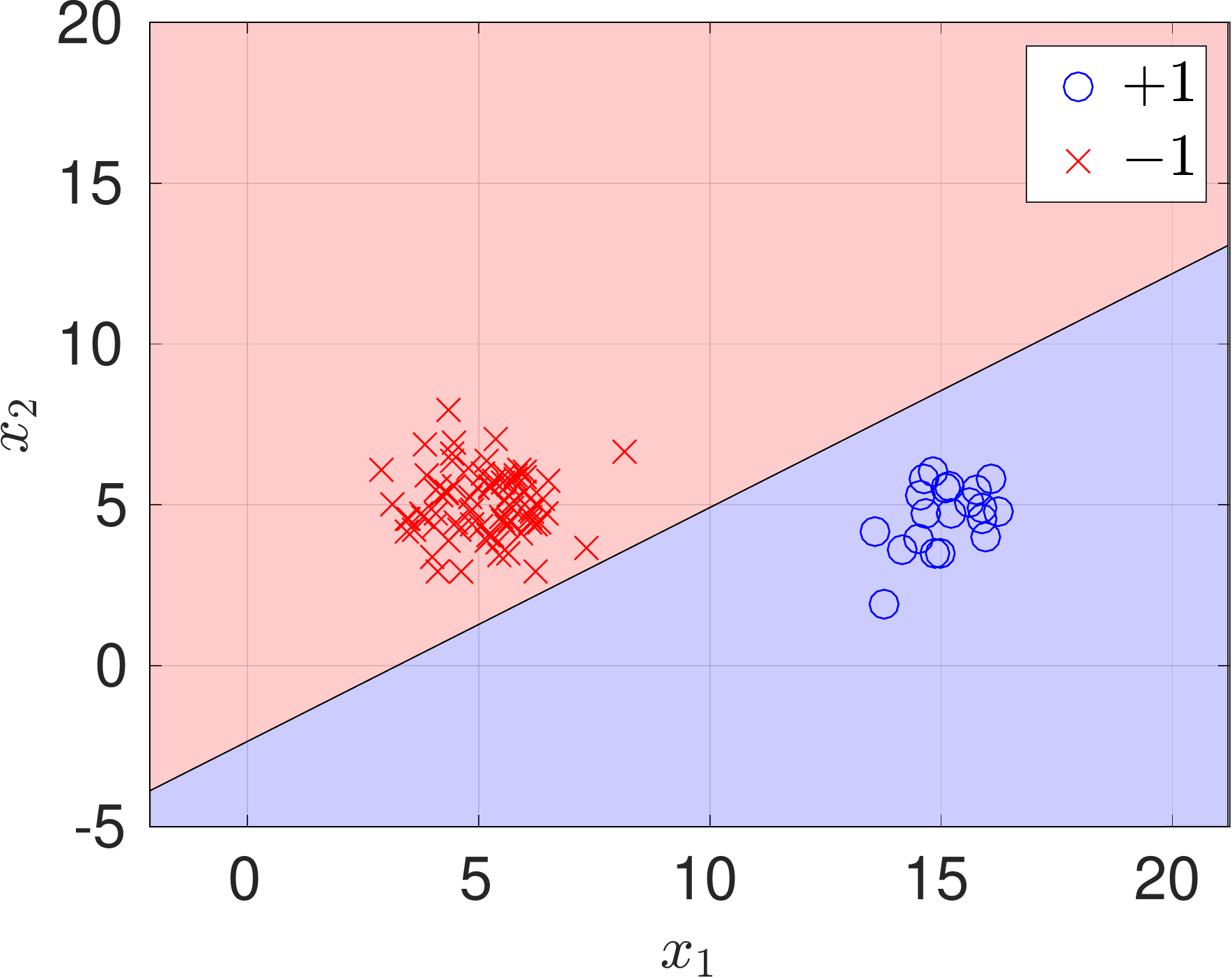} \\
	Point Attack {\small$(\alpha^+,\alpha^-) = (0, 1)$} &
			\includegraphics[height=2.3cm,keepaspectratio]{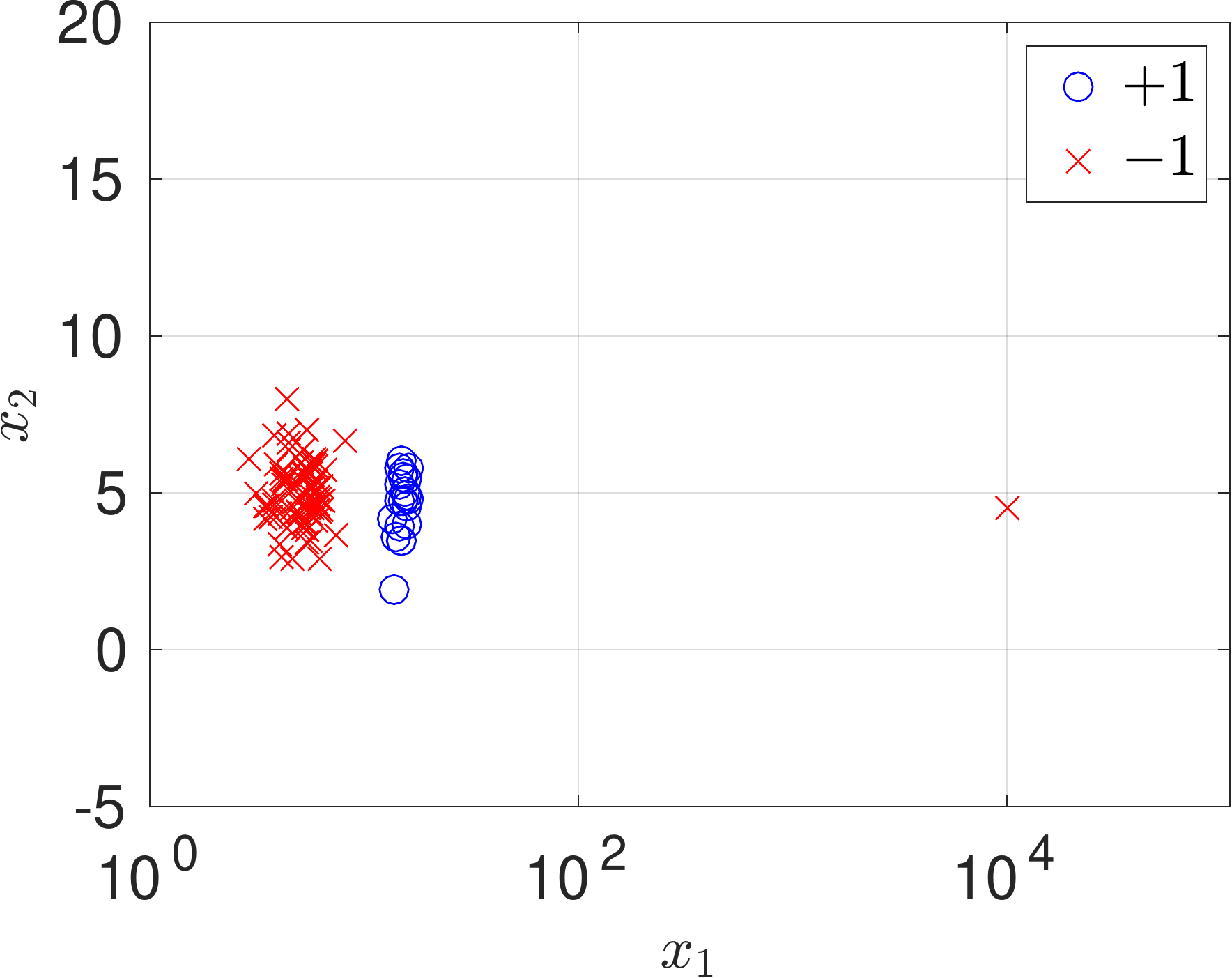} & 
			\includegraphics[height=2.3cm,keepaspectratio]{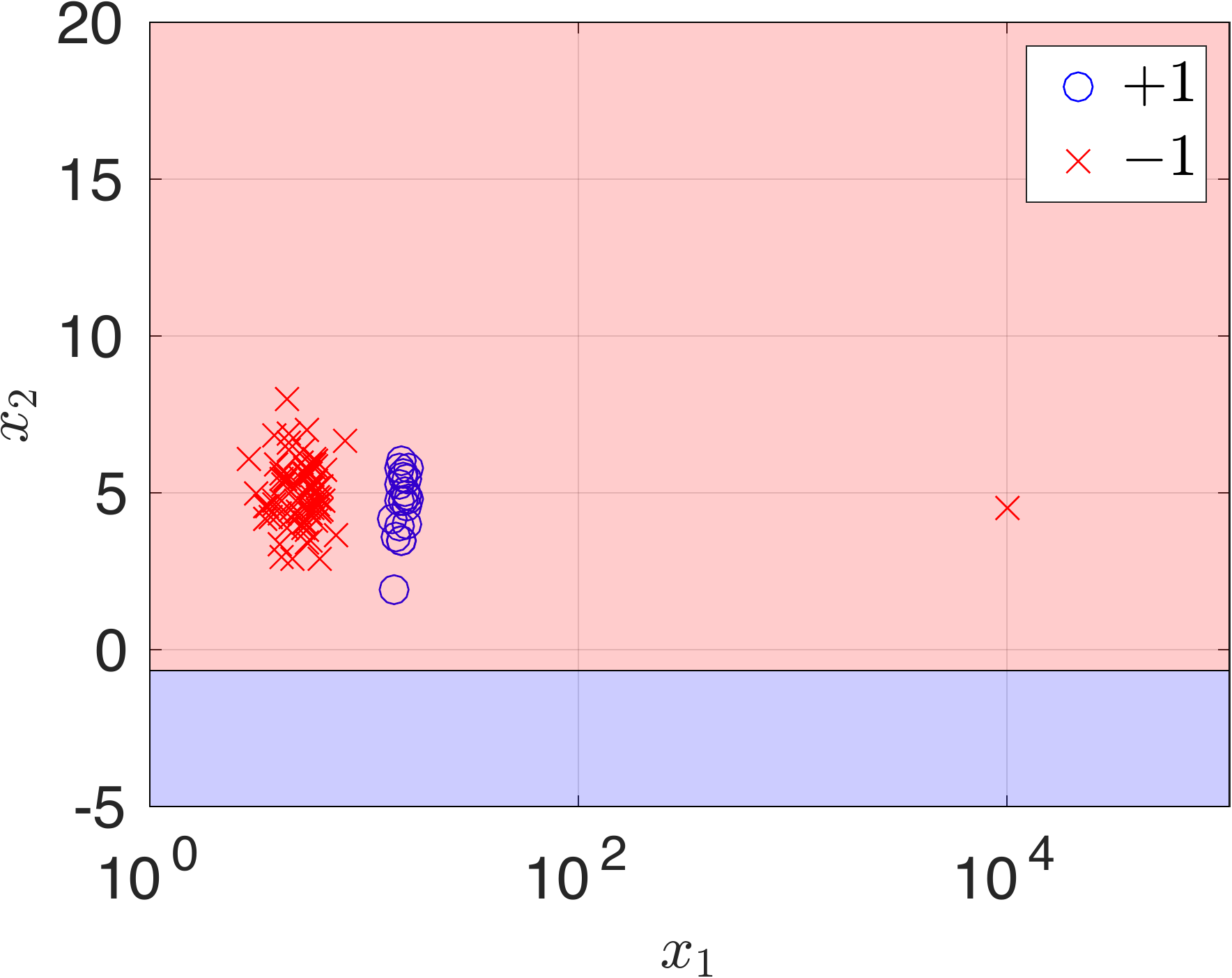} & 
			\includegraphics[height=2.3cm,keepaspectratio]{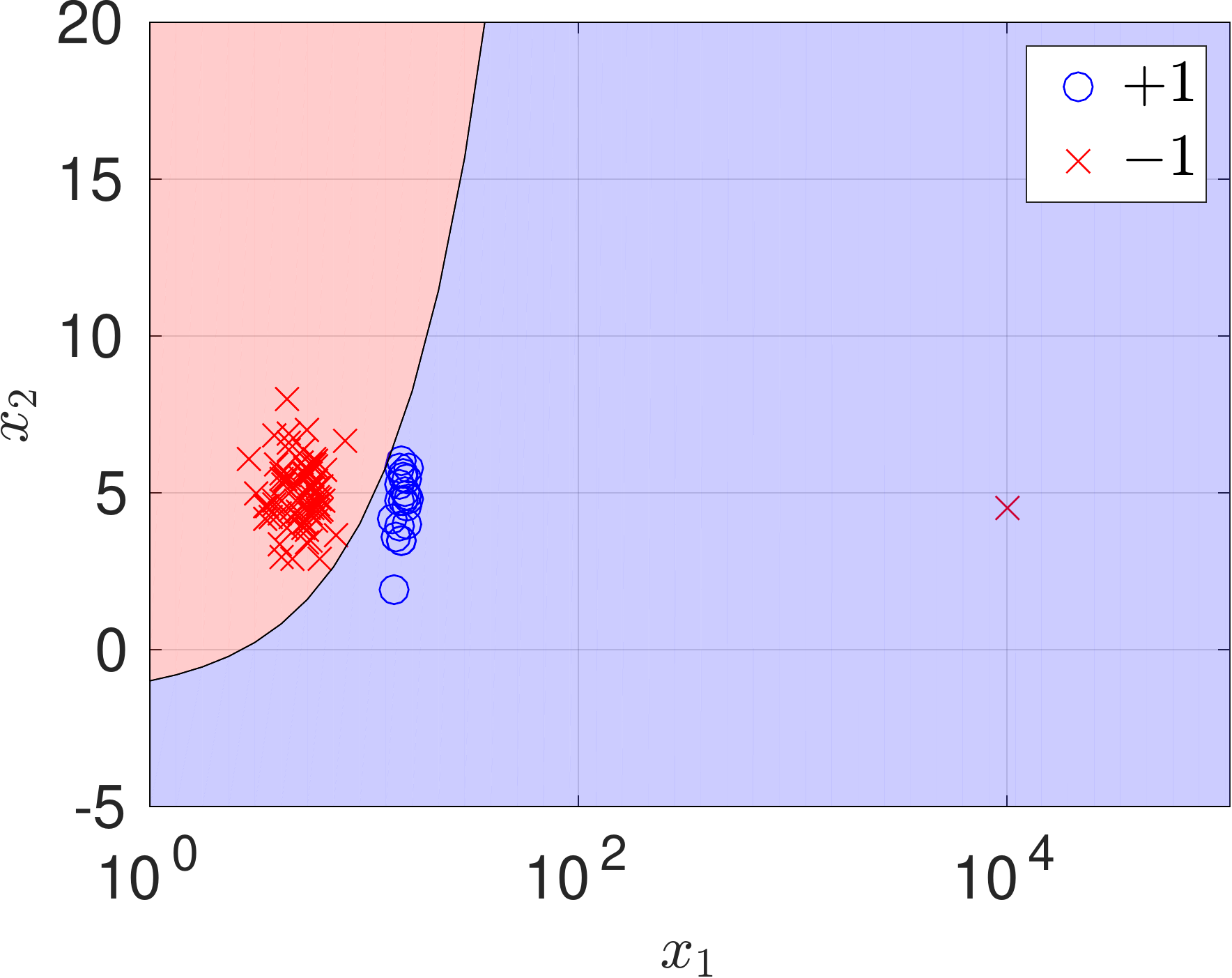} & 
			\includegraphics[height=2.3cm,keepaspectratio]{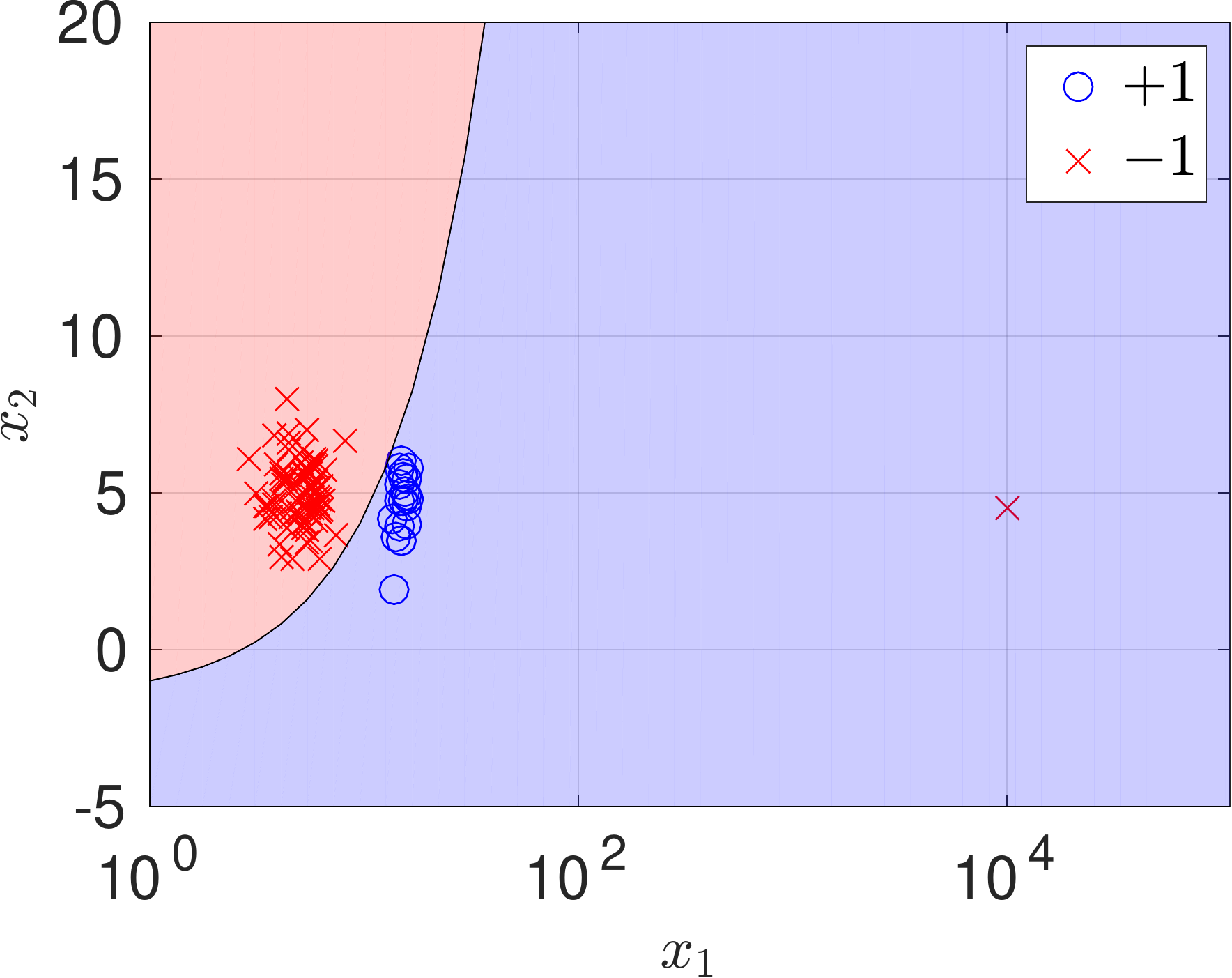} \\ 
	Overlap Attack  {\small$(\alpha^+,\alpha^-) = (0, 24)$}&
			\includegraphics[height=2.3cm,keepaspectratio]{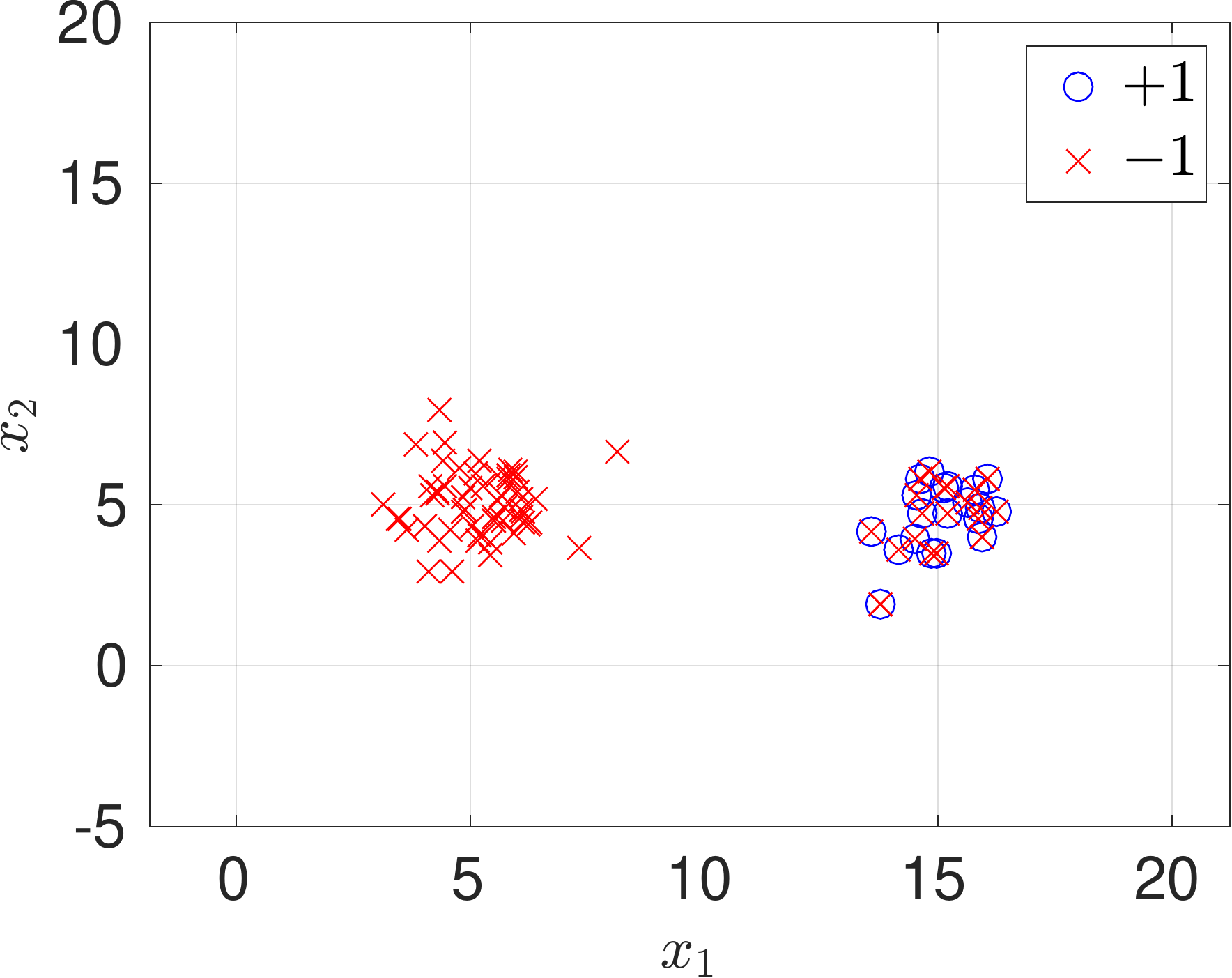} & 
			\includegraphics[height=2.3cm,keepaspectratio]{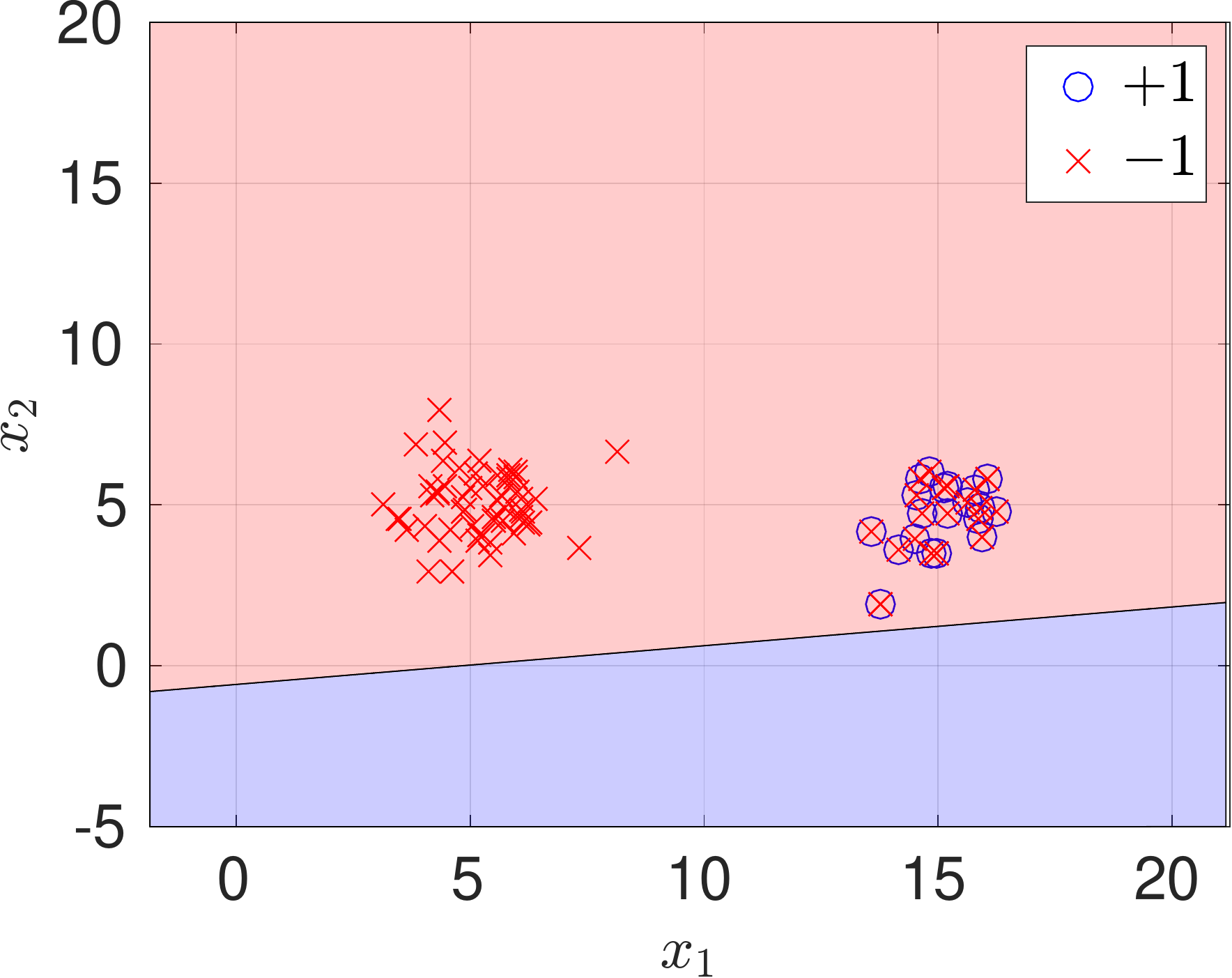} & 
			\includegraphics[height=2.3cm,keepaspectratio]{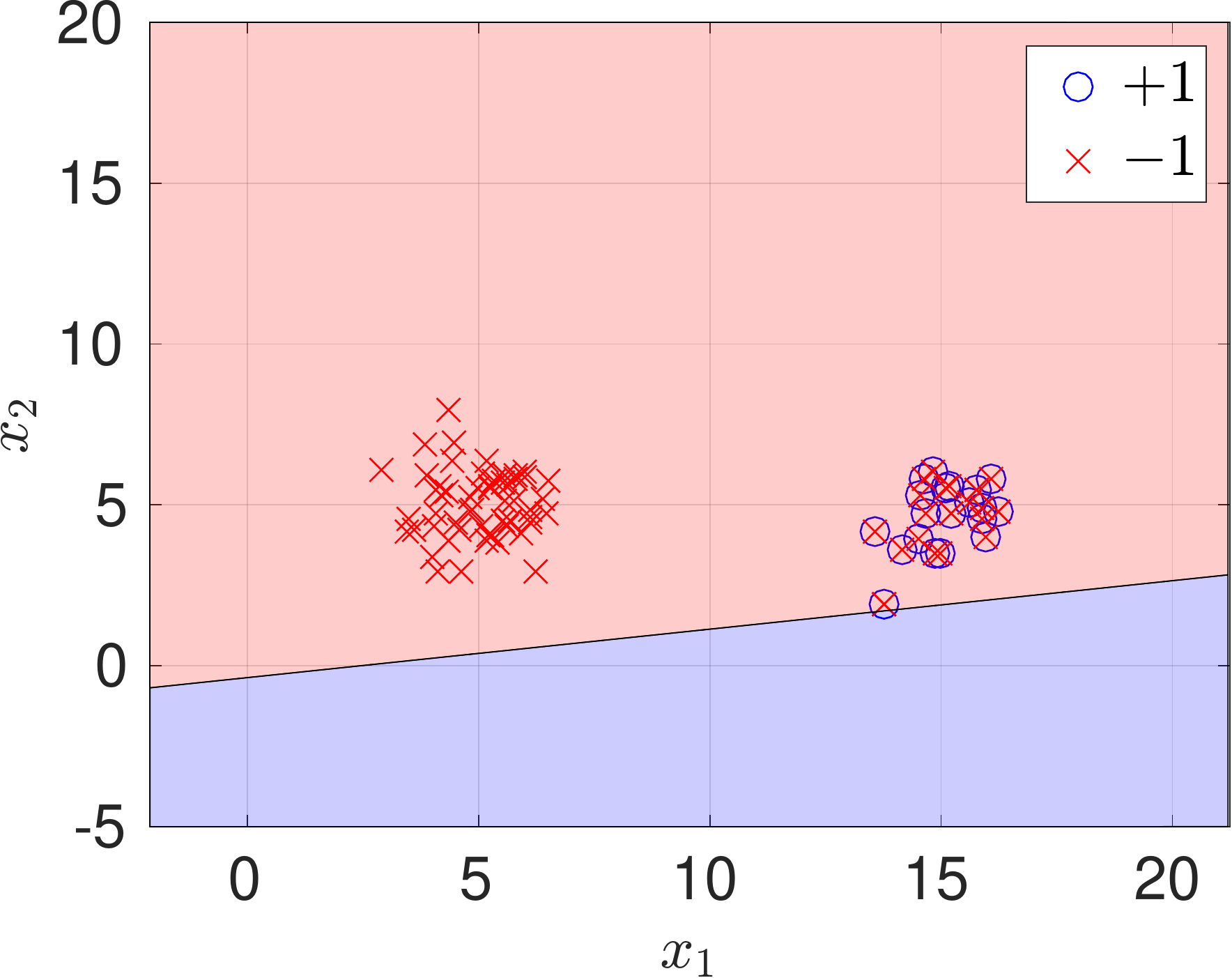} & 
			\includegraphics[height=2.3cm,keepaspectratio]{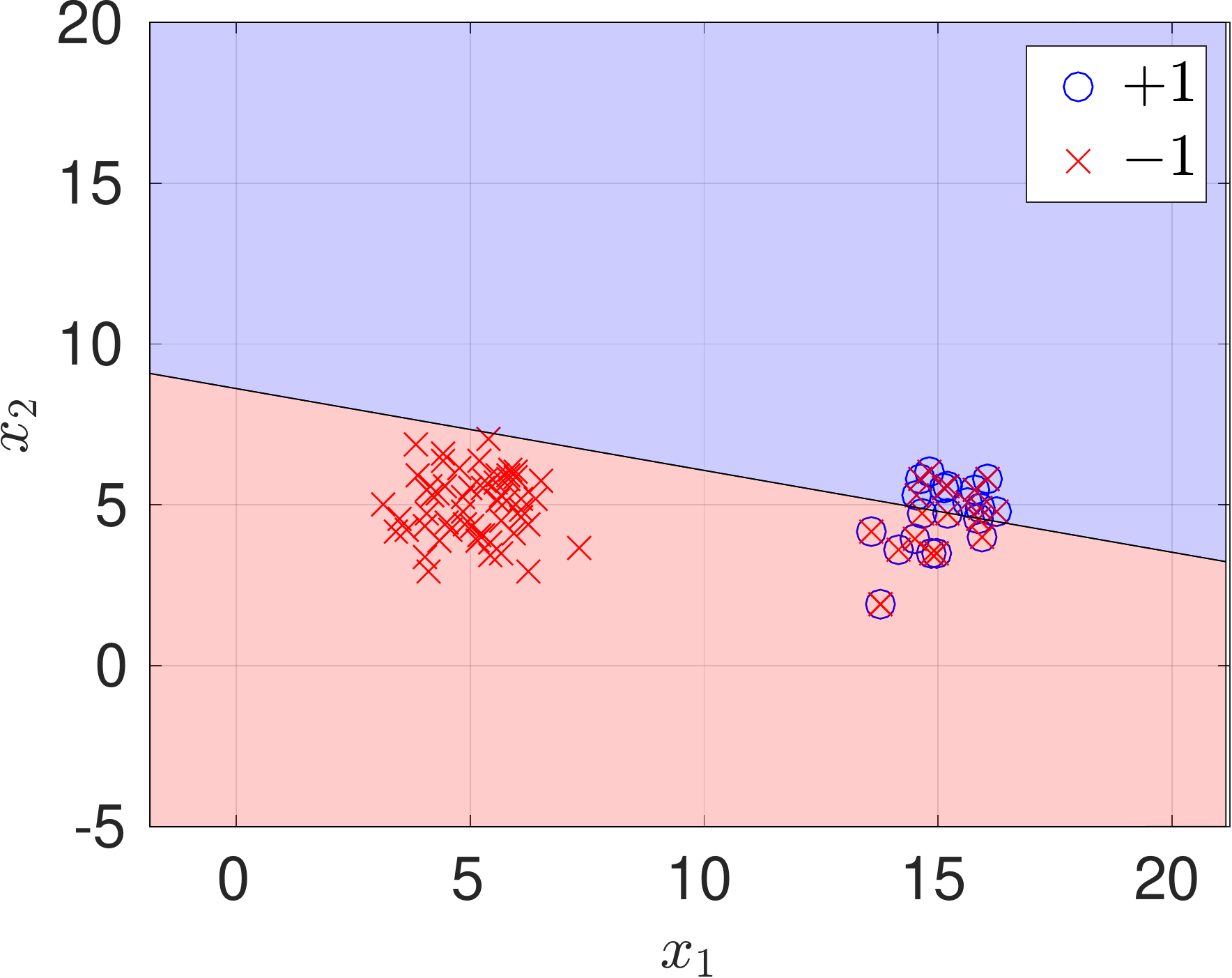} \\ 
	\bottomrule
\end{tabular}
\vspace{-3ex}
\caption{The resilience of each linear classification algorithm under the specified attacks. The blue or red mark represents a positive or negative feature vector, respectively. The feature vector in the blue or red region is classified as positive or negative, respectively.}
\label{fig:exp_qualitative}
\vspace{-1.5ex}
\end{figure*}

\begin{figure}[tb!]
\centering
\includegraphics[width=0.6\linewidth]{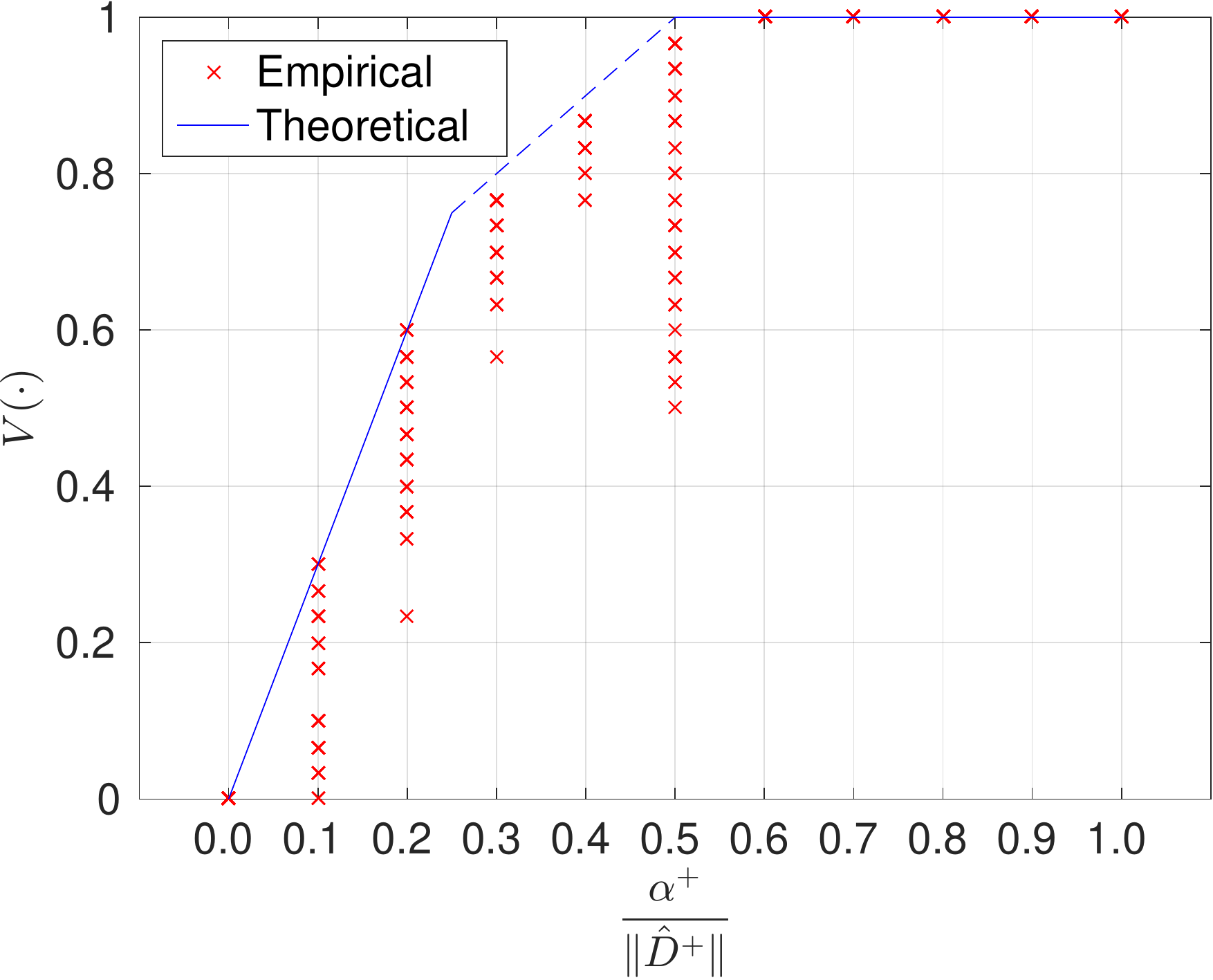}
\vspace{-2ex}
\caption{The degree of the resilience of $P_{\Ms, \lzo}$ in the resilience metric $V(\cdot)$ with respect to the ability of an attacker. The blue solid and dashed line represents the theoretical robustness bound (\eqnref{eq:robustness_bound}) and the red cross means the empirically evaluated feasible resilience. Assume $|\Dhp| = |\Dhn|$ and $\alpha^+ = \alpha^-$.}
\vspace{-3ex}
\label{fig:robust}
\end{figure}

\textbf{Synthetic data.}~
In \figref{fig:exp_qualitative}, the classification results of each linear classification algorithm, such as SVMs, the \zo loss linear classification, and the majority \zo loss linear classification, are illustrated with different types of an attack.
The original training data without attacks is randomly drawn from two Gaussian distributions, as illustrated in the first column and the first row, where $|\Dhp| = 20$ and $|\Dhn| = 80$. 
When there is no attack (the first row in \figref{fig:exp_qualitative}), all three algorithms correctly classify training data. 
If there is a point attack (the second row in \figref{fig:exp_qualitative}), only SVMs algorithm is affected by the attack, outputting a classifier that misclassifies all positive feature vectors of un-attacked training data. 
When an overlap attack (the third row in \figref{fig:exp_qualitative}) is applied, where $\alpha^- = 24$, both SVMs and the \zo loss linear classification output classifiers that misclassifies all positive feature vectors of un-attacked training data while the majority \zo loss classification algorithm still correctly classifies the portion of the positive feature vectors of un-attacked training data, showing that the majority \zo loss classification algorithm is more resilient than others.

Moreover, using the synthetic data, the theoretical worst-case resilience bound (\eqnref{eq:robustness_bound}) of the majority \zo loss linear classification is experimentally shown in \figref{fig:robust}. The blue line represents the theoretical worst-case resilience bound. Red points are the resilience $V(\cdot)$ over the corresponding $\alpha$. Specifically, 
100 different $\Dh$ are randomly generated, where $\Dhp$ and $\Dhn$ are drawn from two Gaussian distributions of positive and negative labels, respectively. 
For each $\Dh$ and for each $\alpha^+$, which ranges from 0 to the total number of positive feature vectors, 
an attacker moves $\alpha^+$ number of positive feature vectors beyond the negative features in 100 different ways to obtain $\Dha$ so that positive and negative feature vectors cannot be linearly separable. By taking the maximum of $V(\cdot)$ for 100 different $\Dh$ and 100 different $\Dha$, the resilience $V(\cdot)$ is obtained for each $\alpha^+$, which is represented in a red cross. In \figref{fig:robust}, the red crosses do not excess the theoretical bound and the increasing trend follows the bound.



\textbf{Medical data.}~
We evaluated the resilience of traditional linear classification algorithms and the proposed algorithm using arrhythmia dataset.
The arrhythmia, a.k.a irregular heartbeat, is a condition of the heart in which the heartbeat is irregular.
An arrhythmia detector cooperated with logs from pacemaker can reduce stroke and death rate \cite{glotzer2003atrial}.
To design such a detector, electrocardiogram (ECG) training data can be collected from logs of the pacemaker (\figref{fig:case_study}) whether ECG data is normal or abnormal (\eg atrial fibrillation or sinus tachycardia). But, if the pacemaker is vulnerable, the training data can be tampered to hinder to detect arrhythmia, possibly leading to death.

In \tabref{tab:exp_qualitative}, we have compared the resilience $V(\cdot)$ of each algorithm on real medical dataset. 
Arrhythmia dataset \cite{guvenir1997supervised}, which can be found at the UCI machine learning repository \cite{Lichman2013}, is used for evaluating the resilience of each algorithm. 
The Arrhythmia dataset is preprocessed as follows. Due to the computational limitation to solve the MILP, we use 20 percent of training data (\ie $|\Dhp| = 37$ and $|\Dhn| = 49$) and select features from 40th and 99th for training classifiers. 
We've obtained the same results as illustrated with synthetic data.
SVMs algorithm outputs a classifier that misclassifies all positive or negative feature vectors under both a point attack and an overlap attack.
\zo loss linear classification does not affect on a point attack but outputs a classifier that misclassifies all positive or negative feature vectors when an overlap attack is applied by tampering 43.2 and 42.8 percent of positive and negative feature vector, respectively.
However, the majority \zo loss linear classification still correctly classifies the portion of positive and negative feature vectors even though about 43 percent of training data were tampered. 
We emphasize that $V(\cdot)$ values in \tabref{tab:exp_qualitative} are not prediction results, but they are evaluated over training data, making $V(\cdot) = 0$ possible. However, a higher $V(\cdot)$ value implies higher prediction error. 

\textbf{Comparison with \cite{kearns1993learning}.}
Kearns and Li's paper \cite{kearns1993learning} analyzes a binary classification problem under the malicious error (ME) model, but our paper analyzes a binary linear classification problem under a \tma, which is a general case of the ME model. Here, we compare each paper's result by providing an example.
Assume a binary linear classification problem under the ME model, where $|\Dhp| = |\Dhn| = 50$ and $\alpha^{+} = \alpha^{-} = 10$.
Kearns and Li's paper states that if a designer wants to have the expected accuracy of $0.9$, then $\frac{\alpha^{+}}{|\Dhp|} = \frac{\alpha^{-}}{|\Dhn|} < \frac{0.1}{1 + 0.1} \approx 0.091$ regardless of a classification algorithm. This means at most $0.091$ percent of training data can be tampered to guarantee the expected accuracy. However, this does not state anything on the expected accuracy when $\frac{\alpha^{+}}{|\Dhp|} \geq 0.091$.
In comparison to this, our paper implies that, in the case of the majority \zo linear classification algorithm, $g(|\Dh|, (\alpha^{+}, \alpha^{-})) = 0.6$. This means that a designer can expect the accuracy on training data that is at least $1 - 0.6 = 0.4$. This further implies the expected minimum accuracy can be approximately $0.4$ when $\frac{\alpha^{+}}{|\Dhp|} = 0.2 \geq 0.091$. We note that the connection between the accuracy on training data (\ie the performance measure of this paper) and the expected accuracy (\ie the performance measure of traditional classification) can be found in \Sec \ref{sec:connection_stl}.



\vspace{-1ex}
\section{Conclusions}


In particularly, the incorrect decisions on CPS directly affect on a physical environment, so learning techniques under training data attacks should be scrutinized. 
Toward the goal of resilient machine learning, we propose a resilience metric for the analysis and design of a resilient classification algorithm under training data attacks.
Traditional algorithms, such as convex loss linear classification algorithms and the \zo loss linear classification algorithm, are proved to be resilient under restricted conditions. However, the proposed \zo loss linear classification with a majority constraint is more resilient than others, and it is the maximally resilient algorithm among linear classification algorithms. The worst-case resilience bound of the proposed algorithm is then provided, suggesting how resilient the algorithm is under training data attacks. 

\textbf{Countermeasures.}
The resilience analysis on different linear classification algorithms provides us clues for countermeasures on training data attacks. Here, we briefly discuss a possible direction for countermeasures and its challenges. 
In general, additional algorithms can be considered to eliminate the worst-case situations in the analysis of each classification algorithm. 
For example, to defend against the point attack on SVMs, it might be considered to add a preprocessing step that saturates large values in training data. Specifically, if a designer knows the minimum and maximum range of features, then range can saturate the large values that contribute to the point attack. However, this might not be an effective countermeasure since the range of features is not known in general and the point attack can be conducted after the preprocessing step, not before it.

Here, we emphasize that our analysis, which is purposely focused on a classification algorithm exclusively, helps to devise countermeasures: combining a classification algorithm with a preprocessing step or using a complex classification algorithm (\eg hierarchical approach and neural networks). We believe that the advanced algorithms work better under the training data attack in general and our analysis on the simple algorithms (\eg SVMs) can be a building block for analyzing and devising advanced algorithms.

\textbf{Future works.}
As a future work, the following issues are worth being considered.
A more practical mixed integer linear program can speed up computational time (\eg \cite{nguyen2013algorithms}) and 
it would be promising to design and analyze multiple algorithms in tandem (one to monitor the data, one to learn a classifier).
It is also worth incorporating bounded noise error and designing error on $\Hs$ in analysis, and
extending to non-linear and multiclass classification problem.
%
Finally, to devise countermeasures, it would be promising to consider resilient algorithms that estimate attacker capabilities or model prior knowledge on attackers.




\vspace{-1.5ex}
\begin{acks}
\vspace{-0.5ex}
{This work was supported in part by NSF CNS-1505799 and the Intel-NSF Partnership for Cyber-Physical Systems Security and Privacy, by ONR N00014-17-1-2012, and
by Global Research Laboratory Program (2013K1A1A2A0207\-8326) through NRF, and the DGIST Research and Development Program (CPS Global Center) funded by the Ministry of Science, ICT \& Future Planning.  This material is also based in part on research sponsored by DARPA under agreement number FA8750-12-2-0247. The U.S. Government is authorized to reproduce and distribute reprints for Governmental purposes notwithstanding any copyright notation thereon. The views and conclusions contained herein are those of the authors and should not be interpreted as necessarily representing the official policies or endorsements, either expressed or implied, of DARPA or the U.S. Government.}
\end{acks}

\vspace{-1.5ex}
\balance
\bibliographystyle{ACM-Reference-Format}
\bibliography{../externals/mybibtex/rml} 

\appendix
\section*{Appendices}

%
%
%

\section{Connection to Statistical Learning Theory} \label{sec:connection_stl}

The goal of machine learning should minimize the generalization error; moreover, we believe the proposed resilience metric and generalization error are connected. To illustrate this, first, let us introduce a few standard notations from machine learning community. 
Let 
$\Rh(h) = \frac{1}{|\Dh|} \sum_{i=1}^{|\Dh|} \ell(y_{i}, h(\x_{i}))$ be the empirical risk of a classifier, 
$R(h) = \Exp_{(\x, y)} \ell(y, h(\x))$ be the expected risk of a classifier, 
$\Rh(\hh) = \min_{h \in \Hs} \Rh(h)$ be the empirical risk of a trained classifier,
$R(\hh)$ be the risk of the trained classifier,
$R(h_{*, \Hs}) = \inf_{h \in \Hs} R(h)$ be the expected risk of the best classifier in $\Hs$, and
$R(h_{*}) = \inf_{h \in \Ys^{\Xs}} \allowbreak R(h)$ be the Bayes risk.
Roughly speaking, the goal of learning problem is minimizing the \emph{generalization error}
\eqas{
R(\hh) - R(h_{*}).
}
The generalization error usually decomposed into two errors, called the \emph{estimation error} and \emph{approximation error}, as follows:
{\small
\eqas{
	\underbrace{R(\hh) - R(h_{*})}_{\text{generalization error}} 
	&= \underbrace{\[ R(\hh) - R(h_{*, \Hs}) \]}_{\text{estimation error}} + \underbrace{\[ R(h_{*, \Hs}) - R(h_{*}) \]}_{\text{approximation error}} 
}
}
Minimizing the approximation error is related to finding a ``good'' hypothesis space, $\Hs$. We acknowledge there are many techniques for identifying a hypothesis space including structural risk minimization, finding a regularization parameter using cross-validation, model selection, drop-out in deep learning, and manual modeling of neural network architecture. However, in our paper, we've assumed we have a ``good'' $\Hs$, meaning $\Hs$ contains the true classifier $f_{*}$ so approximation error always equals zero. Furthermore, since we've assumed there is no noise (or uncertainty) so Bayes risk $R(h_{*})$ is also zero -- which implies that $R(h_{*,\Hs}) = 0$. 

If we assume empirical risk minimization as a learning algorithm, the estimation error can be further decomposed as follows:
{\small
\eqas{
	\underbrace{R(\hh) - R(h_{*, \Hs})}_{\text{estimation error}} 
	&= \underbrace{\[ R(\hh) - \Rh(\hh) \]}_{A} + \underbrace{ \[ \Rh(\hh) - R(h_{*, \Hs}) \]}_{B}.
}
}
In statistical learning theory, it is known that the first term (\ie $A$ \footnote{some literatures call this term generalization error.}) and the second term (\ie $B$) converge in probability to zero due to Vapnik-Chervonenkis (VC) theory. 
Thus, if the number of training data goes to infinity and if empirical risk minimization is used as a learning algorithm, the estimation error converges in probability to zero. 
Furthermore, this implies that $\Rh(\hh) = 0$ converges to zero since (as described above) $R(h_{*,\Hs}) = 0$ by assumption.

We can consider the resilient learning problem in a similar fashion; the resilient learning problem is minimizing the estimation error
\eqas{
	R(\hh_{\alpha}) - R(h_{*, \Hs}), 
}
where $\hh_{\alpha}$ is a classifier trained over tampered training data. Similar to the traditional approach, this error can be decomposed as:
{\small
\eqas{
	\underbrace{R(\hh_{\alpha}) - R(h_{*, \Hs})}_{\text{estimation error}} 
		&= \[ R(\hh_{\alpha}) - \Rh(\hh) \] + \underbrace{ \[ \Rh(\hh) - R(h_{*, \Hs}) \]}_{B} \\
		= \underbrace{\[ R(\hh_{\alpha}) - \Rh(\hh_{\alpha}) \]}_{A'} &+ \underbrace{\[ \Rh(\hh_{\alpha}) - \Rh(\hh) \]}_{C} + \underbrace{ \[ \Rh(\hh) - R(h_{*, \Hs}) \]}_{B}
}
}
As before, the term $B$ converges in probability to zero due to VC theory and $R(h_{*, \Hs}) = 0$, which implies $\Rh(\hh) = 0$. But, we believe the behaviors of the term $A'$ and $C$ are uncharacterized and in this work, we consider the term $\Rh(\hh_{\alpha})$. Note that based on the notations in paper, 
\eqas{
	\Rh(\hh_{\alpha})  = \| \mathbf{\Rh}_{\ell_{01}} ( P_{\Hs, \ell}(\Dh_{\alpha}) | \Dh, \w ) \|_{1},
}
where $\w = \mat{cc}{|\Dh^{+}| & |\Dh^{-}|}^{\top}$, suggesting the resilience metric Eq. (3) is connected to existing statistical learning theory. Note that we've used $\infty$-norm and $\w = \mat{cc}{1 & 1}^{\top}$ instead since our setup also is used in literatures and we think it describes the behavior of the empirical risk of a trained classifier better when an attack is severe (in other words, $\alpha$ is large).

\section{Proofs on Lemmas, Propositions, and Theorems}


In this section, we describe the proofs of propositions and theorems stated in the paper, including supporting lemmas. 
For the notational simplicity, we use the following shorthands.
Let $N^+ = |\Dhp|$,  $N^- = |\Dhn|$, and $N = (N^+, N^-)$.
Let $\Rb_{\ell}(h | \Dh) = \sum_{i=1}^{|\Dh|} \ell (y_i, h(\x_i))$ such that  $\Rb_{\ell}(h | \Dh) = |\Dh| \cdot \Rh_{\ell}(h | \Dh)$.
When a \zo loss function is used, a set of classifiers has a same empirical risk. We denote, given $\Dh$, the class of classifiers that has a same empirical risk with $h$ as $\Ls_h = \left\{ h_i  \in \Ls \middle| \Rh_{\lzo}(h | \Dh) = \Rh_{\lzo}(h_i | \Dh) \right\}$. In this case, only the representative, $h$, is considered so that, for example, if there are three different classes of classifiers, they are simply called three different classifiers.
Figures used in proofs represent training data, where the point means the overlap of the annotated number of feature vectors. The positive and negative feature vectors are color-encoded in blue and red, respectively.
Set notations in the papers can be generalized to multiset notations. In the following proof, we assume all sets defined are multisets to handle duplicated training data pairs, which can happen if $\Xs \subseteq \integernum$. 
\subsection{Lemmas}
\begin{lemma} \label{lem:max_risk_gap} \label{lem:max_risk_gap_under_bad_attacks} \label{lem:max_risk_gap_under_good_attacks}
For all $\Dh$, $\alpha$, $\Dha$, and $h$,
the following is true:
\eqas{
\left| \Rh_{\ell_{01}}(h | \Dhap) - \Rh_{\ell_{01}}(h | \Dhp) \right| &\leq \frac{\alpha^+}{|\Dhp|} \text{~and~} \\
\left| \Rh_{\ell_{01}}(h | \Dhan) - \Rh_{\ell_{01}}(h | \Dhn) \right| &\leq \frac{\alpha^-}{|\Dhn|}.
}
\end{lemma}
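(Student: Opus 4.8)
The plan is to reduce both inequalities to a counting argument, so I will spell out the first one and note that the second is identical with $(\Dhan,\Dhn,\alpha^-)$ in place of $(\Dhap,\Dhp,\alpha^+)$. The key structural fact is that a bounded feature attack only alters feature vectors and never changes a label, so the positive block of the tampered data has the same cardinality as that of the clean data, i.e.\ $|\Dhap| = |\Dhp|$. Treating $\Dhap$ and $\Dhp$ as multisets (as fixed in the preamble to the appendix, to allow duplicated pairs when $\Xs \subseteq \integernum$), I would set $C := \Dhap \cap \Dhp$, $A := \Dhp \setminus \Dhap$, and $B := \Dhap \setminus \Dhp$, so that $\Dhp = C \uplus A$ and $\Dhap = C \uplus B$. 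Equality of cardinalities forces $|A| = |B|$, and condition (i) of \Def \ref{def:threat_model} gives $|B| \le \alpha^+$, hence also $|A| \le \alpha^+$.

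Next I would pass to the unnormalized risk $\Rb_{\ell_{01}}(h\mid D) = |D|\cdot\Rh_{\ell_{01}}(h\mid D)$. Since $C$ is common to both multisets, its contribution cancels, leaving
\[
\Rb_{\ell_{01}}(h\mid\Dhap) - \Rb_{\ell_{01}}(h\mid\Dhp) \;=\; \sum_{(\x,y)\in B}\ell_{01}(y,h(\x)) \;-\; \sum_{(\x,y)\in A}\ell_{01}(y,h(\x)).
\]
As $\ell_{01}$ is $\{0,1\}$-valued, the first sum lies in $[0,|B|]$ and the second in $[0,|A|]$, so the difference lies in $[-|A|,|B|]\subseteq[-\alpha^+,\alpha^+]$. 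Dividing by $|\Dhp|$ (equivalently, by $|\Dhap|$) gives $\bigl|\Rh_{\ell_{01}}(h\mid\Dhap) - \Rh_{\ell_{01}}(h\mid\Dhp)\bigr| \le \alpha^+/|\Dhp|$, which is the first claim; the negative-block inequality follows verbatim.

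This is essentially bookkeeping, so I do not expect a real obstacle. The only two points that require a little care are (a) committing to the multiset reading so that repeated training pairs are counted with multiplicity, and (b) using the block-size invariance $|\Dhap| = |\Dhp|$, which is exactly what makes $|A| = |B|$ and therefore lets a single bound $\alpha^+$ control both tail sums; without it the estimate would fail. The lemma is a per-classifier, Lipschitz-type bound in $\alpha$ that will be invoked repeatedly when establishing the resilience bounds in \Sec \ref{sec:proposed}.
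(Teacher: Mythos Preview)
Your proposal is correct and takes essentially the same approach as the paper: both are counting arguments exploiting that at most $\alpha^+$ positive pairs are altered while $|\Dhap|=|\Dhp|$, so the unnormalized $0$-$1$ risk can shift by at most $\alpha^+$ in either direction. Your version is more explicit---you formalize the common/altered split via the multiset decomposition $C,A,B$ and bound each tail sum separately---whereas the paper simply asserts the increase and decrease bounds with an intermediate factor $s\in[0,1]$; but the underlying idea is identical.
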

\begin{proof}
First, for any classifier, $h$, $\Rh_{\ell_{01}}(h | \Dhap)$ can increase compared to $\Rh_{\ell_{01}}(h | \Dhp)$ if an attacker maliciously manipulates training data. 
Also, the amount of the empirical risk increased is at most $\frac{\alpha^+}{|\Dhp|}$ depending on the choice of the attacker. Formally,
\begin{align*}
	\Rh_{\ell_{01}}(h | \Dhap) - \Rh_{\ell_{01}}(h | \Dhp)
	&\leq  s \frac{\alpha^+}{|\Dhp|}
	\leq \frac{\alpha^+}{|\Dhp|},
\end{align*}
where $0 \leq s \leq 1$. Likewise, $\Rh_{\ell_{01}}(h | \Dhan) -  \Rh_{\ell_{01}}(h | \Dhn) \leq  \frac{\alpha^-}{|\Dhn|}$.

Next, for any classifier, $h$, $\Rh_{\ell_{01}}(h | \Dhap)$ can decrease compared to $\Rh_{\ell_{01}}(h | \Dhp)$ if an attacker helps $h$ for classification. 
Also, the amount of the empirical risk decreased is at most $\frac{\alpha^+}{|\Dhp|}$ depending on the choice of the attacker. Formally,
\begin{align*}
	\Rh_{\ell_{01}}(h | \Dhp) - \Rh_{\ell_{01}}(h | \Dhap) 
	&\leq  s \frac{\alpha^+}{|\Dhp|}	
	\leq \frac{\alpha^+}{|\Dhp|},
\end{align*}
where $0 \leq s \leq 1$. Likewise, $\Rh_{\ell_{01}}(h | \Dhn) - \Rh_{\ell_{01}}(h | \Dhan)  \leq \frac{\alpha^-}{|\Dhp|}$.
\end{proof}

\begin{lemma} \label{lem:conv_loss_lower_bound}
Let $\ell_c(\cdot)$ be a convex loss function and $h$ be linear. For all $k$, $y$, and $h$, there exists $\x$ such that $\ell_c(y, h(\x)) \geq k$.
\end{lemma}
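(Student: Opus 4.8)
The plan is to reduce the claim to two elementary facts and then glue them with the monotonicity of the loss template. Write $\ell_c(y,h(\x))=\phi(t)$ with $t=y\,h(\x)$, where by assumption $\phi$ is convex, non-increasing, finite-valued, satisfies $\phi(0)=1$, and $\lim_{t\to\infty}\phi(t)=c$ for some $c<1$. The two facts I would establish are: (a) $\phi(t)\to+\infty$ as $t\to-\infty$; and (b) for a linear classifier $h$ with nonzero weight vector and any label $y\in\Ys$, the quantity $y\,h(\x)$ can be made arbitrarily negative by choosing $\x\in\Xs$ of large norm. Granting these, fix $k$: by (a) choose $t^{*}$ with $\phi(t^{*})\ge k$, by (b) choose $\x$ with $y\,h(\x)\le t^{*}$, and then $\ell_c(y,h(\x))=\phi(y\,h(\x))\ge\phi(t^{*})\ge k$ since $\phi$ is non-increasing.

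For (a), I would use convexity together with the normalization of $\phi$. Since $\phi$ is non-increasing with $\lim_{t\to\infty}\phi(t)=c<1$, pick $t_{1}>0$ with $\phi(t_{1})<1$. For an arbitrary $t<0$, express $0$ as a convex combination $0=\lambda t+(1-\lambda)t_{1}$ with $\lambda=\frac{t_{1}}{t_{1}-t}\in(0,1)$; convexity gives $\phi(0)\le\lambda\,\phi(t)+(1-\lambda)\,\phi(t_{1})$, which I would rearrange to
\[
\phi(t)\ \ge\ 1+\frac{t}{t_{1}}\bigl(\phi(t_{1})-1\bigr).
\]
As $\phi(t_{1})-1<0$ and $t<0$, the right-hand side diverges to $+\infty$ when $t\to-\infty$, so a sufficiently negative $t^{*}$ satisfies $\phi(t^{*})\ge k$ for any prescribed $k$.

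For (b), writing $h(\x)=\w^{\top}\x+b$ with $\w\neq 0$, the choice $\x=-t\,y\,\w$ gives $y\,h(\x)=-t\|\w\|^{2}+y\,b\to-\infty$ as $t\to+\infty$, so some $\x$ realizes $y\,h(\x)\le t^{*}$ (here I assume $\Xs$ is rich enough, e.g.\ $\Xs=\realnum^{p}$, the setting of the constructions in these proofs). The main obstacle is step (a): a merely monotone $0$-$1$ relaxation need not blow up as $t\to-\infty$ (it could level off), so convexity is doing real work, and the argument must combine it with both $\phi(0)=1$ and the strict inequality $c<1$ to force the divergence. Step (b) is routine linear algebra, and the final assembly is immediate from the monotonicity of $\phi$.
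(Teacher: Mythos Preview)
Your proof is correct and follows essentially the same strategy as the paper: use convexity of $\phi$ (together with $\phi(0)=1$ and $\lim_{t\to\infty}\phi(t)<1$) to show $\phi(t)\to\infty$ as $t\to-\infty$, then use linearity of $h$ to realize an arbitrarily negative margin $t=yh(\x)$. The paper invokes the midpoint inequality $\phi(t)\ge 2\phi(t/2)-1$, whereas you derive the explicit secant-line lower bound $\phi(t)\ge 1+(t/t_1)(\phi(t_1)-1)$; your version makes the divergence step more self-contained, but the two arguments are minor variants of the same idea.
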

\begin{proof}
Let $\ell_c(y, h(\x)) = \phi(t)$, where $t=yh(\x)$,  $\phi$ is convex, and $\phi(0) = 1$, which is a usual setup for convex losses \cite{bartlett2006convexity}.

By the convexity of $\phi$ and $\phi(0) = 1$,
\eqas{
	\frac{1}{2} \left( \phi(t) + \phi(0) \right) &\geq \phi\left(\frac{t}{2}\right) \\
	\phi(t) &\geq 2\phi\left(\frac{t}{2}\right) - 1.
}
Given $k$, $y$, and $h$, $\x$ can be chosen such that $2\phi\left(\frac{t}{2}\right) - 1 = 2 \phi \left( \frac{yh(\x)}{2} \right) - 1 \geq k$ since $yh(\x)$ can increase arbitrarily by moving $\x$ toward the direction of the normal vector of $h$ if $y = +1$ or the opposite direction of the normal vector of $h$ if $y = -1$.
This implies $\ell_c(y, h(\x)) = \phi(t) \geq k$ for some $\x$.
\end{proof}

\subsection{Proposition and Theorem Proofs}
\subsubsection{\Thm \ref{thm:maximal_resilience}} \label{sec:maximal_resilience_proof}
\begin{proof}
Let $\hha = P_{\Ls, \ell}(\Dha)$ for the notational simplicity of this proof, and
$\ell(y, h(\x)) = \phi(t)$, where $t = y h(\x)$.
Assume that $\phi$ is lower-bounded by $\mathbbm{1}\left\{ t \leq 0 \right\}$, it is a monotonically non-increasing function, and $\lim_{t \to \infty} \phi(t) = c$ for some scalar $c < 1$. Note that the assumptions are generalized from the assumptions on convex losses \cite{bartlett2006convexity} to cover non-convex ones.

\Thm \ref{thm:maximal_resilience} is formally stated as follows:
\begin{multline}
  \forall \ell \forall N \forall \alpha \left( 
  \alpha^+ \geq \frac{1}{2} N^+ \vee \alpha^- \geq \frac{1}{2} N^+
  \implies 
  \exists \Dh \exists \Dha \exists \hha, 
  \right. \\ \left.
  \Rh_{\ell_{01}}(\hha|\Dhp) = 1 \vee 
  \Rh_{\ell_{01}}(\hha|\Dhn) = 1
  \vphantom{\frac{1}{2}} \right). \label{eq:thm1_formal}
\end{multline}

Assume $\alpha^+ \geq \frac{1}{2} N^+$.
we partition the type of loss functions into two classes. 
One class is the set of the loss function that has a ``flat tail''. Formally, it is the set of $\phi$ such that $\forall s < t, \phi(s) = \phi(t) = d$ for some negative $t$ and some scalar $d \geq 1$. Call this set  $\Phi_1$. 
The other class is the set of the loss function that has a ``not-flat tail''. Formally, it is the set of $\phi$ such that $\exists s < t, \phi(s) > \phi(t)$ for any negative $t$. Call this set $\Phi_2$. 
We claim that $\Dh$ and $\Dha$ illustrated in \figref{fig:01_necessary_proof_case_1} makes $\Rh_{\ell_{01}}(\hha|\Dhp) = 1$ or $\Rh_{\ell_{01}}(\hha|\Dhn) = 1$ for any $\ell \in \Phi_1 \cup \Phi_2$. In \figref{fig:01_necessary_proof_case_1_b},  we denote feature vectors annotated by $\alpha^+$ as $\x$ and the point means a set, $\Ss$, of $\alpha^+$-pairs of $(\x, +1)$. Also, the three points are collinear.

First, assume $\ell \in \Phi_1$. As illustrated in \figref{fig:01_necessary_proof_case_1}, we consider two classes of classifiers, $\Ls_{h_1}$ and $\Ls_{h_2}$, where 
$h \in \Ls_{h_1}$ misclassifies all feature vectors annotated by $\alpha^+$ but correctly classifies others and 
$h \in \Ls_{h_2}$ misclassifies all feature vectors annotated by $N^+ - \alpha^+$ but correctly classifies others. 
By the definition of $\ell \in \Phi_1$, the followings are true:
\eqas{
\inf_{h \in \Ls_{h_1}} \Rb_\ell(h | \Dha) &= \lim_{\|{h}_1\|_2 \to 0}\Rb_\ell({h}_1 | \Dha) \\ & = d{\alpha^+} + c \left(N^+ - \alpha^+ + N^-\right), \\
\inf_{h \in \Ls_{h_2}} \Rb_\ell(h | \Dha) &= \lim_{\|{h}_2\|_2 \to 0}\Rb_\ell({h}_2 | \Dha) \\ & = d\left(N^+ - \alpha^+ \right) + c \left(\alpha^+  + N^-\right).
}

Also, from $d>c$ and $\alpha^+ \geq \frac{1}{2} N^+$,
\eqas{
	\inf_{h \in \Ls_{h_1}} \Rb_\ell(h | \Dha) &- \inf_{h \in \Ls_{h_2}} \Rb_\ell(h | \Dha) \\
	&= \left( d{\alpha^+} + c \left(N^+ - \alpha^+ + N^-\right) \right) - \\
	&\left( d\left(N^+ - \alpha^+ \right) + c \left(\alpha^+  + N^-\right) \right) \\
	&= d \left({2\alpha^+} - N^+\right) - c\left({2\alpha^+} -N^+\right) \\
	&= (d - c) \left({2\alpha^+} - N^+\right) \\
	&\geq 0.
}
This implies $\inf_{h \in \Ls_{h_1}} \Rb_\ell(h | \Dha) \geq \inf_{h \in \Ls_{h_2}} \Rb_\ell(h | \Dha) \geq \Rb_\ell(\hha | \Dha)$ and 
$\Ls_{\hha} \cap \Ls_{h_2} \neq \emptyset$. 
Thus, $\Rh_{\ell_{01}}(\hha|\Dhp) = 1$ for some $\hha$ and all $\ell \in \Phi_1$.

Next, assume $\ell \in \Phi_2$.
Same as before, as illustrated in \figref{fig:01_necessary_proof_case_1}, we consider two classes of classifiers, $\Ls_{h_1}$ and $\Ls_{h_2}$. By the definition of $\ell \in \Phi_2$, an attacker can choose $\x$ such that
\eqas{
\inf_{h \in \Ls_{h_1}} \sum_{(\x, y) \in \Ss} \ell(y, {h}(\x)) &\geq \inf_{h \in \Ls_{h_2}} \Rb_{\ell}(h | \Dha).
}
This is because the left term can be arbitrarily large if an attacker moves $\x$ to the right side in \figref{fig:01_necessary_proof_case_1_b}. However, the right term cannot be arbitrarily large since any $h \in \Ls_{h_2}$ correctly classifies $\x$.
Here, let $\bar{h} = \arg\inf_{h \in \Ls_{h_1}} \sum_{(\x, y) \in \Ss} \allowbreak \ell(y, h(\x))$.
Thus, the following is true:
{\small
\eqas{
\inf_{h \in \Ls_{h_1}} \!\!\!\! \Rb_{\ell}(h | \Dha) 
	&= \!\! \inf_{h \in \Ls_{h_1}} \!\! \left\{ \Rb_{\ell}(h | \Dha \setminus \Ss) + \sum_{(\x, y) \in \Ss} \ell(y, h(\x)) \right\} \\
	&\geq \!\! \inf_{h \in \Ls_{h_1}} \!\! \Rb_{\ell}(h | \Dha \setminus \Ss) + \sum_{(\x, y) \in \Ss} \ell(y, \bar{h}(\x)) \\
	&\geq \!\! \inf_{h \in \Ls_{h_1}} \!\! \Rb_{\ell}(h | \Dha \setminus \Ss) + \inf_{h \in \Ls_{h_2}} \Rb_{\ell}(h | \Dha) \\
	&\geq \!\! \inf_{h \in \Ls_{h_2}} \!\! \Rb_{\ell}(h | \Dha).
}
}
This implies $\Ls_{\hha} \cap \Ls_{h_2} \neq \emptyset$. Thus, $\Rh_{\ell_{01}}(\hha|\Dhp) = 1$ for some $\hha$ and all $\ell \in \Phi_2$.
Likewise, if $\alpha^- \geq \frac{1}{2} N^-$, then $\Rh_{\ell_{01}}(\hha|\Dhn) = 1$ for some $\hha$ and all $\ell$.
Therefore, \eqnref{eq:thm1_formal} is true.
\end{proof}

\subsubsection{\Thm \ref{thm:maximal_resilience_condition}} \label{sec:maximal_resilience_condition_proof}
\begin{proof}
Assume $\As_{P} = \bar{\Bs}^{c}$.
Since $\bar{\Bs}^c$ in \Thm \ref{thm:maximal_resilience} provides a theoretical upper bound of $\bar{\As}$, $\bar{\As} \subseteq \As_{P}$.
Also, by the definition of the maximal resilience attack condition, $\As_{P} \subseteq \bar{\As}$. Therefore, $\bar{\As} = \As_{P} = \bar{\Bs}^{c}$.
\end{proof}

%
%

{
\subsubsection{\Pro \ref{thm:feasibility_conv}} \label{sec:feasibility_conv_proof}
\begin{proof}
Let $\hha = P_{\Ls, \ell_c}(\Dha)$ for the notational simplicity of this proof. 
The proposition is formally stated as follows:
\begin{multline*}
  \forall \alpha \left( 
  \alpha^+ > 0 \vee \alpha^- > 0		
  \Longleftrightarrow 
  \exists \Dh \exists \Dha  \exists \hha, 
  \right. \\ \left.
  \Rh_{\ell_{01}}(\hha|\Dhp) = 1 \vee 
  \Rh_{\ell_{01}}(\hha|\Dhn) = 1 
  \vphantom{\frac{1}{2}} \right).
\end{multline*}

$(\Longleftarrow)$
We prove that if $\alpha^+ = 0$ and $\alpha^- = 0$, then $P_{\Ls, \ell_c}$ is not perfectly attackable.
Since an attacker cannot attack any point, $\Dha = \Dh$ and $P_{\Ls, \ell_c} (\Dha) = P_{\Ls, \ell_c} (\Dh)$. This implies $\Rh_{\lzo}(P_{\Ls, \ell_c} (\Dh) | \allowbreak \Dhp) \allowbreak = 0$ and $\Rh_{\lzo}(P_{\Ls, \ell_c} (\Dh) | \Dhn) = 0$, which implies $P_{\Ls, \ell_c}$ is not perfectly attackable.

$(\Longrightarrow)$
We prove that if $\alpha^+ > 0$ or $\alpha^- > 0$, then $P_{\Ls, \ell_c}$ is perfectly attackable.

\begin{figure} [tb!]
\centering
\begin{subfigure}[b]{0.49\linewidth}
\centering
\begin{tikzpicture}[scale=0.4]
  \tikzset{
        >=stealth',
        help lines/.style={dashed, thick},
        axis/.style={<->},
        important line/.style={thick},
        connection/.style={thick, dotted},
        }
    
  \path
  coordinate (zero) at (0, 0)
  coordinate (Dp) at (-1.5,0)
  coordinate (Dn) at (1.5, 0)
  coordinate (ep) at (-3, 2)
  coordinate (f1_1) at (0, 3)
  coordinate (f1_2) at (0, -3)
  coordinate (f1_3) at (-1.4, 0)
  coordinate (f1_4) at (1.4, 0)
  coordinate (f2_0) at (4, 0)
  coordinate (f2_1) at (4, 2)
  coordinate (f2_2) at (4, -2)
  coordinate (f2_3) at (-1+4, 0)
  coordinate (f2_4) at (1+4, 0)
  ;
  \filldraw [black] (Dp) circle (2pt) 
  node[above , blue] {{\scriptsize $N^+$}}
  ;
  \filldraw [black] (Dn) circle (2pt) 
  node[above , red] {{\scriptsize$N^-$}}
  ;
  
  \draw[draw=none, important line, color=black] (f1_1) -- (f1_2);
        
\end{tikzpicture}
\caption{$\Dh$}
\label{fig:conv_necessary_proof_case_1a}
\end{subfigure}
\begin{subfigure}[b]{0.49\linewidth}
\centering
\begin{tikzpicture}[scale=0.4]
  \tikzset{
        >=stealth',
        help lines/.style={dashed, thick},
        axis/.style={<->},
        important line/.style={thick},
        connection/.style={thick, dotted},
        }
    
  \path
  coordinate (zero) at (0, 0)
  coordinate (Dp) at (-1.5,0)
  coordinate (Dn) at (1.5, 0)
  coordinate (ep) at (5, 0)
  coordinate (f1_1) at (0, 3)
  coordinate (f1_2) at (0, -3)
  coordinate (f1_3) at (-1.4, 0)
  coordinate (f1_4) at (1.4, 0)
  coordinate (f2_0) at (0, 1)
  coordinate (f2_1) at (-3.5, 1)
  coordinate (f2_2) at (3.5, 1)
  coordinate (f2_3) at (-1.4+2, 0)
  coordinate (f2_4) at (0, -0.5)
  ;

  \filldraw [black] (Dp) circle (2pt) 
  node[above , blue] {{\scriptsize $N^+ - 1$}} 
  ;
  \filldraw [black] (Dn) circle (2pt) 
  node[above , red] {{\scriptsize $N^-$}}
  ;
  \filldraw [black] (ep) circle (2pt) 
  node[above , blue] {{\scriptsize 1}};
  ;

  \draw[important line, color=black] (f1_1) -- (f1_2);
  \draw[->, color=black] ($(zero)+(0, -2)$) -- ($(f1_3)+(0, -2)$) node[below] {{\scriptsize $h_1$}};
  \draw[->, color=black] ($(zero)+(0, -2)$) -- ($(f1_3)+(2.8, -2)$) node[below] {{\scriptsize $h_2$}};
  \draw[important line, color=black] ($(f1_1) + (2, 0)$) -- ($(f1_2) + (2, 0)$);
  \draw[->, color=black] ($(zero)+(0+2, -2+4)$) -- ($(f1_3)+(0+2, -2+4)$) node[below] {{\scriptsize $h_3$}};
  \draw[->, color=black] ($(zero)+(0+2, -2+4)$) -- ($(f1_3)+(2.8+2, -2+4)$) node[below] {{\scriptsize $h_4$}};
  \draw[draw=none, important line, color=black] (f1_1) -- (f1_2);
  \draw[important line, color=black] (f2_1) -- (f2_2);
  \draw[->, color=black] ($(f2_0)+(-3, 0)$) -- ($(f2_4)+(-3, 0)$) node[right] {{\scriptsize$h_5$}};
  \draw[->, color=black] ($(f2_0)+(-3, 0)$) -- ($(f2_4)+(-3, 2.8)$) node[right] {{\scriptsize$h_6$}};
          
%
        
\end{tikzpicture}
\caption{$\Dha$}
\label{fig:conv_necessary_proof_case_1b}
\end{subfigure}
\caption{$\Dh$ and $\Dha$ for the necessary condition proof of \Pro \ref{thm:feasibility_conv}.}
\label{fig:conv_necessary_proof_case_1}
\end{figure}

First, assume $\alpha^+ > 0$.
Consider $\Dh$ and $\Dha$ illustrated in \figref{fig:conv_necessary_proof_case_1}. 
An attacker converts $\Dh$ to $\Dha$ by moving one positive data pair, $(\x_j, +1)$, to the right side, such that three points are collinear as in \figref{fig:conv_necessary_proof_case_1b}. 

Given $\Dha$, there are six classes of classifiers, $\Ls_{h_i}$, where $i \in [1, 6]$ and $h_i$ are classifiers illustrated in \figref{fig:conv_necessary_proof_case_1b}.
By \Lem \ref{lem:conv_loss_lower_bound}, an attacker can choose $\x_j$ such that $\ell_c(+1, \bar{h}(\x_j)) \geq k$, where  
\eqas{
\bar{h} &= \arg\inf_{h \in \Ls_{h_1}} \ell_c(+1, h(\x_j)) \text{~and} \\
k &\geq \max_{i \in [2, 6]} \inf_{h \in \Ls_i} \Rb_{\ell_c}(h | \Dha).
}

From this,  the following is true:
{\small
\eqas{
\inf_{h \in \Ls_{h_1}} \!\!\!\! \Rb_{\ell_c}(h | \Dha) 
	&= \!\! \inf_{h \in \Ls_{h_1}} \!\! \left\{ \Rb_{\ell_c}(h | \Dha \setminus \{(\x_j, +1)\}) + \ell_c(+1, h(\x_j)) \right\} \\
	&\geq \!\! \inf_{h \in \Ls_{h_1}} \!\! \Rb_{\ell_c}(h | \Dha \setminus \{(\x_j, +1)\}) + \ell_c(+1, \bar{h}(\x_j)) \\
	&\geq \!\! \inf_{h \in \Ls_{h_1}} \!\! \Rb_{\ell_c}(h | \Dha \setminus \{(\x_j, +1)\}) + \inf_{h \in \Ls_{h_i}} \Rb_{\ell_c}(h | \Dha) \\
	&\geq \!\! \inf_{h \in \Ls_{h_i}} \!\! \Rb_{\ell_c}(h | \Dha), 
}
}
where $i \in [2, 6]$.
Thus, $\Ls_{\hha} \cap \left( \cup_{i=2}^6 \Ls_{h_i} \right) \neq \emptyset$.
This implies there exists a classifier that is trained over $\Dha$ can misclassify all positive or negative feature vectors in $\Dh$, \ie $\Rh_{\ell_{01}}(\hha| \allowbreak \Dhp) \allowbreak = 1$ or $\Rh_{\ell_{01}}(\hha|\Dhn)  = 1$ for some $\hha \in \Ls_{h_i}$.
Therefore, $P_{\Ls, \ell_c}$ is perfectly attackable.
\end{proof}

}

\subsubsection{\Pro \ref{thm:feasibility_01}} \label{sec:feasibility_01_proof}
\begin{proof}
Let $\hha = P_{\Ls, \ell_{01}}(\Dha)$.
\Pro \ref{thm:feasibility_01} is formally described as follows:
\begin{multline*} 
  \forall N \forall \alpha \left( 
  \alpha^+ \geq \frac{1}{2} N^+ \vee \alpha^- \geq \frac{1}{2} N^+ \vee
  \right. \\ \left.
  \alpha^+ + \alpha^- \geq N^- \vee \alpha^+ + \alpha^- \geq N^+ 
  \right. \\ \left.
  \implies 
  \exists \Dh \exists \Dha \exists \hha, 
  \Rh_{\ell_{01}}(\hha|\Dhp) = 1 \vee 
  \Rh_{\ell_{01}}(\hha|\Dhn) = 1
  \vphantom{\frac{1}{2}} \right).
\end{multline*}

$(\Longrightarrow)$
Prove that if the conditions, $\alpha^+ \geq \frac{1}{2} N^+$,  $\alpha^- \geq \frac{1}{2} N^+$, $\alpha^+ + \alpha^- \geq N^-$, or $\alpha^+ + \alpha^- \geq N^+$, satisfies, then $P_{\Ls, \ell_{01}}$ is perfectly attackable.
\begin{figure} [tb!]
\centering
\begin{subfigure}[b]{0.49\linewidth}
\centering
\begin{tikzpicture}[scale=0.4]
  \tikzset{
        >=stealth',
        help lines/.style={dashed, thick},
        axis/.style={<->},
        important line/.style={thick},
        connection/.style={thick, dotted},
        }
    
  \path
  coordinate (zero) at (0, 0)
  coordinate (Dp) at (-2,0)
  coordinate (Dn) at (1, 0)
  coordinate (ep) at (3, 0)
  coordinate (f1_1) at (0, 2)
  coordinate (f1_2) at (0, -2)
  coordinate (f1_3) at (-1.4, 0)
  coordinate (f1_4) at (1.4, 0)
  coordinate (f2_0) at (4, 0)
  coordinate (f2_1) at (4, 2)
  coordinate (f2_2) at (4, -2)
  coordinate (f2_3) at (-1+4, 0)
  coordinate (f2_4) at (1+4, 0)
  ;
  \filldraw [black] (Dp) circle (2pt) 
  node[above , blue] {{\scriptsize $N^+$}}
  ;
  \filldraw [black] (Dn) circle (2pt) 
  node[above , red] {{\scriptsize$N^-$}}
  ;
  
        \draw[draw=none, important line, color=black] (f1_1) -- (f1_2);
        
\end{tikzpicture}
\caption{$\Dh$}
\label{fig:01_necessary_proof_case_1_a}
\end{subfigure}
\begin{subfigure}[b]{0.49\linewidth}
\centering
\begin{tikzpicture}[scale=0.4]
  \tikzset{
        >=stealth',
        help lines/.style={dashed, thick},
        axis/.style={<->},
        important line/.style={thick},
        connection/.style={thick, dotted},
        }
    
  \path
  coordinate (zero) at (0, 0)
  coordinate (Dp) at (-2,0)
  coordinate (Dn) at (1, 0)
  coordinate (ep) at (3, 0)
  coordinate (f1_1) at (0, 2)
  coordinate (f1_2) at (0, -2)
  coordinate (f1_3) at (-1.4, 0)
  coordinate (f1_4) at (1.4, 0)
  coordinate (f2_0) at (2, 0)
  coordinate (f2_1) at (2, 2)
  coordinate (f2_2) at (2, -2)
  coordinate (f2_3) at (-1.4+2, 0)
  coordinate (f2_4) at (1.4+2, 0)
  ;

  \filldraw [black] (Dp) circle (2pt) 
  node[above , blue] {{\scriptsize $N^+ - \alpha^+$}} 
  ;
  \filldraw [black] (Dn) circle (2pt) 
  node[above , red] {{\scriptsize $N^-$}}
  ;
  \filldraw [black] (ep) circle (2pt) 
  node[above , blue] {{\scriptsize $\alpha^+$}};
  ;

        \draw[important line, color=black] (f1_1) -- (f1_2);
        \draw[->, color=black] ($(zero)+(0, -1)$) -- ($(f1_3)+(0, -1)$) node[below] {{\scriptsize $h_1$}};

        \draw[important line, color=black] (f2_1) -- (f2_2);
        \draw[->, color=black] ($(f2_0)+(0, -1)$) -- ($(f2_4)+(0, -1)$) node[below] {{\scriptsize$h_2$}};
%
        
\end{tikzpicture}
\caption{$\Dha$}
\label{fig:01_necessary_proof_case_1_b}
\end{subfigure}
\caption{$\Dh$ and $\Dha$ for the necessary condition proof of \Thm \ref{thm:maximal_resilience} and \Pro \ref{thm:feasibility_01}.}
\label{fig:01_necessary_proof_case_1}
\end{figure}

First, assume $\alpha^+ \geq \frac{1}{2} N^+$.
To show $P_{\Ls, \ell_{01}}$ is perfectly attackable, it is enough to find $\Dh$ and $\Dha$ for all $N$ and $\alpha$ that make $P_{\Ls, \ell_{01}}$ perfectly attackable.
Consider $\Dh$ and $\Dha$ described in \figref{fig:01_necessary_proof_case_1}.
By a \tma, $\Dha$ can be represented as in \figref{fig:01_necessary_proof_case_1_b}. Note that the attacker does not move negative feature vectors even though it may have the ability to do that. 
Since $\alpha^+ \geq \frac{1}{2} N^+$, $\Rb(h_1 | \Dha) \geq \Rb(h_2 | \Dha) \geq \Rb(\hha | \Dha)$.
This means there exists $\hha$ that misclassifies all positive or negative feature vectors in $\Dh$, \ie $\Rh_{\ell_{01}}(\hha|\Dhp) = 1$ or $\Rh_{\ell_{01}}(\hha|\Dhn) = 1$ for some $\hha$.
Therefore, for all $N$ and $\alpha$, there exist $\Dh$ and $\Dha$ that make $P_{\Ls, \ell_{01}}$ perfectly attackable.
Likewise, if $\alpha^- \geq \frac{1}{2} N^-$, for all $N$ and $\alpha$, there exist $\Dh$ and $\Dha$ that make $P_{\Ls, \ell_{01}}$ perfectly attackable.


\begin{figure} [tb!]
\centering
\begin{subfigure}[b]{0.49\linewidth}
\centering
\begin{tikzpicture}[scale=0.4]
  \tikzset{
        >=stealth',
        help lines/.style={dashed, thick},
        axis/.style={<->},
        important line/.style={thick},
        connection/.style={thick, dotted},
        }
    
  \path
  coordinate (zero) at (0, 0)
  coordinate (Dp) at (-3,0)
  coordinate (Dn) at (3, 0)
  coordinate (ep) at (-3, 2)
  coordinate (f1_1) at (0, 3)
  coordinate (f1_2) at (0, -3)
  coordinate (f1_3) at (-1.4, 0)
  coordinate (f1_4) at (1.4, 0)
  coordinate (f2_0) at (4, 0)
  coordinate (f2_1) at (4, 2)
  coordinate (f2_2) at (4, -2)
  coordinate (f2_3) at (-1+4, 0)
  coordinate (f2_4) at (1+4, 0)
  ;
  \filldraw [black] (Dp) circle (2pt) 
  node[above , blue] {{\scriptsize $N^+$}}
  ;
  \filldraw [black] (Dn) circle (2pt) 
  node[above , red] {{\scriptsize$N^-$}}
  ;
  
        \draw[important line, color=black] (f1_1) -- (f1_2);
        \draw[->, color=black] ($(zero)+(0, -2)$) -- ($(f1_3)+(0, -2)$) node[below] {{\scriptsize $h$}};
        
\end{tikzpicture}
\caption{$\Dh$}
\label{fig:01_necessary_proof_case_3a}
\end{subfigure}
\begin{subfigure}[b]{0.49\linewidth}
\centering
\begin{tikzpicture}[scale=0.4]
  \tikzset{
        >=stealth',
        help lines/.style={dashed, thick},
        axis/.style={<->},
        important line/.style={thick},
        connection/.style={thick, dotted},
        }
    
  \path
  coordinate (zero) at (0, 0)
  coordinate (Dp) at (-3,0)
  coordinate (Dn) at (3, 0)
  coordinate (ep) at (-3, 2)
  coordinate (f1_1) at (0, 3)
  coordinate (f1_2) at (0, -3)
  coordinate (f1_3) at (-1.4, 0)
  coordinate (f1_4) at (1.4, 0)
  
  coordinate (f2_0) at (0, 1)
  coordinate (f2_1) at (-3.5, 1)
  coordinate (f2_2) at (3.5, 1)
  coordinate (f2_3) at (-1.4+2, 0)
  coordinate (f2_4) at (0, -0.5)
  
  coordinate (f3_0) at (2, -4.5/7 * 2 -0.7)
  coordinate (f3_1) at (-3.5, 1.5)
  coordinate (f3_2) at (3.5, -3)
  coordinate (f3_3) at ($(f3_0) + (-0.5, -7/4.5*0.5)$)
  coordinate (f3_4) at (0, -0.5)
  ;

  \filldraw [black] (Dp) circle (2pt) 
  node[above , blue] {{\scriptsize $N^+ - \alpha^+$}} 
  ;
  \filldraw [black] (Dn) circle (2pt) 
  node[above , red] {{\scriptsize $N^- - \alpha^-$}}
  node[below , blue] {{\scriptsize $\alpha^+$}}
  ;
  \filldraw [black] (ep) circle (2pt) 
  node[above , red] {{\scriptsize $\alpha^-$}};
  ;

	\draw[important line, color=black] (f1_1) -- (f1_2);
        \draw[->, color=black] ($(zero)+(0, -2)$) -- ($(f1_3)+(0, -2)$) node[below] {{\scriptsize $h_1$}};
        \draw[important line, color=black] (f3_1) -- (f3_2);
	\draw[->, color=black] ($(f3_0)+(0, 0)$) -- ($(f3_3)+(0, 0)$) node[below] {{\scriptsize $h_2$}};
        \draw[important line, color=black] (f2_1) -- (f2_2);
        \draw[->, color=black] ($(f2_0)+(1, 0)$) -- ($(f2_4)+(1, 0)$) node[right] {{\scriptsize$h_3$}};

%
        
\end{tikzpicture}
\caption{$\Dha$}
\label{fig:01_necessary_proof_case_3b}
\end{subfigure}
\caption{$\Dh$ and $\Dha$ for the necessary condition proof of \Pro \ref{thm:feasibility_01}.}
\label{fig:01_necessary_proof_case_3}
\end{figure}

Next, assume $\alpha^+ + \alpha^- \geq N^-$.
For all $N$ and $\alpha$, there exist $\Dh$ and $\Dha$ that make $P_{\Ls, \ell_{01}}$ perfectly attackable as illustrated in \figref{fig:01_necessary_proof_case_3}.
When $P_{\Ls, \ell_{01}}$ is trained over $\Dha$, one possible optimal classifier can be $h_3$ as represented in \figref{fig:01_necessary_proof_case_3b} among eight different classifiers except for $h_1$ and $h_2$. Since
$\Rb(h_1 | \Dha) = \alpha^+ + \alpha^- \geq \Rb(h_2 | \Dha) = \alpha^+ \geq N^- - \alpha^-$, and $\Rb(h_3 | \Dha) = N^- - \alpha^-$, the empirical risk of $h_3$ is as small as $h_1$ and $h_2$. Thus, any classifier, $h \in \Ls_{h3}$, has $\Rh_{\ell_{01}}(h | \Dhp) = 1$ or $\Rh_{\ell_{01}}(h | \Dhn) = 1$, which implies for all $N$ and $\alpha$ there exists $\Dh$ and $\Dha$ that make $P_{\Ls, \ell_{01}}$ perfectly attackable.
Likewise, for all $N$ and $\alpha$, if $\alpha^+ + \alpha^- \geq N^+$, then $P_{\Ls, \ell_{01}}$ is perfectly attackable for some $\Dh$ and $\Dha$.

Therefore, for all the above mentioned four cases, there exist $\Dh$ and $\Dha$ that make $P_{\Ls, \ell_{01}}$ perfectly attackable.
\end{proof}

\subsubsection{\Thm \ref{thm:feasibility_maj}} \label{sec:feasibility_maj_proof}
\begin{proof}
Let $\hha = P_{\Ms, \ell_{01}}(\Dha)$,  $h^* = P_{\Ms, \ell_{01}}(\Dh)$, where $\Rh(h^* | \Dh) = 0$, and
$\As\!\! = \!\!\left\{ (\alpha^+, \alpha^-) \middle| \alpha^+ < \frac{1}{2} N^+ \text{~and~}  \allowbreak \alpha^- < \frac{1}{2} N^- \!\! \right\}$
Formally, show the following statement:
\begin{multline*}
  \forall N \forall \alpha \left( 
  \alpha^+ < \frac{1}{2} N^+ \wedge \alpha^- < \frac{1}{2} N^-
  \Longleftrightarrow 
  \forall \Dh \forall \Dha,  
  \right. \\ \left.
  \left( \mathcal{M} \neq \emptyset \right) \wedge 
  \left( \forall \hha,~\Rh_{\ell_{01}}(\hha | \Dhp) < 1 \wedge \Rh_{\ell_{01}}(\hha | \Dhn) < 1 \right) \vphantom{\frac{1}{2}} 
  \vphantom{\frac{1}{2}} \right).
\end{multline*}

$(\Longrightarrow)$
First, prove that if the conditions, $\alpha^+ < \frac{1}{2} N^+$ and $\alpha^- < \frac{1}{2} N^-$, hold, then $P_{\Ms, \ell_{01}}$ is feasible and resilient to $\alpha$-\tma~for all $a \in \As$.
To prove $\Ms \neq \emptyset$, 
it is enough to show $\Ms$ includes at least one element for all $N$, $\alpha$, $\Dh$, and $\Dha$. 
Consider $h^*$. 
By \Lem \ref{lem:max_risk_gap_under_bad_attacks} and the conditions,
\begin{align*}
\Rh_{\ell_{01}}(h^* | \Dhap) 
	\leq \Rh_{\ell_{01}}(h^* | \Dhp) + \frac{\alpha^+}{N^+} 
	\leq \frac{\alpha^+}{N^+} < \frac{1}{2}.
\end{align*}
Likewise, $\Rh(h^* | \Dhan) < \frac{1}{2}$, suggesting $h^* \in \Ms$ for all $N$, $\alpha$, $\Dh$, and $\Dha$.

Since $\Ms \neq \emptyset$, prove that $P_{\Ms, \ell_{01}}$ is resilient.
By the definition of $\Ms$, 
\begin{align*}
\Rh(\hha | \Dhap) < \frac{1}{2} \wedge \Rh(\hha | \Dhan) < \frac{1}{2}
\end{align*}
for all $\Dh$, $\Dha$, and $\hha$.
Thus, 
\begin{align*}
\Rh_{\ell_{01}}(\hha | \Dhp)
\leq \Rh_{\ell_{01}}(\hha | \Dhap) + \frac{\alpha^+}{N^+}
< \frac{1}{2} + \frac{1}{2}  = 1
\end{align*}
by \Lem \ref{lem:max_risk_gap_under_good_attacks}, the majority constraint, and the conditions. Likewise, $\Rh(\hha | \Dhn) < 1$.
Therefore,  for all $N$, $\alpha$, $\Dh$, and $\Dha$, $P_{\Ms, \ell_{01}}$ is resilient to $\alpha$-\tma~if $\alpha \in \As$.

$(\Longleftarrow)$
Next, prove that if $P_{\Ms, \ell_{01}}$ is feasible and resilient, then the conditions, $\alpha^+ < \frac{1}{2} N^+ \wedge \alpha^- < \frac{1}{2} N^-$, hold. Equivalently,
\begin{multline*}
  \forall N \forall \alpha \left( 
  \alpha^+ \geq \frac{1}{2} N^+ \vee \alpha^- \geq \frac{1}{2} N^-
  \implies 
  \exists \Dh \exists \Dha, 
  \right. \\ \left.
  \left( \Ms = \emptyset \right) \vee
  \left( \exists \hha,~\Rh(\hha | \Dhp) = 1 \vee \Rh(\hha | \Dhn) = 1 \right) \vphantom{\frac{1}{2}} 
  \vphantom{\frac{1}{2}} \right).
\end{multline*}
To show this, it is enough to find $\Dh$ and $\Dha$ for all $N$ and $\alpha$ that make $P_{\Ms, \ell_{01}}$ infeasible or perfectly attackable.
Assume $P_{\Ms, \ell_{01}}$ is feasible and check whether it is perfectly attackable.
The proof is same as proving $P_{\Ls, \ell_{01}}$ is perfectly attackable if $\alpha^+ \geq \frac{1}{2} N^+$ or $\alpha^- \geq \frac{1}{2} N^-$ in \Sec \ref{sec:feasibility_01_proof}.
The main difference is the feasible set. Here, $\Ms$ is used instead of $\Ls$. 
But, we can use the same proof to prove $P_{\Ms, \ell_{01}}$ is perfectly attackable
since $h_2$ in \figref{fig:01_necessary_proof_case_1_b} still satisfy the majority constraint such that $h_2 \in \Ms$.

Therefore, there exist $\Dh$ and $\Dha$ that make $P_{\Ms, \ell_{01}}$ perfectly attackable if $\alpha^+ \geq \frac{1}{2} N^+$ or $\alpha^- \geq \frac{1}{2} N^-$.
\end{proof}

\subsubsection{\Thm \ref{thm:robustness_maj}} \label{sec:robustness_maj_proof}

\begin{proof}
Let $\hha = P_{\Ms, \ell_{01}}(\Dha)$,
$h^* = P_{\Ms, \ell_{01}}(\Dh)$, where $\Rh(h^* | \Dh) = 0$, and 
$\Rb(\cdot) = \Rb_{\lzo}(\cdot)$ for the notational simplicity of this proof. 

Show that the worst-case resilience of $P_{\Ms, \lzo}$ is tightly bounded.
Formally, 
\begin{multline}
  \forall N \forall \alpha \left( 
  \alpha^+  < \frac{1}{2} N^+ \wedge \alpha^-  < \frac{1}{2} N^- 
  \implies 
  \right. \\ \left.
  \forall \Dh \forall \Dha \forall \hha,
  V(P_{\Ms, \ell_{01}} | N, \alpha) \leq 
  \right. \\ \left.
  \max\left( 
  	\frac{\min\left( 2\alpha^+ + \alpha^-, \alpha^+ + \frac{N^+ - 1}{2}\right)}{N^+}, 
	\right.\right. \\ \left.\left.
	\frac{\min\left( \alpha^+ + 2\alpha^-, \alpha^- + \frac{N^- - 1}{2}\right)}{N^-} \right)
  \!\! \vphantom{\frac{1}{2}} \right). \label{eq:maj_robustness_bnd_formal}
\end{multline}
Note that the proof of \Thm \ref{thm:feasibility_maj} implies if $\alpha^+ < \frac{1}{2} N^+$ and $\alpha^- < \frac{1}{2} N^-$, then $P_{\Ms, \lzo}$ is feasible. 

(\textbf{bounded})~
First, we prove that the worst-case resilience of $P_{\Ms, \lzo}$ is bounded.
The optimality of $\hha$ with \Lem \ref{lem:max_risk_gap} implies the following:
\eqas{
	\Rb(\hha | \Dhap) + \Rb(\hha | \Dhan) &\leq \Rb(h^* | \Dhap) + \Rb(h^* | \Dhan) - k \\
	\Rb(\hha | \Dhap) &\leq \Rb(h^* | \Dhap) + \Rb(h^* | \Dhan) -k \\ &- \Rb(\hha | \Dhan) \\
	&\leq \Rb(h^* | \Dhap) + \Rb(h^* | \Dhan) -k \\
	&= \Rb(h^* | \Dhap) - \Rb(h^* | \Dhp) \\&+ \Rb(h^* | \Dhan) - \Rb(h^* | \Dhn) -k \\
	&\leq \alpha^+ + \alpha^- -k,
}
where $k$ is a slack variable to satisfy the majority constraint such that $\alpha^+ + \alpha^- - k \leq \frac{N^+ - 1}{2}$.

If $\alpha^+ + \alpha^- \leq \frac{N^+ - 1}{2}$, $k =0$ maximizes $\Rb(\hha | \Dhap)$. Thus,
\eqa{
\Rb( \hha | \Dhp ) 
&\leq \Rb( \hha | \Dhap) + \alpha^+ \nonumber \\
&\leq 2\alpha^+ + \alpha^-. \label{eq:bound_1}
}

If $\alpha^+ + \alpha^- > \frac{N^+ - 1}{2}$, $k$ can be the smallest non-zero scalar such that $\alpha^+ + \alpha^- - k \leq \frac{N^+ - 1}{2}$. Thus,
\eqa{
\Rb( \hha | \Dhp ) 
&\leq \Rb( \hha | \Dhap) + \alpha^+ \nonumber \\
&\leq \alpha^+ + \frac{N^+ - 1}{2}. \label{eq:bound_2}
}

From the above two cases, 
\eqa{
	\Rh( \hha | \Dhp ) \leq \frac{\min \left( 2\alpha^+ + \alpha^-, \alpha^+ + \frac{N^+ - 1}{2} \right)}{N^+}. \label{eq:final_bound_1}
}
Likewise, 
\eqa{
	\Rh( \hha | \Dhn ) \leq \frac{\min \left( \alpha^+ + 2\alpha^-, \alpha^- + \frac{N^- - 1}{2} \right)}{N^-}. \label{eq:final_bound_2}
}
Therefore, for all $N$, $\alpha$, $\Dh$, $\Dha$, and $\hha$, \eqnref{eq:maj_robustness_bnd_formal} is true.


\begin{figure} [t!]
\centering
\begin{subfigure}[c]{0.49\linewidth}
\centering
\captionsetup{justification=centering}
\begin{tikzpicture}[scale=0.4]
  \tikzset{
        >=stealth',
        help lines/.style={dashed, thick},
        axis/.style={<->},
        important line/.style={thick},
        connection/.style={thick, dotted},
        }
  \coordinate (y) at (0,5);
  \coordinate (x) at (5,0);
    
  \path
  coordinate (zero) at (0, 0)
  coordinate (alpha-p) at (-1,0)
  coordinate (r-alpha-p) at (-1, 1*2)
  coordinate (alpha-n) at (1,1*2)
  coordinate (r-alpha-n) at (1, -0*2)
  coordinate (f1_1) at (-4, 0.25*2*4 + 1)
  coordinate (f1_2) at (4, -0.25*2*4 + 1)
  coordinate (f1_3) at (1, 1/0.25/2)
  coordinate (f2_1) at (-5, 0.5*5)
  coordinate (f2_2) at (5, -0.5*5)
  coordinate (f2_3) at (-1.0*1, -1.0*1/0.5)
  coordinate (f0_1) at (0, 0.75*2*3)
  coordinate (f0_2) at (0, -0.5*2*3)
  coordinate (f0_3) at (-2, 0)
  ;
  \filldraw [black] (alpha-p) circle (2pt) 
  node[ left, blue] {{\scriptsize $2\alpha^+ + \alpha^- - k$}}
  ;
  \filldraw [blue] (r-alpha-p) circle (2pt) 
  node[left, yshift=-0.5ex, blue] {{\scriptsize$N^+ - 2\alpha^+ - \alpha^- + k$}}
  ;
  \filldraw [red] (r-alpha-n) circle (2pt) 
  node[right, yshift=-0.5ex, red] {{\scriptsize $N^-$}}
  ;
        \draw[important line, color=black] (f0_1) -- (f0_2);
        \draw[->, thick, color=black] (0, -2) -- ($(f0_3) + (0, -2)$) node [above left] {{\footnotesize $h^*$}};
\end{tikzpicture}
\caption{$\Dh$ and $h^*$}
\label{fig:equal_example_hbar}
\end{subfigure}
\centering
\begin{subfigure}[c]{0.49\linewidth}
\centering
\captionsetup{justification=centering}
\begin{tikzpicture}[scale=0.4]
  \tikzset{
        >=stealth',
        help lines/.style={dashed, thick},
        axis/.style={<->},
        important line/.style={thick},
        connection/.style={thick, dotted},
        }
  \coordinate (y) at (0,5);
  \coordinate (x) at (5,0);
    
  \path
  coordinate (zero) at (0, 0)
  coordinate (alpha-p) at (-1,0)
  coordinate (r-alpha-p) at (-1, 1*2)
  coordinate (alpha-n) at (1,1*2)
  coordinate (r-alpha-n) at (1, -0*2)
  coordinate (f1_1) at (-4, 0.25*2*4 + 1)
  coordinate (f1_2) at (4, -0.25*2*4 + 1)
  coordinate (f1_3) at (1, 1/0.25/2)
  coordinate (f0_1) at (0, 0.75*2*3)
  coordinate (f0_2) at (0, -0.5*2*3)
  coordinate (f0_3) at (-2, 0)
  ;
        \draw[important line, color=black] (f1_1) -- (f1_2);
        \draw[->, thick, color=black] (0, 1) -- ($(f1_3) + (0, 1)$) node [above] {{\footnotesize $\hha$}};
        \filldraw [black] (alpha-p) circle (2pt) 
        node[ left, blue] {{\scriptsize $2\alpha^+ + \alpha^- - k$}}
        ;
        \filldraw [blue] (r-alpha-p) circle (2pt) 
        node[left, yshift=-0.5ex, blue] {{\scriptsize$N^+ - 2\alpha^+ - \alpha^- + k$}}
        ;
        \filldraw [red] (r-alpha-n) circle (2pt) 
        node[right, yshift=-0.5ex, red] {{\scriptsize $N^-$}}
        ;
        \draw[draw=none] (f0_1) -- (f0_2);
\end{tikzpicture}
\caption{$\Dh$ and $\hha$}
\label{fig:equal_example_hhat}
\end{subfigure}
\caption{An example of $\Dh$ for proving on the worst-case tight resilience bound of \Thm \ref{thm:robustness_maj}.}
\label{fig:maj_equal_example}
\end{figure}

\begin{figure}[t!]
\centering
\centering
\begin{tikzpicture}[scale=0.75]
  \tikzset{
        >=stealth',
        help lines/.style={dashed, thick},
        axis/.style={<->},
        important line/.style={thick},
        connection/.style={thick, dotted},
        }
  \coordinate (y) at (0,5);
  \coordinate (x) at (5,0);
    
  \path
  coordinate (zero) at (0, 0)
  coordinate (alpha-p) at (-1,0)
  coordinate (r-alpha-p) at (-1, 1*2)
  coordinate (alpha-n-att) at (-3, 0)
  coordinate (alpha-n) at (1,1*2)
  coordinate (r-alpha-n) at (1, -0*2)
  coordinate (alpha-p-att) at (3, -0*2)
  coordinate (f1_1) at (-4, 0.25*2*4 + 1)
  coordinate (f1_2) at (4, -0.25*2*4 + 1)
  coordinate (f1_3) at (1, 1/0.25/2)
  ;
        \draw[connection] (alpha-p-att) -- (alpha-n-att);
        \draw[important line, color=black] (f1_1) -- (f1_2);
        \draw[->, thick, color=black] (0, 1) -- ($(f1_3) + (0, 1)$) node [above] {{\footnotesize $\Rb(\hha | \Dha) = \alpha^+ + \alpha^- - k$}};
        \filldraw [black, color=black] (alpha-p) circle (2pt) 
        node[ below, blue] {{\scriptsize $-\alpha^+$}}
        ;
        \filldraw [black, color=blue] (r-alpha-p) circle (2pt) 
        ;
        \filldraw [black, color=black] (alpha-n-att) circle (2pt) 
        node[ below, red] {{\scriptsize$\alpha^-$}}
        ;
        \filldraw [black, color=red] (r-alpha-n) circle (2pt) 
        node[ below, red] {{\scriptsize$-\alpha^-$}}
        ;
        \filldraw [black, color=black] (alpha-p-att) circle (2pt) 
        node[ below, blue] {{\scriptsize$\alpha^+$}}
        ;
\end{tikzpicture}
\caption{An example of $\Dha$ for proving on the worst-case tight resilience bound of \Thm \ref{thm:robustness_maj}.}
\label{fig:maj_equal_examples_attack}
\end{figure}

(\textbf{tight})~
Next, we prove that the worst-case resilience bound is tight.
To prove the tightness, we show
there exist $\Dh$, $\Dha$, and $\hha$ by which the equality of \eqnref{eq:maj_robustness_bnd_formal} holds for all $N$ and $\alpha$ if $h$ is linear, $\alpha^+ < \frac{1}{2} N^+$, and $\alpha^- < \frac{1}{2} N^-$. 

The examples of $\Dh$, $h^*$, and $\hha$ are illustrated on \figref{fig:maj_equal_example}.
\figref{fig:equal_example_hbar} shows $\Dh$ and the corresponding optimal classifier $h^*$. 
\figref{fig:equal_example_hhat} shows $\Dh$ and the projected optimal classifier $\hha$ trained over $\Dha$. 
\figref{fig:maj_equal_examples_attack} represents $\Dha$ from which $\hha$ is obtained. 
Note that the annotated number of positive and negative feature vectors are the difference from the original numbers in \figref{fig:maj_equal_example}; \ie the blue dot has $N^+ - 2\alpha^+ - \alpha^- + k$ number of positive feature vectors as originally denoted and the red dot has $N^- - \alpha^-$ number of negative feature vectors. 

If $\alpha^+ + \alpha^- \leq \frac{1}{2} (N^+ - 1)$, let $k = 0$. 
There are 16 different linear classifiers that separate five dots in \figref{fig:maj_equal_examples_attack}. Each classifier has an empirical risk, $\Rb(\cdot)$, among $\alpha^+ + \alpha^-$, $2\alpha^+ + \alpha^-$, $2\alpha^+ + 2\alpha^-$ or is infeasible due to the majority constraint. Thus, $\hha$ can be optimal. From this, $\Rb(\hha | \Dhp) = 2\alpha^+ + \alpha^-$ that satisfies the equality in \eqnref{eq:bound_1}. 
Likewise, we can find $\Dh$, $\Dha$, and $\hha$ such that $\Rb(\hha | \Dhn) = \alpha^+ + 2\alpha^-$.

If $\alpha^+ + \alpha^- > \frac{1}{2} (N^+ - 1)$, let $k = \alpha^+ + \alpha^- - \frac{1}{2}(N^+ - 1)$. 
There are 16 different linear classifiers that separate 5 dots in \figref{fig:maj_equal_examples_attack}. Each classifier has an empirical risk, $\Rb(\cdot)$, among $\alpha^+ + \alpha^- - k$, $\alpha^+ + \alpha^-$, $2\alpha^+ + \alpha^- - k$, $2\alpha^+ + 2\alpha^- - k$ or is infeasible due to the majority constraint. Thus, $\hha$ can be optimal. From this, $\Rb(\hha | \Dhp) = \alpha^+ + \frac{1}{2}(N^+ - 1)$ that satisfies the equality in \eqnref{eq:bound_2}. 
Likewise, we can find $\Dh$, $\Dha$, and $\hha$ such that $\Rb(\hha | \Dhn) = \alpha^- + \frac{1}{2}(N^- - 1)$.

Therefore, for all $N$ and $\alpha$ such that $\alpha^+ < \frac{1}{2} N^+$ and $\alpha^- < \frac{1}{2} N^-$, 
there exist $\Dh$, $\Dha$, and $\hha$ that make the equalities in \eqnref{eq:final_bound_1} and \eqnref{eq:final_bound_2} hold.
\end{proof}

\qcomment{
\begin{proof}
Let $\hha = P_{\Ms, \ell_{01}}(\Dha)$ and $h^* = P_{\Ms, \ell_{01}}(\Dh)$, where $\Rh(h^* | \Dh) = 0$.

Show that the worst-case performance of $P_{\Ms, \lzo}$ is tightly bounded.
Formally, 
\begin{multline*}
  \forall N \forall \alpha \left( 
  \alpha^+  < \frac{1}{2} N^+ \wedge \alpha^-  < \frac{1}{2} N^- 
  \implies 
  \right. \\ \left.
  \forall \Dh \forall \Dha \forall \hha,
  V(P_{\Ms, \ell_{01}} | \Dh, \alpha) \leq 2 \max\left( \!  \frac{\alpha^+}{|\Dhp|}, \frac{\alpha^-}{|\Dhn|} \! \right)
  \!\! \vphantom{\frac{1}{2}} \right).
\end{multline*}
Note that the proof of \Thm \ref{thm:feasibility_maj} implies if $\alpha^+ < \frac{1}{2} N^+$ and $\alpha^- < \frac{1}{2} N^-$, then $P_{\Ms, \lzo}$ is feasible. 

From \Lem \ref{lem:max_risk_gap} and the optimality of $\hha$, such that $\Rh(\hha | \Dhap) \leq \Rh(h^* | \Dhap)$,
\eqas{
\Rh( \hha | \Dhp ) 
&\leq \Rh( \hha | \Dhap) + \frac{\alpha^+}{N^+} \\
&\leq \Rh( \hha | \Dhap) - \Rh( h^* | \Dhp) + \frac{\alpha^+}{N^+} \\
&\leq \Rh( h^* | \Dhap) - \Rh( h^* | \Dhp) + \frac{\alpha^+}{N^+} 
\leq 2 \frac{\alpha^+}{N^+}.
}
Likewise, $\Rh( \hha | \Dhn ) \leq 2 \frac{\alpha^-}{N^-}$.
Therefore, 
\eqas{
\max_{\Dh, \alpha} V(P_{\Ms, \ell_{01}} | \Dh, \alpha) \leq 2\max\left( \frac{\alpha^+}{N^+}, \frac{\alpha^-}{N^-} \right).
}

\begin{figure} [t!]
\centering
\begin{subfigure}[c]{0.49\linewidth}
\centering
\captionsetup{justification=centering}
\begin{tikzpicture}[scale=0.4]
  \tikzset{
        >=stealth',
        help lines/.style={dashed, thick},
        axis/.style={<->},
        important line/.style={thick},
        connection/.style={thick, dotted},
        }
  \coordinate (y) at (0,5);
  \coordinate (x) at (5,0);
    
  \path
  coordinate (zero) at (0, 0)
  coordinate (alpha-p) at (-1,0)
  coordinate (r-alpha-p) at (-1, 1*2)
  coordinate (alpha-n) at (1,1*2)
  coordinate (r-alpha-n) at (1, -0*2)
  coordinate (f1_1) at (-4, 0.25*2*4 + 1)
  coordinate (f1_2) at (4, -0.25*2*4 + 1)
  coordinate (f1_3) at (1, 1/0.25/2)
  coordinate (f2_1) at (-5, 0.5*5)
  coordinate (f2_2) at (5, -0.5*5)
  coordinate (f2_3) at (-1.0*1, -1.0*1/0.5)
  coordinate (f0_1) at (0, 0.75*2*3)
  coordinate (f0_2) at (0, -0.5*2*3)
  coordinate (f0_3) at (-2, 0)
  ;
  \filldraw [black] (alpha-p) circle (2pt) 
  node[ left, blue] {{\scriptsize $2\alpha^+$}}
  ;
  \filldraw [black] (r-alpha-p) circle (2pt) 
  node[left, yshift=-0.5ex, blue] {{\scriptsize$N^+ - 2\alpha^+$}}
  ;
  \filldraw [black] (alpha-n) circle (2pt) 
  node[ right, red] {{\scriptsize $2\alpha^-$}}
  ;
  \filldraw [black] (r-alpha-n) circle (2pt) 
  node[right, yshift=-0.5ex, red] {{\scriptsize $N^- - 2\alpha^-$}}
  ;
        \draw[important line, color=black] (f0_1) -- (f0_2);
        \draw[->, thick, color=black] (0, -2) -- ($(f0_3) + (0, -2)$) node [above left] {{\footnotesize $h^*$}};
\end{tikzpicture}
\caption{$\Dh$ and $h^*$}
\label{fig:equal_example_hbar}
\end{subfigure}
\centering
\begin{subfigure}[c]{0.49\linewidth}
\centering
\captionsetup{justification=centering}
\begin{tikzpicture}[scale=0.4]
  \tikzset{
        >=stealth',
        help lines/.style={dashed, thick},
        axis/.style={<->},
        important line/.style={thick},
        connection/.style={thick, dotted},
        }
  \coordinate (y) at (0,5);
  \coordinate (x) at (5,0);
    
  \path
  coordinate (zero) at (0, 0)
  coordinate (alpha-p) at (-1,0)
  coordinate (r-alpha-p) at (-1, 1*2)
  coordinate (alpha-n) at (1,1*2)
  coordinate (r-alpha-n) at (1, -0*2)
  coordinate (f1_1) at (-4, 0.25*2*4 + 1)
  coordinate (f1_2) at (4, -0.25*2*4 + 1)
  coordinate (f1_3) at (1, 1/0.25/2)
  coordinate (f0_1) at (0, 0.75*2*3)
  coordinate (f0_2) at (0, -0.5*2*3)
  coordinate (f0_3) at (-2, 0)
  ;
        \draw[important line, color=black] (f1_1) -- (f1_2);
        \draw[->, thick, color=black] (0, 1) -- ($(f1_3) + (0, 1)$) node [above] {{\footnotesize $\hha$}};
        \filldraw [black] (alpha-p) circle (2pt) 
        node[ left, blue] {{\scriptsize $2\alpha^+$}}
        ;
        \filldraw [black] (r-alpha-p) circle (2pt) 
        node[left, yshift=-0.5ex, blue] {{\scriptsize$N^+ - 2\alpha^+$}}
        ;
        \filldraw [black] (alpha-n) circle (2pt) 
        node[ right, red] {{\scriptsize $2\alpha^-$}}
        ;
        \filldraw [black] (r-alpha-n) circle (2pt) 
        node[right, yshift=-0.5ex, red] {{\scriptsize $N^- - 2\alpha^-$}}
        ;
        \draw[draw=none] (f0_1) -- (f0_2);
\end{tikzpicture}
\caption{$\Dh$ and $\hha$}
\label{fig:equal_example_hhat}
\end{subfigure}
\caption{An example of $\Dh$ for proving on a tight robust bound of \Thm \ref{thm:robustness_maj}.}
\label{fig:maj_equal_example}
\end{figure}

\begin{figure}[t!]
\centering
\begin{subfigure}[c]{0.49\linewidth}
\centering
\begin{tikzpicture}[scale=0.4]
  \tikzset{
        >=stealth',
        help lines/.style={dashed, thick},
        axis/.style={<->},
        important line/.style={thick},
        connection/.style={thick, dotted},
        }
  \coordinate (y) at (0,5);
  \coordinate (x) at (5,0);
    
  \path
  coordinate (zero) at (0, 0)
  coordinate (alpha-p) at (-1,0)
  coordinate (r-alpha-p) at (-1, 1*2)
  coordinate (alpha-n-att) at (-3, 1*2)
  coordinate (alpha-n) at (1,1*2)
  coordinate (r-alpha-n) at (1, -0*2)
  coordinate (alpha-p-att) at (3, -0*2)
  coordinate (f1_1) at (-4, 0.25*2*4 + 1)
  coordinate (f1_2) at (4, -0.25*2*4 + 1)
  coordinate (f1_3) at (1, 1/0.25/2)
  ;
        \draw[connection] (alpha-p-att) -- (alpha-p);
        \draw[connection] (alpha-n-att) -- (alpha-n);
        \draw[important line, color=black] (f1_1) -- (f1_2);
        \draw[->, thick, color=black] (0, 1) -- ($(f1_3) + (0, 1)$) node [above] {{\footnotesize $\Rb(\hha | \Dha) = \alpha$}};
        \filldraw [black, color=black] (alpha-p) circle (2pt) 
        node[ left, blue] {{\scriptsize $\alpha^+$}}
        ;
        \filldraw [black, color=blue] (r-alpha-p) circle (2pt) 
        ;
        \filldraw [black, color=black] (alpha-n-att) circle (2pt) 
        node[ left, red] {{\scriptsize$\alpha^-$}}
        ;
        \filldraw [black, color=black] (alpha-n) circle (2pt) 
        node[ right, red] {{\scriptsize $\alpha^-$}}
        ;
        \filldraw [black, color=red] (r-alpha-n) circle (2pt) 
        ;
        \filldraw [black, color=black] (alpha-p-att) circle (2pt) 
        node[ right, blue] {{\scriptsize$\alpha^+$}}
        ;
\end{tikzpicture}
\caption{{\scriptsize$\alpha^+ < \frac{1}{3}N^+$ and $\alpha^- < \frac{1}{3}N^-$}}
\label{fig:maj-equal-case1}
\end{subfigure}
\begin{subfigure}[c]{0.49\linewidth}
\centering
\begin{tikzpicture}[scale=0.4]
  \tikzset{
        >=stealth',
        help lines/.style={dashed, thick},
        axis/.style={<->},
        important line/.style={thick},
        connection/.style={thick, dotted},
        }
  \coordinate (y) at (0,5);
  \coordinate (x) at (5,0);
    
  \path
  coordinate (zero) at (0, 0)
  coordinate (alpha-p) at (-1,0)
  coordinate (r-alpha-p) at (-1, 1*2)
  coordinate (alpha-n-att) at (-3, 1*2)
  coordinate (alpha-n) at (1,1*2)
  coordinate (r-alpha-n) at (1, -0*2)
  coordinate (alpha-p-att) at (3, -0*2)
  coordinate (f1_1) at (-4, 0.25*2*4 + 1)
  coordinate (f1_2) at (4, -0.25*2*4 + 1)
  coordinate (f1_3) at (1, 1/0.25/2)
  ;
        \draw[connection] (alpha-p-att) -- (alpha-p);
        \draw[important line, color=black] (f1_1) -- (f1_2);
        \draw[->, thick, color=black] (0, 1) -- ($(f1_3) + (0, 1)$) node [above] {{\footnotesize $\Rb(\hha | \Dha) = \alpha$}};
        \filldraw [black, color=black] (alpha-p) circle (2pt) 
        node[above left, yshift=-0.5ex, blue] {{\scriptsize $\alpha^+$}}
        node[below left, yshift=0.5ex, red] {{\scriptsize $\alpha^-$}}
        ;
        \filldraw [black, color=blue] (r-alpha-p) circle (2pt) 
        ;
        \filldraw [black, color=black] (alpha-n) circle (2pt) 
        node[ right, red] {{\scriptsize $\alpha^-$}}
        ;
        \filldraw [black, color=red] (r-alpha-n) circle (2pt) 
        ;
        \filldraw [black, color=black] (alpha-p-att) circle (2pt) 
        node[ right, blue] {{\scriptsize$\alpha^+$}}
        ;
\end{tikzpicture}
\caption{{\scriptsize$\frac{1}{3}N^+ \leq \alpha^+ < \frac{1}{2} N^+$ and \\$\alpha^- < \frac{1}{3}N^-$}}
\label{fig:maj-equal-case2}
\end{subfigure}
~\\
~\\
\begin{subfigure}[c]{0.49\linewidth}
\centering
\begin{tikzpicture}[scale=0.4]
  \tikzset{
        >=stealth',
        help lines/.style={dashed, thick},
        axis/.style={<->},
        important line/.style={thick},
        connection/.style={thick, dotted},
        }
  \coordinate (y) at (0,5);
  \coordinate (x) at (5,0);
    
  \path
  coordinate (zero) at (0, 0)
  coordinate (alpha-p) at (-1,0)
  coordinate (r-alpha-p) at (-1, 1*2)
  coordinate (alpha-n-att) at (-3, 1*2)
  coordinate (alpha-n) at (1,1*2)
  coordinate (r-alpha-n) at (1, -0*2)
  coordinate (alpha-p-att) at (3, -0*2)
  coordinate (f1_1) at (-4, 0.25*2*4 + 1)
  coordinate (f1_2) at (4, -0.25*2*4 + 1)
  coordinate (f1_3) at (1, 1/0.25/2)
  ;
  \draw[connection] (alpha-n-att) -- (alpha-n);
        \draw[important line, color=black] (f1_1) -- (f1_2);
        \draw[->, thick, color=black] (0, 1) -- ($(f1_3) + (0, 1)$) node [above] {{\footnotesize $\Rb(\hha | \Dha) = \alpha$}};
        \filldraw [black, color=black] (alpha-p) circle (2pt) 
        node[ left, blue] {{\scriptsize $\alpha^+$}}
        ;
        \filldraw [black, color=blue] (r-alpha-p) circle (2pt) 
        ;
        \filldraw [black, color=black] (alpha-n-att) circle (2pt) 
        node[left, red] {{\scriptsize$\alpha^-$}}
        ;
        \filldraw [black, color=black] (alpha-n) circle (2pt) 
        node[above right, yshift=-0.5ex, red] {{\scriptsize $\alpha^-$}}
        node[below right, yshift=0.5ex, blue] {{\scriptsize $\alpha^+$}}
        ;
        \filldraw [black, color=red] (r-alpha-n) circle (2pt) 
        ;
\end{tikzpicture}
\caption{{\scriptsize$\alpha^+ < \frac{1}{3}N^+$ and \\ $\frac{1}{3}N^- \leq \alpha^- < \frac{1}{2} N^-$}}
\label{fig:maj-equal-case3}
\end{subfigure}
\begin{subfigure}[c]{0.49\linewidth}
\centering
\begin{tikzpicture}[scale=0.4]
  \tikzset{
        >=stealth',
        help lines/.style={dashed, thick},
        axis/.style={<->},
        important line/.style={thick},
        connection/.style={thick, dotted},
        }
  \coordinate (y) at (0,5);
  \coordinate (x) at (5,0);
    
  \path
  coordinate (zero) at (0, 0)
  coordinate (alpha-p) at (-1,0)
  coordinate (r-alpha-p) at (-1, 1*2)
  coordinate (alpha-n-att) at (-3, 1*2)
  coordinate (alpha-n) at (1,1*2)
  coordinate (r-alpha-n) at (1, -0*2)
  coordinate (alpha-p-att) at (3, -0*2)
  coordinate (f1_1) at (-4, 0.25*2*4 + 1)
  coordinate (f1_2) at (4, -0.25*2*4 + 1)
  coordinate (f1_3) at (1, 1/0.25/2)
  ;
        \draw[important line, color=black] (f1_1) -- (f1_2);
        \draw[->, thick, color=black] (0, 1) -- ($(f1_3) + (0, 1)$) node [above] {{\footnotesize $\Rb(\hha | \Dha) = \alpha$}};
        \filldraw [black, color=black] (alpha-p) circle (2pt) 
        node[above left, yshift=-0.5ex, blue] {{\scriptsize $\alpha^+$}}
        node[below left, yshift=0.5ex, red] {{\scriptsize $\alpha^-$}}
        ;
        \filldraw [black, color=blue] (r-alpha-p) circle (2pt) 
        ;
        \filldraw [black, color=black] (alpha-n) circle (2pt) 
        node[above right, yshift=-0.5ex, red] {{\scriptsize $\alpha^-$}}
        node[below right, yshift=0.5ex, blue] {{\scriptsize $\alpha^+$}}
        ;
        \filldraw [black, color=red] (r-alpha-n) circle (2pt) 
        ;
\end{tikzpicture}
\caption{{\scriptsize$\frac{1}{3}N^+ \leq \alpha^+ < \frac{1}{2} N^+$ and \\ $\frac{1}{3} N^- \leq \alpha^- < \frac{1}{2} N^-$}}
\label{fig:maj-equal-case4}
\end{subfigure}
\caption{Examples of four different $\Dha$ to prove the tightness of the bound in \Thm \ref{thm:robustness_maj}.}
\label{fig:maj_equal_examples_attack}
\end{figure}

Furthermore, the above is a tight bound, which means
there exists $\Dh$, $\Dha$, and $\hha$ by which the equality holds for all $N$ and $\alpha$ if $h$ is linear, $\alpha^+ < \frac{1}{2} N^+$, and $\alpha^- < \frac{1}{2} N^-$. 

The examples of $\Dh$, $h^*$, and $\hha$ are illustrated on \Fig \ref{fig:maj_equal_example}.
\Fig \ref{fig:equal_example_hbar} shows $\Dh$ and the corresponding optimal classifier $h^*$. 
\Fig \ref{fig:equal_example_hhat} shows $\Dh$ and the projected optimal classifier $\hha$, trained over $\Dha$. 
This suggests 
\eqas{
V(P_{\Ms, \ell_{01}} | \Dh, \alpha) = 2 \max\left( \!  \frac{\alpha^+}{N^+}, \frac{\alpha^-}{N^-} \right),
}
which will be shown in the following.
 
The optimal classifier, $\hha$, in \Fig \ref{fig:equal_example_hhat} is obtained by partitioning the possible combination of $N$ and $\alpha$ into four cases. 

First, assume 
$\alpha^+ < \frac{1}{3} N^+$ and $\alpha^- < \frac{1}{3} N^-$.
\Fig \ref{fig:maj-equal-case1} shows one example for $\Dha$.
There exist 30 different linear classifiers which can classify six points. The ones violate the majority constraint and the others have empirical risks among $\alpha + \alpha^+$, $\alpha + \alpha^-$, $2\alpha$, $\alpha$, and $N^+ + N^- - 2\alpha$. 
Therefore, $\hha$ such that $\Rb(\hha | \Dha) = \alpha$ is optimal since 
\eqas{
\Rb(\hha | \Dha) 
&= \alpha^+ + \alpha^- \\
&< N^+ - 2\alpha^+ + N^- - 2\alpha^- \\
&= N^+ + N^- - 2\alpha.
}

Second, assume
$\frac{1}{3} N^+ \leq \alpha^+ < \frac{1}{2} N^+$ and $\alpha^- < \frac{1}{3} N^-$.
\Fig \ref{fig:maj-equal-case2} shows one example for $\Dha$.
There exist 20 different linear classifiers which can classify five points. The ones violate the majority constraint and the others have empirical risks among $\alpha + \alpha^+$, $\alpha + \alpha^-$, $\alpha$, and $2\alpha^+$. 
Therefore, $\hha$ such that $\Rb(\hha | \Dha) = \alpha$ is optimal since 
\begin{align*}
  \Rb(h | \Dhap) 
  &= 2\alpha^+ \geq \frac{2}{3} N^+ > \frac{1}{2} N^+,
\end{align*}
which suggests some classifier, $h$, such that $\Rb(h | \Dhap) = 2\alpha^+$, violates the majority constraint.  

Third, assume
$\alpha^+ < \frac{1}{3} N^+$ and $\frac{1}{3} N^- \leq \alpha^- < \frac{1}{2} N^-$.
\Fig \ref{fig:maj-equal-case3} shows one example for $\Dha$.
There exist 20 different linear classifiers which can classify five points. The ones violate the majority constraint and the others have empirical risks among $\alpha + \alpha^+$, $\alpha + \alpha^-$, $\alpha$, and $2\alpha^-$. 
Therefore, $\hha$ such that $\Rb(\hha | \Dha) = \alpha$ is optimal since 
\begin{align*}
  \Rb(h | \Dhan) 
  &= 2\alpha^- \geq \frac{2}{3} N^- > \frac{1}{2} N^-,
\end{align*}
which suggests some classifier, $h$, such that $\Rb(h | \Dhan) = 2\alpha^-$, violates the majority constraint.  

Finally, assume
$\frac{1}{3} N^+ \leq \alpha^+ < \frac{1}{2} N^+$ and $\frac{1}{3} N^- \leq \alpha^- < \frac{1}{2} N^-$.
\Fig \ref{fig:maj-equal-case4} shows one example for $\Dha$.
There exist 20 different linear classifiers which can classify five points. The ones violate the majority constraint and the others have empirical risks among $\alpha + \alpha^+$, $\alpha + \alpha^-$, $\alpha$, $2\alpha^+$, and $2\alpha^-$. 
Therefore, $\hha$ such that $\Rb(\hha | \Dha) = \alpha$ is optimal since 
\begin{align*}
  \Rb(h | \Dhap) 
  &= 2\alpha^+ \geq \frac{2}{3} N^+ > \frac{1}{2} N^+ \\
  \Rb(h | \Dhan)  
  &= 2\alpha^- \geq \frac{2}{3} N^- > \frac{1}{2} N^-,
\end{align*}
which suggests some classifiers, $h$, such that $\Rb(h | \Dhap) = 2\alpha^+$ or $\Rb(h | \Dhan) = 2\alpha^-$, violate the majority constraint.

For all four different cases on $N$ and $\alpha$, there always exists $\hha$, such that $\Rb(\hha | \Dh) = 2\alpha$.
Thus, there exists $\Dh$ and $\Dha$ by which the equality holds for all $N$ and $\alpha$. 
\end{proof}
}

\section{Proof on Mixed-Integer Linear Program for 0-1 Linear Classification} \label{sec:milp_proof}
In this section, we derive Mixed-Integer Linear Program (MILP) for 0-1 linear classification algorithm from the standard 0-1 linear classification problem.

\begin{proof}
The 0-1 linear classification problem is minimizing the misclassification error of a classifier over training data, or equivalently the sum of a 0-1 loss of a classifier over each training data pairs, according to 
\eqas{
	\min_{\h} \sum_{i} \mathbbm{1}\{ y_{i} \neq \sign\(\h^{\top} \x_{i} \)\}.
}

We introduce new binary variable $z_{i} \in \{0, 1\}$ such that $z_{i} = \mathbbm{1}\{ y_{i} \h^{\top} \x_{i} \leq 0 \}$.
Since $ \mathbbm{1}\{ y_{i} \neq \sign\(\h^{\top} \x_{i} \)\} =  \mathbbm{1}\{ y_{i} \h^{\top} \x_{i} \leq 0 \}$ assuming $\sign(0) = 0$, $\sign(x) = 1$ if $x > 0$, and $\sign(x) = -1$ if $x < 0$,
the equivalent problem (by introducing equality constraints) is as follows:
\eqas{
	\min_{\h, \z} &~ \sum_{i} z_{i} \\
	\text{s.t.} &~ \forall i,~ z_{i} = \mathbbm{1}\{ y_{i} \h^{\top} \x_{i} \leq 0 \}.
}

Next, we introduce new variables $e_{i} \in \realnum$ to convert the indicator function into a real-valued function. The equivalent problem is as follows:
\eqas{
	\min_{\h, \e, \z} &~ \sum_{i} z_{i} \\
	\text{s.t.} &~\forall i,~ e_{i} = \mathbbm{1}\{ y_{i} \h^{\top} \x_{i} \leq 0 \},~ -z_{i} \leq e_{i} \leq z_{i}.
}
This problem is equivalent to the previous one since the optimal values $z_{i}^{*}$ are identical.
Specifically, 
if there are $i$ such that $y_{i} \h^{\top} \x_{i} \leq 0$ for some $\h$, then $z_{i}^{*} = 1$ in the previous problem if and only if $e_{i} = 1$ and $z_{i}^{*} = 1$ in this problem. 
Likewise, 
if there are $i$ such that $y_{i} \h^{\top} \x_{i} > 0$ for some $\h$, then $z_{i}^{*} = 0$ in the previous problem if and only if $e_{i} = 0$ and $z_{i}^{*} = 0$ in this problem. 
Thus, $z^{*}_{i} = \bar{z}_{i}$ for all $i$.

To remove the indicator function, we introduce new slack variable $c_{i} \in \realnum$. Also, let $| y_{i} \h^{\top} \x_{i}| \leq \delta$, assuming $\delta \in \realnum$ is sufficiently large. The equivalent problem is as follows:
\eqas{
	\min_{\h, \e, \z, \mathbf{c}} &~ \sum_{i} z_{i} \\ 
	\text{s.t.} &~ \forall i,~ e_{i} = c_{i} - y_{i} \h^{\top} \x_{i}, \\
	&~ - (\delta + \epsilon) z_{i} \leq e_{i} \leq (\delta + \epsilon) z_{i}, ~c_{i} > 0,
}
where $\epsilon > 0$ is a sufficiently small value.
To prove the equivalence, check whether the optimal solution $z_{i}^{*}$ of the previous problem and this problem are same.
Let $z_{i}^{*}$ and $\bar{z}_{i}$ be the optimal solution of the previous and this problem, respectively. 
If there are $i$ such that $y_{i} \h^{\top} \x_{i} \leq 0$ for some $\h$, then $e_{i} = 1$ and $z_{i}^{*} = 1$ in the previous problem, and $c_{i} = \epsilon$, $e_{i} > \epsilon$, and $\bar{z}_{i} = 1$ are optimal solutions in this problem, resulting $z_{i}^{*} = \bar{z}_{i}$. 
If there are $i$ such that $y_{i} \h^{\top} \x_{i} > 0$ for some $\h$, $e_{i} = 1$ and $z_{i}^{*} = 1$ in the previous problem, and $c_{i} = y_{i} \h^{\top} \x_{i}$, $e_{i} = 0$, and $\bar{z}_{i} = 0$ are optimal solutions in this problem, resulting $z_{i}^{*} = \bar{z}_{i}$. Thus, $z^{*}_{i} = \bar{z}_{i}$ for all $i$.

To reduce the number of optimization parameters, we consider the following problem:
\eqas{
	\min_{\h, \e, \z, c} &~ \sum_{i} z_{i} \\
	\text{s.t.} &~ \forall i,~ e_{i} \geq c - y_{i} \h^{\top} \x_{i} , \\
	&~ - (\delta + \epsilon) z_{i} \leq e_{i} \leq (\delta + \epsilon) z_{i}, ~c > 0.
}
The previous problem and this problem are equivalent since the optimal solution on $z_{i}$ for both problems are same.  
Specifically, 
let $z_{i}^{*}$ and $\bar{z}_{i}$ are the optimal solution of the previous and this problem, respectively.
First, assume $z_{i}^{*} = 1$. This implies $e_{i} \neq 0$ and $y_{i} \h^{\top} \x_{i} \leq 0$. Since, in this problem, $y_{i} \h^{\top} \x_{i} \leq 0$ implies $e_{i} > 0$, $\bar{z}_{i} = 1$. 
Likewise, $\bar{z}_{i} = 1$ implies $z_{i}^{*} = 1$.
Next, assume $z_{i}^{*} = 0$. This implies $e_{i} = 0$ and $y_{i} \h^{\top} \x_{i} > 0$. In this problem, $y_{i} \h^{\top} \x_{i} > 0$ implies $e_{i} = 0$ or $e_{i} = 1$. Thus, $\bar{z}_{i} = 0$. 
Likewise, $\bar{z}_{i} = 0$ implies $z_{i}^{*} = 0$.
Therefore, $\bar{z}_{i} = z_{i}^{*}$.

Finally, if each constraints are divided by $c$, then we have the following MILP.
\eqas{
	\min_{\h, \e, \z, c} &~ \sum_{i} z_{i} \\
	\text{s.t.} &~ \forall i,~ \frac{e_{i}}{c} \geq 1 - y_{i} \frac{\h^{\top}}{c} \x_{i} , \\
	&~ - \frac{\delta + \epsilon}{c} z_{i} \leq \frac{e_{i}}{c} \leq \frac{\delta + \epsilon}{c} z_{i},
}
and if let $e'_{i} = \frac{e_{i}}{c}$, $\h' = \frac{\h}{c}$, and $\delta' = \frac{\delta + \epsilon}{c}$, the final MILP is obtained. 
\end{proof}


\end{document}